\documentclass[accepted]{uai2026} 
                        
\usepackage[british]{babel}
\usepackage{comment}
\usepackage{tabularx}

\usepackage{amsthm}

\newtheorem{theorem}{Theorem}

\newtheorem{lemma}{Lemma}
\newtheorem{corollary}{Corollary}
\newtheorem{definition}{Definition}
\newtheorem{assumption}{Assumption}
\newtheorem{remark}{Remark}

\usepackage{natbib} 
\usepackage{mathtools} 
\usepackage{booktabs} 
\usepackage{tikz} 

\usepackage{amssymb}

\providecommand{\E}{\mathbb{E}}
\providecommand{\KL}{\mathrm{KL}}

\providecommand{\cX}{\mathcal{X}}
\providecommand{\cY}{\mathcal{Y}}
\providecommand{\piref}{\pi_{\mathrm{ref}}}
\providecommand{\rholabel}{\rho_{\mathrm{label}}}
\providecommand{\Drm}{D_{\mathrm{train}}}
\providecommand{\Dtheta}{D_{\theta}}
\providecommand{\chisq}{\chi^2}
\providecommand{\Ccov}{\mathcal{C}}
\providecommand{\abs}[1]{\lvert #1\rvert}
\providecommand{\clip}{\operatorname{clip}}
\providecommand{\Cprompt}{\mathcal{C}_{\mathrm{prompt}}}
\providecommand{\Cpol}{\mathcal{C}_{\mathrm{pol}}}
\providecommand{\tr}{\operatorname{tr}}
\providecommand{\logdet}{\log\!\det}

\newcommand{\fh}[1]{{\color{purple}{[FH: #1]}}}

\title{Generalisation of RLHF under Reward Shift and Clipped KL Regularisation}

\author[1]{Kenton Tang}
\author[2]{Yuzhu Chen}
\author[1]{Fengxiang He}
\affil[1]{%
    University of Edinburgh
}
\affil[2]{%
    University of Science and Technology of China
}

\begin{document}
\maketitle
\begin{abstract}
Alignment and adaptation in large language models heavily rely on reinforcement learning from human feedback (RLHF); yet, theoretical understanding of its generalisability remains premature, especially when the learned reward could shift, and the KL control is estimated and clipped. To address this issue, we develop generalisation theory for RLHF that explicitly accounts for (1) \emph{reward shift}: reward models are trained on preference data from earlier or mixed behaviour policies while RLHF optimises the current policy on its own rollouts; and (2) \emph{clipped KL regularisation}: the KL regulariser is estimated from sampled log-probability ratios and then clipped for stabilisation, resulting in an error to RLHF. We present generalisation bounds for RLHF, suggesting that the generalisation error stems from a sampling error from prompts and rollouts, a reward shift error, and a KL clipping error. We also discuss special cases of (1) initialising RLHF parameters with a uniform prior over a finite space, and (2) training RLHF by stochastic gradient descent, as an Ornstein-Uhlenbeck process. The theory yields practical implications in (1) optimal KL clipping threshold, and (2) budget allocation in prompts, rollouts, and preference data.
\end{abstract}


\section{Introduction}
\label{sec:intro}

Reinforcement learning from human feedback (RLHF) 
has become a central method for steering large language models (LLMs) towards better reflecting human preferences \citep{christiano2017deep,stiennon2020learning}, task requirements \citep{ouyang2022training,chung2022scaling}, safety constraints \citep{bai2022training,bai2022constitutional}, amongst many others. 
Despite the empirical success of RLHF, the theoretical understanding of its generalisability remains largely absent. 

To address this issue, this paper presents generalisation bounds for RLHF. To note, we analyse the post-trained policy in deployment, rather than studying online reinforcement learning during interactions with the environment.

A typical RLHF algorithm consists of two coupled components: (1) a reward model trained from preference data and serving as a proxy for human judgment, and (2) a policy model optimised to maximise the reward model. Most RLHF algorithms additionally employ Kullback-Leibler (KL) regularisation to keep the policy close to a reference model, typically, a supervised fine-tuned (SFT) model \citep{ziegler2019fine}, for improving stability and limiting distribution shift {\citep{schulman2015trpo}}. 
These induce two major challenges that significantly complicate analysis, as follows.

\emph{Reward shift.} The reward model is usually trained on preference data collected from an \emph{earlier} behaviour policy or a \emph{mixture} of policies {\citep{christiano2017deep}}. 
However, the policy model is evaluated and optimised on rollouts drawn from the \emph{current} distribution of responses. As the policy improves or drifts, it can move into regions where the reward model is less reliable, creating a feedback loop in which reward-model error is amplified in optimisation \citep{gao2022scaling}. 
This calls for the potential RLHF generalisation theory to account for \emph{reward shift} between the data used to train the reward model and 
the rollout distribution induced by the current policy.

\emph{Clipped KL regularisation.} 
KL regularisation is usually 
assumed to be computed as a population expectation in theoretical treatments {\citep{schulman2015trpo}}. 
In practice, however, the KL control is computed from sampled sequences through log probability ratios; empirical analyses have shown that the choice of estimator and implementation details can materially affect optimisation stability \citep{shah2025comedy}. A common stabilisation is to clip the per-sample log ratio, 
in order to control rare trajectories whose likelihood ratios are extreme, echoing clipping used in PPO \citep{schulman2017ppo,lambert2025rlhfbook}. This clipped KL regularisation further introduces an error.

Motivated by these, we develop generalisation theory for RLHF that explicitly accounts for both: the reward is \emph{learned} and \emph{shifting}, rather than given and fixed; and the KL regularisation is \emph{estimated} and \emph{clipped}, rather than an exact population quantity. Based on a change-of-measure decomposition and employing PAC-Bayes tools \citep{mcallester1999modelavg,seeger2002pac,catoni2007pac}, our analysis 
yields high-probability generalisation bounds for the learned, data-dependent policy that decompose the generalisation error into three distinct, interpretable sources:
(1) a \emph{sampling error}, 
induced by the two-stage sampling of observing finitely many prompts and estimating expectations from limited Monte Carlo rollouts,
(2) a \emph{reward shift error}, capturing the gap between the learned reward and the (implicit) target reward, and the additional error induced when the policy-driven rollout distribution differs from the reward model's training distribution,
(3) a \emph{KL clipping error}, characterising the deviation from the clipped KL regulariser. 

{A good theory has practical implications. Our theory suggests:
(1) \emph{optimal KL clipping threshold}:
the theory indicates that the KL log-ratio clipping threshold $\tau$ controls the bias-variance trade-off, since clipping reduces sampling noise while introducing an objective mismatch that does not vanish asymptotically. Our theory further provides advice on how to strike a good balance;
(2) \emph{budget allocation across prompts, rollouts, and preference labels}:
our generalisation bounds separate the impacts of prompts, rollouts per prompt, and preference labels, thereby guiding budget allocation across prompts and rollouts, and preference data collection.
}

\section{Related Work}
\label{sec:related}

\paragraph{Optimisation theory of RLHF}
Efforts have been made in theoretically studying RLHF from an optimisation perspective. 
\citet{zhu2023principled} analyse RLHF based on pair-wise and list-wise comparisons, and characterise how the reward model error can induce suboptimal policies, motivating conservative strategies under coverage assumptions.
Similarly, \citet{zhan2023provable} provide finite-sample guarantees for offline RLHF that depend on a concentrability coefficient quantifying the coverage of the target policy by the offline data.
\citet{xiong24a} establish finite-sample guarantees for KL-regularised RLHF, in the offline, online, and hybrid regimes.

\paragraph{Reward shift}
The literature has seen empirical studies on the impact of reward shift.
\citet{gleave2022uncertainty} empirically study uncertainty estimation for reward models, highlighting that reward models can be unreliable out of distribution.
{\citet{gao2022scaling} empirically characterise reward model over-optimisation by measuring how the proxy-oracle gap grows when a policy is optimised against a learned proxy reward and evaluated under a stronger oracle reward.}
In addition, RewardBench provides a complementary evaluation resource for quantifying reward model behaviour on challenging and out-of-distribution comparisons \citep{lambert2024rewardbench}.

\paragraph{Clipped KL regularisation}
As an empirical work, \citet{shah2025comedy} provide an extensive analysis showing that several commonly used estimators for KL regularisation can produce biased gradients, which can affect optimisation and stability.
\citet{liu2025rethinkingkl} analyse KL regularisation implementations in RLHF and characterise when common choices are principled or biased, including off-policy bias that arises when importance weighting is neglected.

\paragraph{Concurrent paper}
A concurrent work, released on 23 Jan 2026, provides interesting results on the generalisation of RLHF, 
under linear reward model assumption, through the algorithmic stability framework \citep{li2026generalization}.
Our work is more general, 
formulated for RLHF pipelines beyond linear reward; 
instead, the reward in this paper is learned from preference data and shifts with policy updates. Moreover, our paper allows the KL control to be estimated from sampled log ratios and clipped for stabilisation, 
{while \citet{li2026generalization} formulate the KL penalty as an exact conditional KL divergence term in the objective, without sample-based KL estimation or clipping.}
\section{Preliminaries}
\label{sec:pre}

\paragraph{RLHF} 
Given a prompt $x\in\cX$, a policy $\pi$ specifies a conditional distribution $\pi(\cdot\mid x)$ over responses $y\in\cY$.
We denote the post-trained policy by $\pi_\theta$ with parameter $\theta\in\Theta$, and denote $\piref$ as a fixed reference policy.
Evaluation uses prompts drawn from a distribution $\rho$, while preference data for reward modelling may come from a different prompt distribution $\rholabel$ because of prompt shift.

Suppose the target reward is $r^\star:\cX\times\cY\to[0,1]$.
A reward model is a proxy $\hat r_\phi:\cX\times\cY\to[0,1]$, with parameter $\phi\in\Phi$; the pointwise error is $e_\phi(x,y)=\hat r_\phi(x,y)-r^\star(x,y)$.
{Training the reward model uses a data collection distribution, defined as $D_{\mathrm{train}}(x,y) = \rholabel(x)\pi_{\mathrm{ref}}(y\mid x)$, where we also use $\piref$ as the behaviour policy for reward-data collection. In practice, this policy can be a mixture, and we write
$\pi_{\mathrm{ref}}(\cdot\mid x) := \sum_{m=1}^{M} c_{m}\pi^{(m)}(\cdot\mid x)$, reflecting the standard practice of collecting preference rankings over diverse behaviour policy mixtures.}
Moreover, the policy-induced distribution is defined as $D_\theta(x,y)=\rho(x)\,\pi_\theta(y\mid x)$.

The reward model is evaluated on the training distribution by the mean-squared error $L_{\mathrm{train}}^{(2)}(\phi)$, defined by
\begin{equation}
\label{eq:pre:l2train}
\begin{aligned}
\E_{(X,Y)\sim D_{\mathrm{train}}}\!\left[\bigl(\hat r_\phi(X,Y)-r^\star(X,Y)\bigr)^2\right].
\end{aligned}
\end{equation}


This quantity is an oracle-risk term defined with respect to $r^\star$, and is typically not directly observable from pairwise preference labels in practice.

\paragraph{Clipped KL regularisation}
Let $\beta>0$ denote the regularisation strength, and \(\ell_\theta(x,y)=\log \pi_\theta(y\mid x)-\log \piref(y\mid x)\) denote the exact log ratio. This log ratio is the per-sample quantity that appears when the KL control 
is implemented from sampled rollouts, which refer to the response sequence generated by sampling sequentially from the policy conditioned on the prompt. Its conditional expectation strictly recovers the standard reference KL divergence (Definition~\ref{def:app:kl}, Appendix~\ref{app:tools}) within the population objective.
In particular, for every prompt $x$, we have
\(
\E_{Y\sim\pi_\theta(\cdot\mid x)}\!\left[\ell_\theta(x,Y)\right]
=
\KL\!\left(\pi_\theta(\cdot\mid x)\,\middle\|\,\piref(\cdot\mid x)\right)
\).

In post-training, $\ell_\theta(x,y)$ can have a large magnitude on rare samples, which can significantly increase the variance of empirical KL-related quantities and destabilise optimisation unless additional control is imposed \citep{shah2025comedy,lambert2025rlhfbook}.
To stabilise KL control while keeping the target objective explicit, a popular approach is clipping with threshold $\tau>0$: $\ell_\theta^\tau(x,y)=\clip(\ell_\theta(x,y),-\tau,\tau)$ \citep{schulman2017ppo,lambert2025rlhfbook}. 
Correspondingly, the clipped population objective $J^{r,\tau}(\theta)$ is given by
\begin{equation}
\label{eq:pre:J_pop}
\begin{aligned} 
\E_{X\sim\rho}\E_{Y\sim\pi_\theta(\cdot\mid X)}
\!\left[
r(X,Y)
-\beta\,\ell_\theta^\tau(X,Y)
\right].
\end{aligned}
\end{equation}

\paragraph{Generalisation} 
The population objective is
\[
J^{r}(\theta)
=
\E_{X\sim\rho}\E_{Y\sim\pi_\theta(\cdot\mid X)}
\!\left[
r(X,Y)
-\beta\,\ell_\theta(X,Y)
\right],
\]
where $\ell_\theta(x,y)=\log\pi_\theta(y\mid x)-\log\piref(y\mid x)$ is the exact log ratio.
Evaluating a policy relies on finite prompts and rollouts.
Let $x_1,\dots,x_n$ be independent prompts drawn from $\rho$.
For each $x_i$, let $y_{i,1},\dots,y_{i,K}$ denote $K$ independent rollouts drawn from $\pi_\theta(\cdot\mid x_i)$.
The resulting empirical objective is
\[
\widehat J_{n,K}^{r,\tau}(\theta)
=
\frac{1}{n}\sum_{i=1}^n
\frac{1}{K}\sum_{j=1}^K
\left[
r(x_i,y_{i,j})
-\beta\,\ell_\theta^\tau(x_i,y_{i,j})
\right].
\] 
{For brevity, $\widehat J_{n,K}^{\phi,\tau}(\theta)$ denotes $\widehat J_{n,K}^{\hat r_\phi,\tau}(\theta)$; 
$J^\star(\theta)$ denotes $J^{r^\star}(\theta)$;
$J^\phi(\theta)$ denotes $J^{\hat r_\phi}(\theta)$;
$J^{\phi,\tau}(\theta)$ denotes $J^{\hat r_\phi,\tau}(\theta)$.} 

The generalisability can be quantified by the generalisation error, defined to be the discrepancy between the empirical and population objectives: $\left|
\widehat J_{n,K}^{\phi,\tau}(\theta)-J^\star(\theta)
\right|$.

\section{Main Results}
\label{sec:main}

This section presents our theoretical results. 

\subsection{Decomposing Generalisation Error}
\label{subsec:main:decomp}


{We decompose the generalisation error into three components: (1) a \emph{sampling error}, induced by prompts and rollouts, which is present even if the following two terms do not exist; (2) a \emph{reward shift error}, induced by reward shift under the same \emph{exact} KL regulariser; and (3) a \emph{KL clipping error}, induced by the objective mismatch induced by estimating and clipping the KL penalty.}

\begin{lemma}[Generalisation error decomposition]
\label{prop:main:decomp}
Given parameters $\theta\in\Theta$ and $\phi\in\Phi$, and clipping threshold $\tau>0$.
Then, we have the following decomposition,
\begin{equation}
\label{eq:main:decomp}
\begin{aligned}
&\left|
\widehat J_{n,K}^{\phi,\tau}(\theta)-J^\star(\theta)
\right|\\
\le&
\underbrace{\left|
\widehat J_{n,K}^{\phi,\tau}(\theta)-J^{\phi,\tau}(\theta)
\right|}_{\text{sampling error}}
+
\underbrace{\left|
J^\phi(\theta)-J^\star(\theta)
\right|}_{\text{reward shift error}}\\
&+
\underbrace{\left|
J^{\phi,\tau}(\theta)-J^\phi(\theta)
\right|}_{\text{KL clipping error}}
\end{aligned}
\end{equation}
\end{lemma}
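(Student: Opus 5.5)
The plan is a telescoping argument followed by the triangle inequality. I insert the two intermediate population quantities $J^{\phi,\tau}(\theta)$ (learned reward, clipped KL) and $J^{\phi}(\theta)$ (learned reward, exact KL) between the empirical objective and the target objective. This gives the exact identity
\[
\widehat J_{n,K}^{\phi,\tau}(\theta)-J^\star(\theta)
=\bigl(\widehat J_{n,K}^{\phi,\tau}(\theta)-J^{\phi,\tau}(\theta)\bigr)
+\bigl(J^{\phi,\tau}(\theta)-J^{\phi}(\theta)\bigr)
+\bigl(J^{\phi}(\theta)-J^\star(\theta)\bigr).
\]
The intermediate terms cancel pairwise, so no assumption is needed beyond all quantities being well defined. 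Taking absolute values and applying the triangle inequality for real numbers, $|a+b+c|\le|a|+|b|+|c|$, yields \eqref{eq:main:decomp} with the three terms arranged as in the statement.

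The only step needing care is finiteness, since the identity is meaningless if any term is $\pm\infty$. The empirical objective $\widehat J_{n,K}^{\phi,\tau}(\theta)$ is a finite average of terms bounded in $[-\beta\tau,\,1+\beta\tau]$, because $\hat r_\phi\in[0,1]$ and $|\ell_\theta^\tau|\le\tau$. The same bounds make $J^{\phi,\tau}(\theta)$ finite.

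For $J^\phi(\theta)$ and $J^\star(\theta)$, the reward part is bounded in $[0,1]$. The KL part equals $\beta\,\E_{X\sim\rho}\KL(\pi_\theta(\cdot\mid X)\,\|\,\piref(\cdot\mid X))$ by the conditional identity stated in the preliminaries, and this is nonnegative. I will assume the reference KL is finite, which is implicit in the definition of $J^r$. Otherwise, if it is $+\infty$, both $J^\phi$ and $J^\star$ equal $-\infty$ and the bound is trivially $+\infty$ on the right-hand side, so it still holds in the extended sense.

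I also note that the reward shift term simplifies. Because $J^\phi$ and $J^\star$ share the same exact KL term, it cancels and $J^\phi(\theta)-J^\star(\theta)=\E_{(X,Y)\sim\Dtheta}[e_\phi(X,Y)]$, which justifies the name. This remark is not needed for the inequality itself, and I expect no genuine obstacle beyond the finiteness bookkeeping.
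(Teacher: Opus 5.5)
Your proposal is correct and is essentially the paper's proof: add and subtract $J^{\phi,\tau}(\theta)$ and $J^\phi(\theta)$, then apply the triangle inequality. The finiteness bookkeeping is extra care the paper omits. One caveat: in your infinite-KL case the reward-shift term is $-\infty-(-\infty)$, which is undefined rather than trivially handled. It is cleaner to simply assume the expected KL is finite, which Assumption~\ref{assump:main:logratio} later guarantees.
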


\subsection{Sampling Error Bound}
\label{subsec:main:sampling}

We first study the sampling error
$
\left|
\widehat J_{n,K}^{\phi,\tau}(\theta)-J^{\phi,\tau}(\theta)
\right|
$. 
We define $\widehat J_{n,\infty}^{r,\tau}(\theta)$ as the conditional expectation of $\widehat J_{n,K}^{r,\tau}(\theta)$, given the prompts $x_{1:n}$.
Equivalently, it is the value one would obtain by averaging infinitely many rollouts per prompt while keeping the same finite set of prompts.
The estimator $\widehat J_{n,K}^{\phi,\tau}(\theta)$ thus has a two-stage structure: 
(1) prompts are sampled from $\rho$, leading to a deviation $\left|
\widehat J_{n,\infty}^{r,\tau}(\theta)-J^{r,\tau}(\theta)
\right|$, and 
(2) rollouts are sampled from $\pi_\theta(\cdot\mid x)$, conditional on each prompt, inducing a deviation $\left|
\widehat J_{n,K}^{r,\tau}(\theta)-\widehat J_{n,\infty}^{r,\tau}(\theta)
\right|$. 

We first bound the rollout sampling error as follows.

\begin{lemma}[Rollout sampling error bound]
\label{lem:main:rollout}
Given parameter $\theta\in\Theta$, reward $r:\cX\times\cY\to[0,1]$, clipping threshold $\tau>0$, and confidence level $\delta\in(0,1)$,
with probability at least $1-\delta$ over rollouts, conditional on prompts $x_{1:n}$, we have
\[
\left|
\widehat J_{n,K}^{r,\tau}(\theta)-\widehat J_{n,\infty}^{r,\tau}(\theta)
\right|
\le
\left(1+2\beta\tau\right)
\sqrt{\frac{\log(2/\delta)}{2nK}}.
\]
\end{lemma}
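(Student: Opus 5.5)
The plan is to apply Hoeffding's inequality to the $nK$ rollout summands, conditional on the prompts $x_{1:n}$.

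Fix $\theta$, $r$, $\tau$ and condition on $x_{1:n}$. Define, for $i\in\{1,\dots,n\}$ and $j\in\{1,\dots,K\}$,
\[
Z_{i,j} := r(x_i,y_{i,j})-\beta\,\ell_\theta^\tau(x_i,y_{i,j}).
\]
Conditional on the prompts, the rollouts $y_{i,j}\sim\pi_\theta(\cdot\mid x_i)$ are independent across both $i$ and $j$. Hence the $Z_{i,j}$ are independent, though not identically distributed across $i$.

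Each $Z_{i,j}$ is bounded. Since $r\in[0,1]$ and $\ell_\theta^\tau\in[-\tau,\tau]$ by construction of the clip, we have
\[
Z_{i,j}\in[-\beta\tau,\,1+\beta\tau],
\]
an interval of length $1+2\beta\tau$. This is the only place where clipping is used: without it the log ratio $\ell_\theta$ could be unbounded and Hoeffding would not apply.

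Next, identify the centring term. We have $\widehat J_{n,K}^{r,\tau}(\theta)=\frac{1}{nK}\sum_{i,j}Z_{i,j}$. By definition, $\widehat J_{n,\infty}^{r,\tau}(\theta)$ is its conditional expectation given $x_{1:n}$, so
\[
\widehat J_{n,\infty}^{r,\tau}(\theta)=\frac{1}{nK}\sum_{i,j}\E[Z_{i,j}\mid x_i].
\]
The deviation in the statement is therefore exactly the deviation of an average of $nK$ independent bounded variables from its mean.

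Hoeffding's inequality for independent variables with ranges $[a_{i,j},b_{i,j}]$ gives, for $t>0$,
\[
\Pr\bigl(\bigl|\widehat J_{n,K}^{r,\tau}(\theta)-\widehat J_{n,\infty}^{r,\tau}(\theta)\bigr|\ge t \,\big|\, x_{1:n}\bigr)\le 2\exp\!\left(-\frac{2(nK)^2t^2}{\sum_{i,j}(b_{i,j}-a_{i,j})^2}\right).
\]
Every range equals $1+2\beta\tau$, so the right-hand side is $2\exp\bigl(-2nKt^2/(1+2\beta\tau)^2\bigr)$. Setting this equal to $\delta$ and solving for $t$ gives
\[
t=(1+2\beta\tau)\sqrt{\frac{\log(2/\delta)}{2nK}},
\]
which is the claimed bound with conditional probability at least $1-\delta$.

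There is no serious obstacle here. The one point needing care is that the summands are not identically distributed across prompts. This is harmless because Hoeffding only requires independence and bounded ranges, and $\ell_\theta^\tau$ is a deterministic function of $(x,y)$ once $\theta$ is fixed.
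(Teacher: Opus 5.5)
Your proposal is correct and follows the paper's proof essentially step for step. Both define $Z_{i,j}=r(x_i,y_{i,j})-\beta\,\ell_\theta^\tau(x_i,y_{i,j})\in[-\beta\tau,1+\beta\tau]$, identify $\widehat J_{n,\infty}^{r,\tau}(\theta)$ as the conditional mean given $x_{1:n}$, and apply Hoeffding's inequality to the $nK$ conditionally independent terms; the only difference is that you derive the $\sqrt{\log(2/\delta)/(2nK)}$ radius from the tail form, whereas the paper quotes the already-inverted averaged version.
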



\paragraph{Proof sketch} Loss clipping ensures that $\ell_\theta^\tau(x,y)\in[-\tau,\tau]$ by construction, which is a standard stabilisation approach in reinforcement learning \citep{mnih2015dqn,schulman2017ppo}. Combining that the reward function satisfies $r(x,y)\in[0,1]$,
for each per-rollout contribution, 
$
r(x,y)-\beta\,\ell_\theta^\tau(x,y)
$
lies in an interval of length $1+2\beta\tau$. 
Given the prompts $x_{1:n}$, the rollouts are independent across both $i$ and $j$.
Applying Hoeffding's inequality ({Lemma~\ref{lem:app:hoeffding}}) to the average over the $nK$ rollout terms yields Lemma~\ref{lem:main:rollout}. 
Detailed proofs are in Appendix~\ref{app:proofs:stat}.


\begin{remark}
Lemma~\ref{lem:main:rollout} controls the Monte Carlo error from using finitely many rollouts per prompt.
The bound decays at {rate $O(nK)^{-1/2}$} as the number of rollouts per prompt $K$ increases (assuming $\beta$ and $\tau$ are independent of $n$ and $K$).
The factor $1+2\beta\tau$ comes from the range of the per-rollout contribution.
Clipping is the mechanism that makes this range finite without imposing any artificial uniform bound on the exact log ratio $\ell_\theta$.
\end{remark}

\begin{lemma}[Prompt sampling error bound]
\label{lem:main:prompt}
Under the same conditions of Lemma \ref{lem:main:rollout}, with probability at least $1-\delta$ over prompts $x_{1:n}$, we have
\[
\left|
\widehat J_{n,\infty}^{r,\tau}(\theta)-J^{r,\tau}(\theta)
\right|
\le
\left(1+2\beta\tau\right)
\sqrt{\frac{\log(2/\delta)}{2n}}.
\]
\end{lemma}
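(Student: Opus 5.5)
The plan is to write $\widehat J_{n,\infty}^{r,\tau}(\theta)$ as an average of $n$ i.i.d.\ bounded terms and then apply Hoeffding's inequality (Lemma~\ref{lem:app:hoeffding}), exactly as in the proof of Lemma~\ref{lem:main:rollout}.

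First I define, for each prompt $x$,
\[
g_\theta(x)=\E_{Y\sim\pi_\theta(\cdot\mid x)}\!\left[r(x,Y)-\beta\,\ell_\theta^\tau(x,Y)\right].
\]
By definition, $\widehat J_{n,\infty}^{r,\tau}(\theta)$ is the conditional expectation of $\widehat J_{n,K}^{r,\tau}(\theta)$ given $x_{1:n}$. Conditional on the prompts, the rollouts $y_{i,j}$ are drawn from $\pi_\theta(\cdot\mid x_i)$. Linearity of expectation therefore gives
\[
\widehat J_{n,\infty}^{r,\tau}(\theta)=\frac1n\sum_{i=1}^n g_\theta(x_i),
\]
which does not depend on $K$. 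Taking a further expectation over $X\sim\rho$ gives $\E_{X\sim\rho}[g_\theta(X)]=J^{r,\tau}(\theta)$, by the definition in \eqref{eq:pre:J_pop}.

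Next I check boundedness. Since $r\in[0,1]$ and $\ell_\theta^\tau\in[-\tau,\tau]$, each integrand $r(x,y)-\beta\ell_\theta^\tau(x,y)$ lies in $[-\beta\tau,\,1+\beta\tau]$. Averaging preserves this interval, so $g_\theta(x)\in[-\beta\tau,\,1+\beta\tau]$, an interval of length $1+2\beta\tau$. The clipped log ratio is bounded, so $g_\theta$ is finite and measurable and all the expectations above exist, even where the unclipped $\ell_\theta$ is not integrable.

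The prompts $x_1,\dots,x_n$ are i.i.d.\ from $\rho$, and $\theta$ is fixed (not chosen using these prompts). Hence the $g_\theta(x_i)$ are i.i.d.\ random variables with range $1+2\beta\tau$ and mean $J^{r,\tau}(\theta)$. The two-sided Hoeffding inequality gives
\[
\Pr\!\left(\left|\tfrac1n\textstyle\sum_i g_\theta(x_i)-J^{r,\tau}(\theta)\right|\ge t\right)\le 2\exp\!\left(-\tfrac{2nt^2}{(1+2\beta\tau)^2}\right).
\]
Setting the right-hand side equal to $\delta$ and solving for $t$ yields the stated bound.

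The only point needing care is the identification of $\widehat J_{n,\infty}^{r,\tau}(\theta)$ with a function of each prompt separately. This holds because the rollouts for prompt $x_i$ depend only on $x_i$, so the conditional expectation factorises across prompts. The rest is routine.
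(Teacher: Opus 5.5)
Your proposal is correct and follows the paper's proof essentially step for step. The paper likewise defines the prompt-level function $g_\theta^{r,\tau}(x)$, bounds it in $[-\beta\tau,1+\beta\tau]$, writes $\widehat J_{n,\infty}^{r,\tau}(\theta)$ as the i.i.d.\ average of $g_\theta^{r,\tau}(x_i)$ with mean $J^{r,\tau}(\theta)$, and applies Hoeffding's inequality with $N=n$.
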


\paragraph{Proof sketch} Treating $\widehat J_{n,\infty}^{\phi,\tau}(\theta)$ as a function of the sampled prompts only, it is an average of $n$ independent bounded terms, each term being the conditional expectation over rollouts for a fixed prompt.
Applying Hoeffding's inequality again yields Lemma~\ref{lem:main:prompt}. 
Detailed proofs are in Appendix~\ref{app:proofs:stat}.

\begin{remark}
Lemma~\ref{lem:main:prompt} suggests that the prompt sampling error decays at {rate $O(n^{-1/2})$}, and the corresponding bound does not depend on the number of rollouts per prompt $K$, similarly, assuming $\beta$ and $\tau$ are independent of $n$ and $K$.
It isolates the deviation induced purely by finite prompt sampling.
Even an arbitrarily accurate estimate of each conditional expectation over rollouts cannot compensate for having too few evaluation prompts, because the population objective is defined as an expectation over $\rho$.
\end{remark}

Combining the two lemmas leads to the following lemma on the sampling error.

\begin{lemma}[Sampling error bound]
\label{lem:main:stat}
Under the same conditions of Lemma \ref{lem:main:rollout}, with probability at least $1-\delta$ over both prompts and rollouts, the sampling error satisfies 
\begin{equation}
\label{eq:main:err_stat}
\begin{aligned}
&\left|
\widehat J_{n,K}^{r,\tau}(\theta)-J^{r,\tau}(\theta)
\right| \\
&\qquad\le
(1+2\beta\tau)
\left(
\sqrt{\frac{\log(4/\delta)}{2n}}
+
\sqrt{\frac{\log(4/\delta)}{2nK}}
\right).
\end{aligned}
\end{equation}
\end{lemma}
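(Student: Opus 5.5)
The plan is to combine Lemma~\ref{lem:main:rollout} and Lemma~\ref{lem:main:prompt} through the triangle inequality and a union bound, spending confidence $\delta/2$ on each. First, insert the intermediate quantity $\widehat J_{n,\infty}^{r,\tau}(\theta)$ and write
\[
\left|\widehat J_{n,K}^{r,\tau}(\theta)-J^{r,\tau}(\theta)\right|
\le
\left|\widehat J_{n,K}^{r,\tau}(\theta)-\widehat J_{n,\infty}^{r,\tau}(\theta)\right|
+
\left|\widehat J_{n,\infty}^{r,\tau}(\theta)-J^{r,\tau}(\theta)\right|.
\]
This splits the error into the rollout deviation and the prompt deviation.

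Next, define the failure events. Let $E_1$ be the event that the rollout deviation exceeds $(1+2\beta\tau)\sqrt{\log(4/\delta)/(2nK)}$. Let $E_2$ be the event that the prompt deviation exceeds $(1+2\beta\tau)\sqrt{\log(4/\delta)/(2n)}$. Applying Lemma~\ref{lem:main:prompt} with confidence $\delta/2$, so that $\log(2/(\delta/2))=\log(4/\delta)$, gives $\Pr(E_2)\le\delta/2$ over the prompts.

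For $E_1$, Lemma~\ref{lem:main:rollout} only controls the conditional probability given the prompts. Applying it with $\delta/2$ gives $\Pr(E_1\mid x_{1:n})\le\delta/2$ for every realisation of $x_{1:n}$. By the tower property, $\Pr(E_1)=\E[\Pr(E_1\mid x_{1:n})]\le\delta/2$ over the joint draw of prompts and rollouts.

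The one step that needs care is this passage from a conditional to a joint probability. It goes through because the conditional bound holds uniformly in $x_{1:n}$, since the Hoeffding constant does not depend on the prompts. It also requires that $\theta$ and $r$ be fixed independently of the sampled data, which is already assumed in Lemma~\ref{lem:main:rollout}. With both bounds in hand, a union bound gives $\Pr(E_1\cup E_2)\le\delta$. On the complement, the triangle inequality above yields exactly \eqref{eq:main:err_stat}.
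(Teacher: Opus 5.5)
Your proposal is correct and matches the paper's proof: insert $\widehat J_{n,\infty}^{r,\tau}(\theta)$, apply Lemma~\ref{lem:main:rollout} and Lemma~\ref{lem:main:prompt} each at confidence $\delta/2$ (so that $\log(2/(\delta/2))=\log(4/\delta)$), then take a union bound and use the triangle inequality. Your explicit tower-property step, which turns the conditional rollout guarantee into a joint one, fills in a detail the paper's proof leaves implicit when it applies the union bound directly.
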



\begin{remark}
In addition to the noise induced by prompts and rollouts, a penalty term carries the additional factor $2\beta\tau$ because the clipped log ratio ranges in $[-\tau,\tau]$. 
Consequently, increasing $\tau$ enlarges the range of each rollout penalty term, and the resulting concentration bound is looser.
\end{remark}

\subsection{Reward shift error bound}
\label{subsec:main:shift}

This subsection studies the reward shift error
$
\left|J^\phi(\theta)-J^\star(\theta)\right|
$.
To characterise the reward shift error in transfers from $D_{\mathrm{train}}$ to $D_\theta$, we use a $\chisq$ coverage coefficient, defined below, based on {$\chisq$ divergence (see Definition~\ref{def:app:chisq} in Appendix~\ref{app:tools})}. $\chisq$ coverage coefficient is standard in the literature of importance weighting and covariate shift analyses; see, e.g.,  \citet{sugiyama2007covariate,owen_chisq_notes}.

\begin{definition}[$\chisq$ coverage coefficient]
Suppose that $D_\theta$ is absolutely continuous with respect to $D_{\mathrm{train}}$.
The $\chisq$ coverage coefficient is defined to be
\begin{equation}
\label{eq:pre:ccov}
\Ccov(\theta)
:=
\sqrt{1+\chisq(D_\theta\|D_{\mathrm{train}})},
\end{equation}
where $\chisq(\cdot\|\cdot)$ is $\chisq$ divergence.
\end{definition} 

\begin{remark}
Intuitively, $\Ccov(\theta)$ measures how far the policy-induced distribution departs from the distribution used to train the reward model. It acts as an amplification factor when we upper bound the reward shift error. 
\end{remark}

Because $J^\phi(\theta)$ and $J^\star(\theta)$ share the same exact KL regulariser, the KL penalty cancels in the difference; consequently, only the reward model error remains.
Defining $e_\phi(x,y)=\hat r_\phi(x,y)-r^\star(x,y)$, we have
\(
J^\phi(\theta)-J^\star(\theta)
=
\E_{(X,Y)\sim D_\theta}\!\left[e_\phi(X,Y)\right]\),
so the problem is to control the reward model error under the deployment distribution $D_\theta$ using information available under the training distribution $D_{\mathrm{train}}$. 
{This step requires a coverage condition, stated below, when deriving the change-of-measure bound; it yields the same coefficient $\Ccov(\theta)$ defined in eq.~\eqref{eq:pre:ccov}.}

\begin{assumption}[Absolute continuity and finite coverage]
\label{assump:main:coverage}
The policy-induced distribution $D_\theta$ is absolutely continuous with respect to the reward model training distribution $D_{\mathrm{train}}$.
Moreover, the $\chisq$ divergence $\chisq(D_\theta\|D_{\mathrm{train}})$ is finite, so the coverage coefficient $\Ccov(\theta)$ in eq. \eqref{eq:pre:ccov} is finite.
\end{assumption}

\begin{remark}
Assumption~\ref{assump:main:coverage} is the standard condition that makes a change of measure from $D_\theta$ back to $D_{\mathrm{train}}$ legitimate \citep{sugiyama2007covariate,shimodaira2000covariate,precup2000eligibility}.
It ensures that the density ratio $\frac{\mathrm{d}D_\theta}{\mathrm{d}D_{\mathrm{train}}}$ exists and has a finite second moment, which is required for the Cauchy-Schwarz step in Lemma~\ref{lem:main:shift} \citep{owen_chisq_notes}.
The coefficient $\Ccov(\theta)$ plays the role of an amplification factor, which quantifies how strongly the reward model error can be magnified when the policy visits regions that are rare under the data used for reward modelling.
\end{remark}

Our theory also relies on the following mild assumption.

\begin{assumption}[Bounded training error]
\label{assump:main:rm}
The squared training error $L_{\mathrm{train}}^{(2)}(\phi)$ defined in eq. \eqref{eq:pre:l2train} is bounded.
\end{assumption}

\begin{remark}
This assumption does not assert that the reward model is accurate everywhere; instead, it provides a baseline level of accuracy on the distribution where preference supervision is available. The coverage coefficient explains how the baseline can degrade under deployment.
\end{remark}

We then prove the reward shift bound. 

\begin{lemma}[Reward shift error bound]
\label{lem:main:shift}
Under Assumptions~\ref{assump:main:coverage} and \ref{assump:main:rm}, we have
\[
\left|
J^\phi(\theta)-J^\star(\theta)
\right|
\le
\Ccov(\theta)\,
\sqrt{L_{\mathrm{train}}^{(2)}(\phi)}.
\]
\end{lemma}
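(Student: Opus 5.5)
The plan is to reduce the reward shift error to a single expectation of the reward model error under $\Dtheta$, and then move that expectation back to $\Drm$ by importance weighting and Cauchy--Schwarz.

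First, I would expand both $J^\phi(\theta)$ and $J^\star(\theta)$ as expectations over $D_\theta(x,y)=\rho(x)\pi_\theta(y\mid x)$. They share the same exact penalty term $-\beta\,\E_{D_\theta}[\ell_\theta]$. To subtract them legitimately I need both expectations to be well defined. The reward parts are bounded in $[0,1]$. For the KL part, I would either assume it is finite or note that it cancels identically whenever the objective is finite. This gives
\[
J^\phi(\theta)-J^\star(\theta)=\E_{(X,Y)\sim D_\theta}\bigl[e_\phi(X,Y)\bigr],
\]
as already indicated in the text preceding Assumption~\ref{assump:main:coverage}.

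Second, I would perform the change of measure. By Assumption~\ref{assump:main:coverage}, the Radon--Nikodym derivative $w=\mathrm{d}D_\theta/\mathrm{d}D_{\mathrm{train}}$ exists. Then $\E_{D_\theta}[e_\phi]=\E_{D_{\mathrm{train}}}[w\,e_\phi]$, which is valid since $e_\phi$ is bounded in $[-1,1]$. Applying Cauchy--Schwarz under $D_{\mathrm{train}}$ gives
\[
\bigl|\E_{D_{\mathrm{train}}}[w\,e_\phi]\bigr|\le\sqrt{\E_{D_{\mathrm{train}}}[w^2]}\,\sqrt{\E_{D_{\mathrm{train}}}[e_\phi^2]}.
\]
The second factor is exactly $\sqrt{L^{(2)}_{\mathrm{train}}(\phi)}$, which is finite by Assumption~\ref{assump:main:rm}.

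Third, I would identify $\E_{D_{\mathrm{train}}}[w^2]$ with $1+\chisq(D_\theta\|D_{\mathrm{train}})$. This follows from the definition $\chisq(P\|Q)=\E_Q[(w-1)^2]=\E_Q[w^2]-2\E_Q[w]+1$ together with $\E_Q[w]=1$. The first factor is therefore $\Ccov(\theta)$, which is finite by assumption, and this completes the bound.

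The main obstacle is the first step: justifying the cancellation of the KL terms. The concern is the possibility of an infinite KL, which would produce an $\infty-\infty$ expression. I would handle it by observing that absolute continuity and a finite $\chisq$ do not imply a finite KL relative to $\piref$ in general. I would therefore state explicitly that $J^\phi$ and $J^\star$ are taken as finite, which is implicit in the lemma being meaningful. The remaining steps are routine.
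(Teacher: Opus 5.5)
Your proof is correct and follows the paper's argument step for step: cancel the shared exact KL penalty to get $J^\phi(\theta)-J^\star(\theta)=\E_{(X,Y)\sim D_\theta}[e_\phi(X,Y)]$, reweight by $w=\mathrm{d}D_\theta/\mathrm{d}D_{\mathrm{train}}$, apply Cauchy--Schwarz, and use $\E_{D_{\mathrm{train}}}[w^2]=1+\chisq(D_\theta\|D_{\mathrm{train}})$. One correction to your side remark: the finiteness of the objectives does not need to be assumed separately. Because $D_{\mathrm{train}}=\rholabel\piref$ uses the same $\piref$ as the penalty, the KL chain rule and Jensen's inequality give $\E_{X\sim\rho}[\KL(\pi_\theta(\cdot\mid X)\|\piref(\cdot\mid X))]\le\KL(D_\theta\|D_{\mathrm{train}})\le\log(1+\chisq(D_\theta\|D_{\mathrm{train}}))=2\log\Ccov(\theta)<\infty$ under Assumption~\ref{assump:main:coverage}, so the cancellation is already legitimate there (a point the paper itself does not address).
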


\paragraph{Proof sketch} 

To relate
$
\left|
J^\phi(\theta)-J^\star(\theta)
\right|
$
to the training distribution, we rewrite the expectation under $D_\theta$ as an importance-weighted expectation under $D_{\mathrm{train}}$.
Assumption~\ref{assump:main:coverage} ensures that the required density ratio exists and has finite second moment.
Applying the $\chisq$ change-of-measure bound (Lemma~\ref{lem:app:chisq_com} in Appendix~\ref{app:tools}) yields a product of two square roots:
{the first factor under the square root is exactly $1+\chisq(D_\theta\|D_{\mathrm{train}})$, the second moment of the density ratio under $D_{\mathrm{train}}$, whose square root therefore produces $\Ccov(\theta)$; and the second factor under the square root is $L_{\mathrm{train}}^{(2)}(\phi)$ by definition, the second moment of the reward model error under $D_{\mathrm{train}}$.
This yields Lemma~\ref{lem:main:shift}.
Detailed proofs are in Appendix~\ref{app:proofs:shiftbias}.}

\begin{remark}
Lemma~\ref{lem:main:shift} characterises two ingredients that play different roles:
(1) the term $L_{\mathrm{train}}^{(2)}(\phi)$ measures reward model error only on the reward model training distribution $D_{\mathrm{train}}$; and
(2) the coefficient $\Ccov(\theta)$ measures how far $D_\theta$ moves away from that training distribution. It also quantifies how much training error can be amplified when moving to deployment.
\end{remark}

When prompt shifts and policy shifts are qualitatively different, it is useful to further factorise the coverage coefficient.
The next lemma 
interprets the source of shifts in practice. 

\begin{lemma}[Coverage factorisation]
\label{lem:main:cov_factor}
Suppose $\rho\ll\rholabel$ and $\pi_\theta(\cdot\mid x)\ll\piref(\cdot\mid x)$, for any prompt $x$ with $\rholabel(x)>0$.
Define
\[
\Cprompt
=
\left(
\E_{X\sim\rholabel}
\left[
\left(\frac{\rho(X)}{\rholabel(X)}\right)^2
\right]
\right)^{1/2},
\]
and define $\Cpol(\theta)$ by
\[
\begin{aligned}
\sup_{x\in\cX:\rholabel(x)>0}
\left(
\E_{Y\sim\piref(\cdot\mid x)}
\left[
\left(\frac{\pi_\theta(Y\mid x)}{\piref(Y\mid x)}\right)^2
\right]
\right)^{1/2}
\end{aligned}.
\]
If both $\Cprompt$ and $\Cpol(\theta)$ are bounded, we have $\Ccov(\theta)\le \Cprompt\,\Cpol(\theta)$.
\end{lemma}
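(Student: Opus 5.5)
The plan is to compute $1+\chisq(D_\theta\|\Drm)$ directly as the second moment of the density ratio under $\Drm$ and then factor that ratio. By definition of the $\chisq$ divergence (Definition~\ref{def:app:chisq}), whenever $D_\theta\ll\Drm$ we have
\[
1+\chisq(D_\theta\|\Drm)=\E_{(X,Y)\sim\Drm}\!\left[\left(\frac{\mathrm{d}D_\theta}{\mathrm{d}\Drm}(X,Y)\right)^2\right],
\]
so $\Ccov(\theta)^2$ equals this second moment.

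The first step is to verify absolute continuity and identify the density ratio. Since $D_\theta(x,y)=\rho(x)\pi_\theta(y\mid x)$ and $\Drm(x,y)=\rholabel(x)\piref(y\mid x)$, the two hypotheses $\rho\ll\rholabel$ and $\pi_\theta(\cdot\mid x)\ll\piref(\cdot\mid x)$ (for $\rholabel(x)>0$) give $D_\theta\ll\Drm$. On the support of $\Drm$ the Radon--Nikodym derivative factorises as
\[
\frac{\mathrm{d}D_\theta}{\mathrm{d}\Drm}(x,y)=\frac{\rho(x)}{\rholabel(x)}\cdot\frac{\pi_\theta(y\mid x)}{\piref(y\mid x)}.
\]
In the general measurable setting this is the chain rule for densities of product kernels. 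In the discrete or density setting it is immediate.

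The second step is to condition on $X$ and apply the tower property:
\[
\E_{\Drm}\!\left[\left(\tfrac{\mathrm{d}D_\theta}{\mathrm{d}\Drm}\right)^2\right]
=\E_{X\sim\rholabel}\!\left[\left(\tfrac{\rho(X)}{\rholabel(X)}\right)^2\,\E_{Y\sim\piref(\cdot\mid X)}\!\left[\left(\tfrac{\pi_\theta(Y\mid X)}{\piref(Y\mid X)}\right)^2\right]\right].
\]
The inner conditional expectation is at most $\Cpol(\theta)^2$ for every $x$ with $\rholabel(x)>0$, which covers $\rholabel$-almost every $X$. Pulling out this supremum leaves $\Cpol(\theta)^2\,\E_{\rholabel}[(\rho/\rholabel)^2]=\Cpol(\theta)^2\Cprompt^2$. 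Taking square roots gives $\Ccov(\theta)\le\Cprompt\,\Cpol(\theta)$. The boundedness of both factors guarantees that all quantities are finite, so the manipulation is legitimate; nonnegativity means Tonelli applies without integrability concerns.

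The main obstacle is only measure-theoretic bookkeeping. Two points need care. First, the density-ratio factorisation must hold $\Drm$-almost everywhere. Second, the supremum in $\Cpol(\theta)$ is taken over $\{x:\rholabel(x)>0\}$, and this set carries full $\rholabel$-measure in the discrete case. For general spaces I would read the supremum as an essential supremum, which suffices for the bound. Everything else is a direct computation.
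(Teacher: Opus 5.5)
Your proposal is correct and follows essentially the same route as the paper: write $\Ccov(\theta)^2=\E_{\Drm}[w_\theta^2]$, factor $w_\theta$ into the prompt ratio and the policy ratio, condition on $X$, bound the inner conditional second moment by $\Cpol(\theta)^2$, and take square roots. Your remarks on checking $D_\theta\ll\Drm$ and reading the supremum as a $\rholabel$-essential supremum add care that the paper leaves implicit; they do not change the approach.
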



\begin{remark}
Lemma~\ref{lem:main:cov_factor} separates mismatch in prompts from mismatch in policies.
The coefficient $\Cprompt$ depends only on the shift between $\rho$ and $\rholabel$.
The coefficient $\Cpol(\theta)$ depends only on how far $\pi_\theta$ departs from $\piref$ on the support of $\rholabel$.
This separation is valuable when diagnosing failures in reward modelling and post training, because the two sources of shift have different operational causes and different mitigation strategies.
\end{remark}

\subsection{KL Clipping Error Bound}
\label{subsec:main:bias}

We now bound the KL clipping error
$
\left|J^{\phi,\tau}(\theta)-J^\phi(\theta)\right|.
$
This is the only place where the systematic mismatch created by clipping enters the analysis.
Clipping is beneficial in the sampling bounds because it makes each rollout contribution bounded.
Meanwhile, clipping may bias the objective used in practice, thereby introducing a systematic mismatch between the optimised objective and the target objective.

To state this mismatch cleanly, we only require an integrability condition on the exact log ratio in deployment.

\begin{assumption}[Integrability of exact log ratio]
\label{assump:main:logratio}
The exact log ratio is integrable in deployment, i.e.,
$\E_{(X,Y)\sim D_\theta}\!\left[|\ell_\theta(X,Y)|\right]<\infty$.
\end{assumption}

\begin{remark}
Assumption~\ref{assump:main:logratio} is mild.
It allows heavy tails in $\ell_\theta$ while still ensuring that the exact objective is well defined.
Under this assumption, clipping is analysed as an explicit bias-inducing modification of the penalty; 
{the KL clipping error term in Lemma~\ref{lem:main:bias},
\(
\beta\,
\E_{(X,Y)\sim D_\theta}
\!\left[
\left|
\ell_\theta(X,Y)-\ell_\theta^\tau(X,Y)
\right|
\right],
\)
is an objective mismatch term that does not vanish asymptotically as the number of evaluation prompts or rollouts increases.}
It is strictly weaker than assuming $\ell_\theta$ is uniformly bounded, and it matches the intent of treating clipping as an algorithmic choice rather than as a structural property of the policy class. 
{Similar integrability conditions are standard in analysing truncation-based stabilisation and importance sampling; see, e.g., \citet{ionides2008truncated,owen2000safeis}.}
\end{remark}

We then prove the following bound on the bias induced by the surrogate.

\begin{lemma}[KL clipping error bound]
\label{lem:main:bias}
Under Assumption~\ref{assump:main:logratio}, we have
\begin{equation}
\label{eq:main:bias}
\begin{aligned}
&\left|
J^{\phi,\tau}(\theta)-J^\phi(\theta)
\right| \\
&\qquad\le
\beta\,
\E_{(X,Y)\sim D_\theta}
\!\left[
\left|
\ell_\theta(X,Y)-\ell_\theta^\tau(X,Y)
\right|
\right].
\end{aligned}
\end{equation}
\end{lemma}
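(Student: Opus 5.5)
The plan is to write both objectives as expectations under the same distribution $\Dtheta$ and observe that the reward terms cancel exactly. By definition,
\[
J^{\phi,\tau}(\theta)=\E_{(X,Y)\sim\Dtheta}\!\left[\hat r_\phi(X,Y)-\beta\,\ell_\theta^\tau(X,Y)\right],
\qquad
J^{\phi}(\theta)=\E_{(X,Y)\sim\Dtheta}\!\left[\hat r_\phi(X,Y)-\beta\,\ell_\theta(X,Y)\right].
\]
Both share the reward $\hat r_\phi$, which lies in $[0,1]$, and the rollout distribution $\Dtheta(x,y)=\rho(x)\pi_\theta(y\mid x)$. We therefore expect
\[
J^{\phi,\tau}(\theta)-J^{\phi}(\theta)=\beta\,\E_{(X,Y)\sim\Dtheta}\!\left[\ell_\theta(X,Y)-\ell_\theta^\tau(X,Y)\right].
\]

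The only point that needs care is justifying that this difference of expectations may be split and recombined linearly. This is where Assumption~\ref{assump:main:logratio} enters. The clipped ratio satisfies $|\ell_\theta^\tau|\le\tau$, so it is integrable. The reward is bounded. By Assumption~\ref{assump:main:logratio}, $\ell_\theta$ is integrable under $\Dtheta$, so $J^\phi(\theta)$ is a well-defined finite number. Consequently every term is in $L^1(\Dtheta)$, and linearity of expectation applies without any $\infty-\infty$ ambiguity.

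The conclusion then follows from the elementary inequality $|\E[Z]|\le\E[|Z|]$, applied to $Z=\ell_\theta-\ell_\theta^\tau$, together with $\beta>0$. This gives exactly eq.~\eqref{eq:main:bias}.

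If desired, one could also note that $|\ell_\theta-\ell_\theta^\tau|=(|\ell_\theta|-\tau)_+$. This shows that the right-hand side is finite under the assumption and tends to zero as $\tau\to\infty$ by dominated convergence, but that refinement is not needed for the stated bound. There is no genuine obstacle: the integrability bookkeeping in the second step is the only substantive point.
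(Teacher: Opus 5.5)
Your proposal is correct and follows the paper's proof: both write $J^{\phi,\tau}(\theta)$ and $J^\phi(\theta)$ as expectations under $D_\theta$, cancel the reward terms to get the exact identity $J^{\phi,\tau}(\theta)-J^\phi(\theta)=\beta\,\E_{(X,Y)\sim D_\theta}[\ell_\theta(X,Y)-\ell_\theta^\tau(X,Y)]$, and finish with $|\E[U]|\le\E[|U|]$. Your explicit check that every term lies in $L^1(D_\theta)$, using Assumption~\ref{assump:main:logratio} for $\ell_\theta$, is a useful addition that the paper leaves implicit.
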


\paragraph{Proof sketch} 
$
\left|
J^{\phi,\tau}(\theta)-J^\phi(\theta)
\right|
$
is not an estimation error; it compares two population objectives under the same learned reward, with the only difference being whether the penalty uses $\ell_\theta^\tau$ or $\ell_\theta$. Expanding definitions shows that the reward contributions cancel and only the penalty difference remains.
Taking absolute values and applying the triangle inequality yields Lemma~\ref{lem:main:bias}. 
Detailed proofs are in Appendix~\ref{app:proofs:shiftbias}.

\begin{remark}
The right-hand side of eq. \eqref{eq:main:bias} measures clipping bias directly as the expected amount of truncation under the deployment distribution.
This term can remain nonzero even with infinite evaluation data, which reflects the fact that clipping is an objective mismatch rather than an estimation error.
It is small when the policy rarely produces extreme log ratios under $D_\theta$, and it can be large when the policy places substantial mass in regions where the exact log ratio has heavy tails. This is why the final bound contains a term that depends on the tail behaviour of the exact log ratio under $D_\theta$ and does not involve $n$ or $K$.
\end{remark}

\subsection{Fixed-Policy generalisation Bound}
\label{subsec:main:fixed}

We now combine the results on sampling error, reward shift error, and KL clipping error into a single statement for a fixed policy parameter $\theta$, as follows.

\begin{theorem}[Fixed-policy generalisation bound]
\label{thm:main:fixed}
Under Assumptions~\ref{assump:main:coverage}, \ref{assump:main:rm}, and \ref{assump:main:logratio},
with probability at least $1-\delta$ over the evaluation prompts and rollouts, the following holds,
\begin{equation}
\label{eq:main:fixed}
\begin{aligned}
&\left|
\widehat J_{n,K}^{\phi,\tau}(\theta)-J^\star(\theta)
\right| \\
&\qquad\le
\underbrace{
(1+2\beta\tau)
\left(
\sqrt{\frac{\log(4/\delta)}{2n}}
+
\sqrt{\frac{\log(4/\delta)}{2nK}}
\right)
}_{\text{sampling error}}
\\
&\qquad +
\underbrace{
\Ccov(\theta)\,
\sqrt{L_{\mathrm{train}}^{(2)}(\phi)}
}_{\text{reward shift error}}
\\
&\qquad +
\underbrace{
\beta\,
\E_{(X,Y)\sim D_\theta}
\!\left[
\left|
\ell_\theta(X,Y)-\ell_\theta^\tau(X,Y)
\right|
\right]
}_{\text{KL clipping error}}.
\end{aligned}
\end{equation}
\end{theorem}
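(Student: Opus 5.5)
The plan is to assemble Theorem~\ref{thm:main:fixed} directly from the decomposition in Lemma~\ref{prop:main:decomp} and the three component bounds, keeping track of which terms are random. First, apply Lemma~\ref{prop:main:decomp} with the given $\theta$, $\phi$ and $\tau$. This holds deterministically for every realisation of prompts and rollouts. It reduces the left-hand side of eq.~\eqref{eq:main:fixed} to the sum of the sampling error $|\widehat J_{n,K}^{\phi,\tau}(\theta)-J^{\phi,\tau}(\theta)|$, the reward shift error $|J^\phi(\theta)-J^\star(\theta)|$, and the KL clipping error $|J^{\phi,\tau}(\theta)-J^\phi(\theta)|$. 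The decomposition itself follows from inserting $\pm J^{\phi,\tau}(\theta)$ and $\pm J^\phi(\theta)$ and using the triangle inequality.

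Next, bound the two deterministic terms. These are population quantities that involve no evaluation sampling. Under Assumptions~\ref{assump:main:coverage} and \ref{assump:main:rm}, Lemma~\ref{lem:main:shift} gives the reward shift term $\Ccov(\theta)\sqrt{L_{\mathrm{train}}^{(2)}(\phi)}$. Under Assumption~\ref{assump:main:logratio}, Lemma~\ref{lem:main:bias} gives the clipping term. One point needs care here: $J^\phi(\theta)$ must be finite. This holds because $\hat r_\phi\in[0,1]$ and Assumption~\ref{assump:main:logratio} makes $\ell_\theta$ integrable under $D_\theta$, so the subtractions in the decomposition are legitimate.

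Finally, bound the random term with Lemma~\ref{lem:main:stat}, applied with $r=\hat r_\phi$. This choice is valid because $\hat r_\phi$ maps into $[0,1]$ and $\phi$ is fixed, in particular independent of the evaluation prompts and rollouts. That lemma comes from a union bound over Lemma~\ref{lem:main:rollout} and Lemma~\ref{lem:main:prompt}, each at confidence $\delta/2$. This is where $\log(4/\delta)$ comes from. The rollout statement is conditional on $x_{1:n}$, so I integrate its conditional guarantee over the prompt distribution to obtain an unconditional probability at least $1-\delta/2$, and then take the union bound.

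On the resulting event, which has probability at least $1-\delta$, summing the three bounds gives eq.~\eqref{eq:main:fixed}. No further union bound is needed, since the other two terms hold surely. The only delicate step is this measurability and conditioning issue: the theorem is for a fixed $\theta$ and $\phi$, chosen independently of the evaluation sample, so Hoeffding applies without any uniform-convergence or PAC-Bayes argument. If $\theta$ depended on the data, this step would fail, which presumably motivates the later PAC-Bayes results.
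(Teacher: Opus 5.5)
Your proposal is correct and matches the paper's proof: the deterministic decomposition of Lemma~\ref{prop:main:decomp}, then Lemma~\ref{lem:main:stat} with $r=\hat r_\phi$ for the random term, then Lemmas~\ref{lem:main:shift} and~\ref{lem:main:bias} for the two population terms, summed on the probability-$(1-\delta)$ event. Your additional remarks make explicit two points the paper leaves implicit: the finiteness of $J^\phi(\theta)$, and averaging the conditional rollout guarantee over the prompts before taking the union bound.
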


\subsection{Data-Dependent PAC-Bayes Bound}
\label{subsec:main:pac}

The fixed-policy theorem treats $\theta$ as pre-fixed.
In practice, $\theta$ is often selected after observing data. 
This section fixes the gap by employing PAC-Bayes theory that extends our analysis to data-dependent selection {\citep{mcallester1999modelavg,seeger2002pac,catoni2007pac}}. 
Specifically, we provide a bound that holds simultaneously for all posteriors $Q$ over $\Theta$, at the cost of a complexity term that measures how far $Q$ deviates from a prior $P$ on $\Theta$.
Define
\[
\widehat J_{n,K}^{\phi,\tau}(Q)
=
\E_{\theta\sim Q}
\!\left[
\widehat J_{n,K}^{\phi,\tau}(\theta)
\right],\quad
J^\star(Q)
=
\E_{\theta\sim Q}
\!\left[
J^\star(\theta)
\right].
\] 
Then, we have the following data-dependen PAC-Bayes bound.

\begin{theorem}[Data-dependent generalisation bound]
\label{thm:main:pac}

Let $P$ be any prior distribution on $\Theta$ that is independent of the evaluation prompts and rollouts.
For any posterior $Q$ on $\Theta$, suppose Assumptions~\ref{assump:main:coverage}, \ref{assump:main:rm}, and \ref{assump:main:logratio} hold for any $\theta$ in the support of $Q$.
Then, with probability at least $1-\delta$ over the evaluation prompts and rollouts, the following inequality holds simultaneously for all such posteriors $Q$,
\[
\begin{aligned}
&\left| \widehat J_{n,K}^{\phi,\tau}(Q)-J^\star(Q) \right|\\
&\qquad\le
\underbrace{%
(1+2\beta\tau)
\begin{aligned}[t]
&\Bigg(\sqrt{\frac{\KL(Q\|P)+\log(8/\delta)}{2n}}\\
&\quad+
\sqrt{\frac{\KL(Q\|P)+\log(8/\delta)}{2nK}}\Bigg)
\end{aligned}
}_{\text{sampling error}}
\\
&\qquad+
\underbrace{%
\E_{\theta\sim Q}\!\left[\Ccov(\theta)\right]\,
\sqrt{L_{\mathrm{train}}^{(2)}(\phi)}%
}_{\text{reward shift error}}
\\
&\qquad+
\underbrace{%
\beta\,\E_{\theta\sim Q}\!\left[
\E_{(X,Y)\sim D_\theta}\!\left[
\left| \ell_\theta(X,Y)-\ell_\theta^\tau(X,Y) \right|
\right]\right]%
}_{\text{KL clipping error}}.
\end{aligned}
\]
\end{theorem}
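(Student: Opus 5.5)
The plan is to average the decomposition of Lemma~\ref{prop:main:decomp} over $\theta\sim Q$ and treat the three pieces separately. Only the sampling piece involves the random evaluation data; the other two are deterministic in $\theta$. By Jensen and the triangle inequality,
\[
\bigl|\widehat J_{n,K}^{\phi,\tau}(Q)-J^\star(Q)\bigr|
\le \bigl|\E_{\theta\sim Q}[\widehat J_{n,K}^{\phi,\tau}(\theta)-J^{\phi,\tau}(\theta)]\bigr|
+\E_{\theta\sim Q}\bigl|J^\phi(\theta)-J^\star(\theta)\bigr|
+\E_{\theta\sim Q}\bigl|J^{\phi,\tau}(\theta)-J^\phi(\theta)\bigr|.
\]
The second and third terms are handled pointwise for every $\theta$ in the support of $Q$. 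Lemma~\ref{lem:main:shift} bounds the second by $\E_Q[\Ccov(\theta)]\sqrt{L^{(2)}_{\mathrm{train}}(\phi)}$, since $\phi$ does not depend on $\theta$. Lemma~\ref{lem:main:bias} bounds the third by the stated clipping term. Neither step uses the data, so no probability is spent on them.

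The sampling term is split, as in Lemma~\ref{lem:main:stat}, into a prompt part $\E_Q[\widehat J_{n,\infty}-J^{\phi,\tau}]$ and a rollout part $\E_Q[\widehat J_{n,K}-\widehat J_{n,\infty}]$. Each part gets a McAllester/Maurer-type PAC-Bayes bound at confidence $\delta/2$, with signed deviations controlled through one-sided events at $\delta/4$ each. That allocation is what produces $\log(8/\delta)$.

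For the prompt part, write $f_\theta(x)=\E_{Y\sim\pi_\theta(\cdot\mid x)}[\hat r_\phi(x,Y)-\beta\ell^\tau_\theta(x,Y)]$, which takes values in an interval of length $1+2\beta\tau$. By Hoeffding's lemma, for each fixed $\theta$,
\[
\E\exp\bigl(\lambda(\widehat J_{n,\infty}(\theta)-J^{\phi,\tau}(\theta))\bigr)\le \exp\bigl(\lambda^2(1+2\beta\tau)^2/(8n)\bigr).
\]
Integrate this against the data-independent prior $P$ and apply Markov's inequality. The Donsker--Varadhan variational formula then gives, simultaneously for all $Q$,
\[
\lambda\,\E_Q[\Delta]\le \KL(Q\|P)+\log(4/\delta)+\lambda^2(1+2\beta\tau)^2/(8n).
\]
The rollout part is treated the same way, conditionally on $x_{1:n}$, using the $nK$ independent bounded terms as in Lemma~\ref{lem:main:rollout}. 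This gives the analogous inequality with $8nK$ in place of $8n$. The conditional statement then extends to an unconditional one because the prior is independent of the data.

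The main obstacle is choosing $\lambda$. Optimising $\lambda$ after seeing $Q$ would give exactly $\sqrt{(\KL+\log(8/\delta))/(2n)}$, but $\lambda$ must be fixed before the data. I would first try a union bound over a geometric grid of $\lambda$ values. That introduces an extra $\log\log$ factor, which would have to be absorbed or which would indicate the constant is slightly optimistic. The alternative is to use Maurer's $\mathrm{kl}$-form bound followed by Pinsker's inequality, which yields $\sqrt{(\KL+\log(2\sqrt n/\delta))/(2n)}$. The cleanest route to the exact stated form is probably to take the paper's intended $\lambda$-tuned inequality as given and note the standard caveat.

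A second subtlety is the rollout part. The rollouts are drawn from $\pi_\theta$, so they depend on $\theta$ and do not form a single fixed sample shared across all $\theta$. I would resolve this by reparametrising them as $y_{i,j}=g_\theta(x_i,u_{i,j})$ with i.i.d.\ uniform noise $u_{i,j}$ independent of $\theta$. The PAC-Bayes argument is then run over $u$, which makes the bound hold simultaneously for all $Q$. A final union bound over the two parts gives total failure probability $\delta$.
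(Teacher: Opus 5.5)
Your route is the paper's: average the three-term decomposition over $Q$, bound the reward-shift and clipping terms pointwise via Lemmas~\ref{lem:main:shift} and~\ref{lem:main:bias}, and treat the prompt and rollout parts separately, each at confidence $\delta/2$. Each part uses Hoeffding's lemma, integration against $P$, Markov's inequality and the change-of-measure inequality of Lemma~\ref{lem:app:com}, exactly as in Lemmas~\ref{lem:app:pac_prompt} and~\ref{lem:app:pac_rollout}.

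The $\lambda$ issue you flag is genuine, and the paper does not resolve it. It fixes $\lambda>0$, obtains the $\lambda$-dependent Markov event, and then simply writes ``optimizing over $\lambda>0$''. That is exactly the post hoc tuning you describe. So the stated $\sqrt{(\KL(Q\|P)+\log(8/\delta))/(2n)}$ form rests on that step. A fully rigorous version needs your grid or Maurer--Pinsker fix, which costs an extra $\log\log n$-type term or a $\log(2\sqrt{n}/\delta)$-type term.

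Similarly, the paper conditions on $(x_{1:n},\theta)$ as if every $\theta$ had its own rollout sample. Your shared-noise reparametrisation $y_{i,j}=g_\theta(x_i,u_{i,j})$ is what makes the uniform-in-$Q$ rollout statement meaningful.

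One bookkeeping slip: one-sided events at $\delta/4$ give $\log(4/\delta)$, not $\log(8/\delta)$. That is only stronger, so it still implies the stated bound. The paper's own $\log(8/\delta)$ carries the same spare factor of $2$ inside the logarithm.
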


\begin{remark}
Comparing with Theorem~\ref{thm:main:fixed}, Theorem~\ref{thm:main:pac} replaces the fixed $\theta$ with an average over $\theta\sim Q$.
The complexity term $\KL(Q\|P)$ appears only inside the sampling error, as the price paid for making the guarantee hold uniformly over data-dependent choices of $Q$.
\end{remark}

\section{Special cases} 
\label{subsec:main:complexity}

The PAC-Bayes bound in Theorem~\ref{thm:main:pac} contains a complexity term $\KL(Q\|P)$, which measures how strongly the data-dependent posterior $Q$ departs from the data-independent prior $P$.
This subsection discusses two operational instantiations of $\KL(Q\|P)$ that are common in practice.

\subsection{Initialisation by uniform prior over finite candidate class}


Let $M\ge 2$ be an integer,  $\theta^{(1)},\dots,\theta^{(M)}\in\Theta$ be a collection of candidate parameters specified independently of the evaluation sample used to construct $\widehat J_{n,K}^{\phi,\tau}$.
Suppose $\Theta_M:=\{\theta^{(1)},\dots,\theta^{(M)}\}$ and restrict both $P$ and $Q$ to be distributions on $\Theta_M$.
Suppose the prior is uniform on $\Theta_M$; i.e., $P(\theta^{(m)})=1/M$ for any $m$. 
This non-informative prior is standard in finite model selection \citep{seeger2002pac}.

\begin{corollary}[KL bound for uniform prior over finite candidate class]
\label{cor:main:finite_class}
Under the conditions above, 
$\KL(Q\|P)
\le
\log M$.
In particular, if $Q$ is the Dirac distribution supported on a data-selected checkpoint $\theta^{(\widehat m)}$, we have $\KL(Q\|P)=\log M$.
\end{corollary}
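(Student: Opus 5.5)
The plan is to compute $\KL(Q\|P)$ directly from its definition for distributions on the finite set $\Theta_M$, writing $q_m := Q(\theta^{(m)})$ with $q_m\ge 0$ and $\sum_{m=1}^M q_m = 1$. Since $P$ is uniform with full support on $\Theta_M$, $Q\ll P$ holds automatically, so the divergence is finite and given by
\[
\KL(Q\|P)=\sum_{m:\,q_m>0} q_m \log\frac{q_m}{1/M}
= \log M + \sum_{m:\,q_m>0} q_m\log q_m
= \log M - H(Q),
\]
where $H(Q)$ is the Shannon entropy of $Q$, with the usual convention $0\log 0=0$.

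For the inequality, I will argue that $H(Q)\ge 0$: each term $-q_m\log q_m$ is non-negative because $q_m\in(0,1]$ on the support. Hence $\KL(Q\|P)\le \log M$. No Jensen-type argument is needed, since only the upper bound is claimed; the lower bound $\KL\ge 0$ is not required.

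For the Dirac case, $Q=\delta_{\theta^{(\widehat m)}}$ gives $q_{\widehat m}=1$ and all other $q_m=0$. Then $H(Q)=1\cdot\log 1=0$, so $\KL(Q\|P)=\log M$ exactly. I would note that this holds even though $\widehat m$ depends on the data: the identity is purely algebraic in $Q$, and the data-independence of $P$ (required by Theorem~\ref{thm:main:pac}) is ensured by choosing the candidates $\theta^{(m)}$ independently of the evaluation sample.

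There is no real obstacle here. The only point needing care is the convention $0\log 0=0$ for atoms outside the support of $Q$, which is justified by the limit $q\log q\to 0$ as $q\to 0^+$, together with the observation that uniformity of $P$ guarantees absolute continuity.
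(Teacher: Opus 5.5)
Your proof is correct and follows essentially the same route as the paper: expand $\KL(Q\|P)=\log M+\sum_m q_m\log q_m$, note each $q_m\log q_m\le 0$ (with the $0\log 0=0$ convention), and in the Dirac case evaluate the single surviving term $1\cdot\log M$. Writing this as $\log M - H(Q)$ with $H(Q)\ge 0$ is only a relabelling of the paper's computation.
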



\begin{remark}
Corollary~\ref{cor:main:finite_class} yields a direct interpretation of model selection in the PAC-Bayes bound.
Evaluating $M$ fixed checkpoints and selecting one using the evaluation sample incurs an additional sampling error cost in Theorem~\ref{thm:main:pac}, controlled by $\log M$ via the quantity $\KL(Q\|P)$.
\end{remark}

\subsection{Training RLHF by SGD as Ornstein-Uhlenbeck process}

Stochastic gradient descent (SGD), and its variants, are popular optimisers  \citep{RobbinsMonro1951}.
Suppose the parameter space is $\mathbb{R}^d$ for some $d\in\mathbb{N}$. Assume the prior is Gaussian, i.e., $P=\mathcal{N}(\theta_0,\Lambda)$ for some $\theta_0\in\mathbb{R}^d$ and some symmetric positive definite matrix $\Lambda\in\mathbb{R}^{d\times d}$. 



We employ a standard local diffusion approximation for constant-step-size SGD. Near a locally stable optimum, late-stage SGD iterates follow an Ornstein-Uhlenbeck (OU) process \citep{Uhlenbeck1930}:
\[
d\theta_t
=
-H(\theta_t-\hat\theta)\,dt
+
\sqrt{\varepsilon}\,\Sigma_g^{1/2}\,dW_t,
\]
where $W_t$ is a $d$-dimensional Brownian motion, $H\succ 0$ is the local Hessian at $\hat\theta$, and $\Sigma_g\succ 0$ is the local gradient-noise covariance. {We make the following assumptions, standard for this local OU approximation; see \citet{Mandt2017,he2019control,chen2023stochastic}.}


\begin{assumption}    
Assume the optimisation problem has a locally stable optimum $\hat\theta\in\mathbb{R}^d$; i.e., within a neighbourhood of $\hat\theta$, the objective admits a quadratic approximation with Hessian $H\succ 0$ and the gradient noise covariance is approximately constant and equal to $\Sigma_g\succ 0$.
In addition, the matrices $H$ and $\Sigma_g$ commute; i.e., $H\Sigma_g=\Sigma_g H$ holds.
Moreover, there exist constants $0<m\le M<\infty$ such that the local curvature spectrum satisfies $mI\preceq H\preceq MI$.
\end{assumption}



Under the assumption above, the OU process admits a stationary Gaussian law \(\mathcal{N}(\hat\theta,\Sigma)\), where \(\Sigma\succ 0\) satisfies the continuous Lyapunov equation \(H\Sigma+\Sigma H=\varepsilon\Sigma_g\). Accordingly, we approximate the PAC-Bayes posterior induced by late-stage SGD iterates by
\(
Q_{\mathrm{SGD}}:=\mathcal{N}(\hat\theta,\Sigma).
\)




\begin{corollary}[KL bound for SGD as Ornstein-Uhlenbeck process]
\label{cor:main:ou_sgd}
In the special case above, 
the PAC-Bayes complexity term admits the upper bound
\begin{equation}
\label{eq:main:ou_sgd_kl}
\begin{aligned}
&\KL(Q_{\mathrm{SGD}}\|P)
\\ \le
\frac12\Big(
&(\hat\theta-\theta_0)^\top\Lambda^{-1}(\hat\theta-\theta_0)
+ \frac{\varepsilon}{2m}\tr(\Lambda^{-1}\Sigma_g)
-d
\\
&+ \logdet(\Lambda) - \logdet(\Sigma_g)
- d\log\Big(\frac{\varepsilon}{2M}\Big)
\Big).
\end{aligned}
\end{equation}
\end{corollary}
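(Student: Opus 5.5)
We start from the closed-form Gaussian KL divergence:
\[
\KL(\mathcal N(\hat\theta,\Sigma)\|\mathcal N(\theta_0,\Lambda))
=\tfrac12\Big(\tr(\Lambda^{-1}\Sigma)+(\hat\theta-\theta_0)^\top\Lambda^{-1}(\hat\theta-\theta_0)-d+\logdet\Lambda-\logdet\Sigma\Big).
\]
The quadratic term, the $-d$, and $\logdet\Lambda$ already match the claimed bound. What remains is to control $\tr(\Lambda^{-1}\Sigma)$ from above and $\logdet\Sigma$ from below, using the spectral bounds $mI\preceq H\preceq MI$.

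\emph{Solving the Lyapunov equation.} Because $H$ and $\Sigma_g$ commute and are both symmetric, they are simultaneously orthogonally diagonalisable. We guess $\Sigma=\tfrac{\varepsilon}{2}H^{-1}\Sigma_g$, which is symmetric and positive definite since $H^{-1}$ and $\Sigma_g$ commute. Substituting gives $H\Sigma+\Sigma H=\tfrac{\varepsilon}{2}(\Sigma_g+H^{-1}\Sigma_g H)=\varepsilon\Sigma_g$. Since $-H$ is Hurwitz, the Lyapunov equation has a unique solution, so this guess is the stationary covariance. In the common eigenbasis, $\Sigma$ has eigenvalues $\tfrac{\varepsilon}{2}\sigma_i/h_i$, where $\sigma_i$ and $h_i$ are the eigenvalues of $\Sigma_g$ and $H$.

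\emph{Trace term.} We have $\tr(\Lambda^{-1}\Sigma)=\tfrac{\varepsilon}{2}\tr(\Lambda^{-1}H^{-1}\Sigma_g)$. Since $\Lambda$ need not commute with $H$, we cannot read this off eigenvalues directly. Instead we note that $H^{-1}\Sigma_g=\Sigma_g^{1/2}H^{-1}\Sigma_g^{1/2}\preceq \tfrac1m\Sigma_g$; the equality holds because commuting matrices share functional calculus, and the inequality follows from $H^{-1}\preceq \tfrac1m I$. For any positive semidefinite $A\preceq B$ and positive definite $\Lambda^{-1}$, we have $\tr(\Lambda^{-1}A)\le\tr(\Lambda^{-1}B)$, because $\tr(\Lambda^{-1}(B-A))=\tr(\Lambda^{-1/2}(B-A)\Lambda^{-1/2})\ge0$. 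Hence $\tr(\Lambda^{-1}\Sigma)\le\tfrac{\varepsilon}{2m}\tr(\Lambda^{-1}\Sigma_g)$.

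\emph{Log-determinant term.} We compute $\logdet\Sigma=d\log(\varepsilon/2)+\logdet\Sigma_g-\logdet H\ge d\log(\varepsilon/2M)+\logdet\Sigma_g$, using $\logdet H\le d\log M$. Therefore $-\logdet\Sigma\le -\logdet\Sigma_g-d\log(\varepsilon/2M)$.

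Substituting both bounds into the Gaussian KL formula yields \eqref{eq:main:ou_sgd_kl}. The main obstacle is the trace step, because $\Lambda$ is not assumed to commute with $H$. The Loewner-order monotonicity of $A\mapsto\tr(\Lambda^{-1}A)$ resolves this, and the commutativity assumption is used only to obtain the explicit solution $\Sigma$ and the ordering $H^{-1}\Sigma_g\preceq\tfrac1m\Sigma_g$.
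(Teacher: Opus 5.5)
Your proof is correct, but it handles $\Sigma$ differently from the paper.

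\textbf{The paper's route.} The paper never solves the Lyapunov equation explicitly. It proves an auxiliary lemma: differentiating $e^{-tH}\Sigma e^{-tH}$ and letting $T\to\infty$ gives the integral representation $\Sigma=\varepsilon\int_0^\infty e^{-tH}\Sigma_g e^{-tH}\,dt$. Commutativity then rewrites the integrand as $\Sigma_g e^{-2tH}$, which is sandwiched between $e^{-2tM}\Sigma_g$ and $e^{-2tm}\Sigma_g$. This yields $\frac{\varepsilon}{2M}\Sigma_g\preceq\Sigma\preceq\frac{\varepsilon}{2m}\Sigma_g$. The upper bound feeds the trace term through the same Loewner-monotonicity of $A\mapsto\tr(\Lambda^{-1}A)$ that you use. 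The lower bound gives $\det\Sigma\ge(\varepsilon/2M)^d\det\Sigma_g$ by comparing ordered eigenvalues.

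\textbf{Your route.} You use commutativity up front to write $\Sigma=\frac{\varepsilon}{2}H^{-1}\Sigma_g$ and invoke uniqueness of the Lyapunov solution. This is exactly what the paper's integral evaluates to, since $\int_0^\infty e^{-2tH}\,dt=\frac12 H^{-1}$.

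\textbf{What each buys.} Your argument is shorter and more transparent. It produces the exact quantities $\tr(\Lambda^{-1}\Sigma)=\frac{\varepsilon}{2}\tr(\Lambda^{-1}H^{-1}\Sigma_g)$ and $\logdet\Sigma=d\log(\varepsilon/2)+\logdet\Sigma_g-\logdet H$. It therefore shows that the stated bound is lossy only through $H^{-1}\preceq\frac1m I$ and $\logdet H\le d\log M$, and could be sharpened by keeping $H$ explicitly.

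The paper's integral representation holds without commutativity. So it is the more robust starting point if you wanted cruder bounds without the commuting assumption, e.g.\ $\frac{\varepsilon\lambda_{\min}(\Sigma_g)}{2M}I\preceq\Sigma\preceq\frac{\varepsilon\lambda_{\max}(\Sigma_g)}{2m}I$ via congruence inside the integral. It also packages the two-sided Loewner sandwich as a reusable lemma.
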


A detailed proof is given in Appendix~\ref{app:proofs:ou_sgd}.



\begin{remark}
Corollary~\ref{cor:main:ou_sgd} yields a locally valid, optimiser-explicit bound for $\KL(Q\|P)$ via the stationary diffusion approximation of constant-step-size SGD.
This secondary specialisation of Theorem~\ref{thm:main:pac} imposes no additional structural assumptions on the main RLHF analysis.
The diffusion perspective and the associated Ornstein-Uhlenbeck approximation are discussed in detail by \citet{Mandt2017}.
\end{remark}

\section{Practical Implications}

The discussion below translates our theory into concrete, practical algorithm design recommendations. 


\subsection{Optimal KL clipping threshold}

Lemma~\ref{lem:main:stat} includes 
a factor $1+2\beta\tau$, indicating that a smaller $\tau$ tightens the sampling deviations that arise from finite $n$ and $K$.
{Lemma~\ref{lem:main:kl_mc} in Appendix~\ref{app:tools} gives the corresponding KL-specific concentration bound for the clipped log-ratio average, whose deviation also scales linearly with $\tau$.}
Meanwhile, clipping changes the regularised objective and introduces a systematic mismatch that does not vanish with more evaluation samples, as formalised in Lemma~\ref{lem:main:bias}. 


Therefore, $\tau$ acts as a bias-variance trade-off hyperparameter rather than a purely stabilising tweak. Aggressive clipping reduces Monte Carlo noise but increases objective mismatch; weak clipping preserves the exact KL objective but exposes training to high-variance log-ratio estimates.

For brevity, we define
\[
\begin{aligned}
\alpha_{n,K,\delta} := &\sqrt{\frac{\log(4/\delta)}{2n}}+\sqrt{\frac{\log(4/\delta)}{2nK}},
\\
T_\theta(\tau) := &\E_{(X,Y)\sim D_\theta}\big[(|\ell_\theta(X,Y)|-\tau)_+\big].
\end{aligned}
\]
Since $|\ell_\theta-\ell_\theta^\tau|=(|\ell_\theta|-\tau)_+$ under symmetric clipping, the $\tau$-dependent part of eq. \eqref{eq:main:fixed} is thus
\[
B_\theta(\tau) := (1+2\beta\tau)\,\alpha_{n,K,\delta} \;+\; \beta\,T_\theta(\tau).
\]
Let $\tau^\star$ be any minimiser of $\tau\mapsto B_\theta(\tau)$ over $\tau\ge 0$.
We have the following corollary.

\begin{corollary}[Optimal KL clipping threshold]\label{cor:opt_tau}
For any parameters $\theta\in\Theta$ and $\phi\in\Phi$, 
regularisation coefficient $\beta>0$, 
confidence level $\delta\in(0,1)$, and 
integers $n\ge 1$ and $K\ge 1$, 
if $2\alpha_{n,K,\delta}<1$, $\tau^\star$ satisfies
\[
\begin{aligned}
&\Pr_{(X,Y)\sim D_\theta}\!\big(|\ell_\theta(X,Y)|>\tau^\star\big)
\;\le\;\\& \qquad \qquad
2\,\alpha_{n,K,\delta}
\;\le\;
\Pr_{(X,Y)\sim D_\theta}\!\big(|\ell_\theta(X,Y)|\ge\tau^\star\big),
\end{aligned}
\]
if, in addition, $\Pr_{(X,Y)\sim D_\theta}(|\ell_\theta(X,Y)|=\tau^\star)=0$, we have
\[
\Pr_{(X,Y)\sim D_\theta}\!\big(|\ell_\theta(X,Y)|>\tau^\star\big) \;=\; 2\,\alpha_{n,K,\delta},
\]
and, equivalently, $\tau^\star$ is the $(1-2\alpha_{n,K,\delta})$-quantile of $|\ell_\theta(X,Y)|$ under $D_\theta$. Otherwise, if $2\alpha_{n,K,\delta}\ge 1$, we have $\tau^\star=0$ is a minimizer of $\tau\mapsto B_\theta(\tau)$ over $\tau\ge 0$.
\end{corollary}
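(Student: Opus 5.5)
The plan is to treat $B_\theta(\tau)=(1+2\beta\tau)\alpha+\beta T_\theta(\tau)$, with $\alpha:=\alpha_{n,K,\delta}$, as a convex function of $\tau\ge 0$. The optimality conditions then follow from one-sided derivatives. Write $Z:=|\ell_\theta(X,Y)|$ with $(X,Y)\sim D_\theta$; by Assumption~\ref{assump:main:logratio}, $Z$ is integrable, so $T_\theta(\tau)=\E[(Z-\tau)_+]$ is finite for all $\tau\ge 0$. Since $\tau\mapsto(z-\tau)_+$ is convex and nonincreasing for each $z$, $T_\theta$ is convex and nonincreasing, and $B_\theta$ is convex as a linear plus a convex function. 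I will also note that $B_\theta(\tau)\to\infty$ as $\tau\to\infty$, because $T_\theta\ge 0$ and $2\beta\alpha\tau\to\infty$ (here $\alpha>0$ since $\delta<1$). Together with continuity this guarantees that a minimiser $\tau^\star$ exists.

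The key computation is the one-sided derivatives of $T_\theta$. By dominated convergence, justified by $|(z-\tau-h)_+-(z-\tau)_+|\le |h|$, we get $T_\theta'(\tau^+)=-\Pr(Z>\tau)$ and $T_\theta'(\tau^-)=-\Pr(Z\ge\tau)$ for $\tau>0$. This step, and specifically getting the strict versus non-strict inequality on the correct side, is where I expect the only real care to be needed. I will check it with the difference quotients: for $h>0$, $[(z-\tau-h)_+-(z-\tau)_+]/h\to-\mathbf 1\{z>\tau\}$, and $[(z-\tau)_+-(z-\tau+h)_+]/h\to-\mathbf 1\{z\ge\tau\}$. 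Hence $B_\theta'(\tau^\pm)=2\beta\alpha-\beta\Pr(Z>\tau)$ and $2\beta\alpha-\beta\Pr(Z\ge\tau)$ respectively. After dividing by $\beta>0$, the first-order condition for a convex function at an interior minimiser, $B'(\tau^-)\le 0\le B'(\tau^+)$, is exactly $\Pr(Z>\tau^\star)\le 2\alpha\le\Pr(Z\ge\tau^\star)$.

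Next I handle the boundary point $\tau=0$ and split into the two cases of the statement. If $2\alpha\ge 1$, then $B'(\tau^+)=\beta(2\alpha-\Pr(Z>\tau))\ge 0$ for every $\tau\ge0$, so $B_\theta$ is nondecreasing and $\tau^\star=0$ is a minimiser. If $2\alpha<1$ and the minimiser is $\tau^\star=0$, optimality gives $\Pr(Z>0)\le 2\alpha$. The right-hand inequality holds trivially there because $\Pr(Z\ge 0)=1>2\alpha$, so the sandwich holds at $\tau^\star=0$ as well. For $\tau^\star>0$, the interior first-order condition above applies.

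Finally, if $\Pr(Z=\tau^\star)=0$, the two probabilities in the sandwich coincide, which forces $\Pr(Z>\tau^\star)=2\alpha$. This is the statement that $\Pr(Z\le\tau^\star)=1-2\alpha$, that is, $\tau^\star$ is the $(1-2\alpha)$-quantile of $Z$. Throughout I will use the identity $|\ell_\theta-\ell_\theta^\tau|=(|\ell_\theta|-\tau)_+$, which the paper states for symmetric clipping; it can be verified by checking the cases $|\ell|\le\tau$ and $|\ell|>\tau$.
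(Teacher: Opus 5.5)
Your proposal is correct and follows essentially the same route as the paper: you derive the one-sided derivatives $-\Pr(Z>\tau)$ and $-\Pr(Z\ge\tau)$ of $T_\theta$, read off the sandwich $\Pr(Z>\tau^\star)\le 2\alpha_{n,K,\delta}\le\Pr(Z\ge\tau^\star)$ from the first-order conditions at any minimiser, and treat $\tau^\star=0$ and $2\alpha_{n,K,\delta}\ge 1$ separately. The differences are minor: you get the derivatives by dominated convergence on pointwise difference quotients rather than the paper's tail-integral representation $\E[(Z-\tau)_+]=\int_\tau^\infty\Pr(Z>t)\,dt$, and you add three things the paper leaves implicit: convexity of $B_\theta$, existence of a minimiser, and the use of Assumption~\ref{assump:main:logratio} to make $T_\theta$ finite (without it $B_\theta\equiv\infty$ and the statement fails).
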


Detailed proofs are in Appendix~\ref{app:proofs:fixed}.



\begin{remark}
\label{rem:track_trunc}
Corollary~\ref{cor:opt_tau} suggests choosing $\tau$ so that the clipping fraction $\Pr(|\ell_\theta|>\tau)$ matches the target level $2\alpha_{n,K,\delta}$. 
As the evaluation budget ($n$ or $K$) increases, $\alpha_{n,K,\delta}$ decreases. Consequently, the target clipping fraction decreases and the recommended threshold $\tau$ increases. This quantile-based rule automatically relaxes clipping as Monte Carlo error diminishes.
\end{remark}


\paragraph{Threshold calibration}
Practitioners often treat the clipping threshold $\tau$ as a static hyperparameter that requires manual tuning.
Corollary~\ref{cor:opt_tau} instead yields a direct, budget-aware calibration rule.
Given an evaluation batch $\{(x_i,y_{i,j})\}_{i\le n,\,j\le K}$, compute the log-ratio magnitudes $u_{i,j}:=|\ell_\theta(x_i,y_{i,j})|$.
If $2\alpha_{n,K,\delta}\ge 1$, set $\widehat{\tau}:=0$.
Otherwise, set $\widehat{\tau}$ to the empirical $(1-2\alpha_{n,K,\delta})$-quantile of $\{u_{i,j}\}$.
Algorithmically, this theory-guided rule balances the bias--variance trade-off by clipping approximately the top $2\alpha_{n,K,\delta}$ fraction of extreme log-ratios in the batch, thereby reducing reliance on heuristic hyperparameter sweeps.

Theorems~1--2 treat $\tau$ as fixed; when $\tau$ is selected from the evaluation sample (e.g., by an empirical quantile rule), the resulting procedure should be viewed as a practical calibration heuristic unless additional uniformity or sample-splitting arguments are used.

\subsection{Budget allocation across prompts, rollouts, and preference data}
\label{subsec:implications:budget}

Given a fixed computational budget, practitioners often face an allocation trade-off among prompts, rollouts per prompt, and preference data. This subsection provides theoretically grounded guidelines for this budget distribution.



\subsubsection{Uniform-cost baseline}
Suppose rollouts share the same cost and the sampling budget is bounded by $nK\le B$ for some $B>0$. Substituting $n=B/K$ into the leading-order sampling terms of Lemma~\ref{lem:main:stat} reveals that the upper bound is minimised at $K^\star=1$. Therefore, under a uniform-cost model, the range-based concentration bound strongly favours allocating budget to prompt coverage rather than additional rollouts per prompt. A detailed derivation is given in Appendix~\ref{app:proofs:budget}.


\subsubsection{Prefill and decode cost model}
In LLM inference, sampling costs are typically asymmetric across prompts and rollouts \citep{pope2023transformer_inference}.
Evaluating a new prompt requires a forward pass over prompt tokens to construct an attention cache, whereas additional rollouts reuse this cache, primarily incurring incremental decoding costs \citep{kwon2023pagedattention}.
We model this asymmetry by separating a prefill and a decode cost, imposing the constraint:
\(
B \;\ge\; n\,c_{\mathrm{prefill}} \;+\; nK\,c_{\mathrm{decode}}
\).
Substituting $n=B/(c_{\mathrm{prefill}}+Kc_{\mathrm{decode}})$ into the dominant sampling structure isolates a one-dimensional objective in $K$.
Treating $K\ge 1$ as a continuous variable in the leading-order proxy from Lemma~\ref{lem:main:stat} yields the following optimal allocation rule.


\begin{corollary}[Optimal rollout allocation]
\label{cor:opt_K_hoeffding}
The continuous proxy minimiser over $K\ge 1$ satisfies
\[
K^\star \;=\; \max\!\left\{1,\left(\frac{c_{\mathrm{prefill}}}{c_{\mathrm{decode}}}\right)^{2/3}\right\}.
\]
In practice, one may take $K=\lfloor K^\star \rceil$, and then set $n$ by the budget constraint.
\end{corollary}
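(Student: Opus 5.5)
Write $c:=c_{\mathrm{prefill}}/c_{\mathrm{decode}}$. The plan is to reduce the sampling term of Lemma~\ref{lem:main:stat} to a one-variable function of $K$ under the budget constraint, and then minimise it by calculus.

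\textbf{Step 1: reduce to a one-variable objective.} The factor $(1+2\beta\tau)$ and the factor $\sqrt{\log(4/\delta)/2}$ do not depend on $(n,K)$, so they can be dropped. What remains is $g(n,K)=n^{-1/2}(1+K^{-1/2})$. This quantity decreases in $n$, so at the optimum the budget constraint is tight: $n=B/(c_{\mathrm{prefill}}+Kc_{\mathrm{decode}})$. Substituting and dropping the positive constant $\sqrt{c_{\mathrm{decode}}/B}$, the problem becomes minimising
\[
f(K)=\sqrt{c+K}\,\bigl(1+K^{-1/2}\bigr),\qquad K\ge 1 .
\]

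\textbf{Step 2: first-order condition.} Since $f>0$, I work with $\log f(K)=\tfrac12\log(c+K)+\log(1+K^{-1/2})$. Its derivative is
\[
\frac{1}{2(c+K)}-\frac{K^{-3/2}}{2(1+K^{-1/2})}.
\]
Multiplying through by positive quantities shows that the sign of this derivative equals the sign of $K^{3/2}+K-c-K$, which is the sign of $K^{3/2}-c$. The linear terms cancel, and this cancellation is the main point to check carefully.

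\textbf{Step 3: locate the minimiser.} From Step 2, $\log f$ is decreasing for $K<c^{2/3}$ and increasing for $K>c^{2/3}$. So the unconstrained minimiser on $(0,\infty)$ is $K=c^{2/3}$, and it is unique.

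\textbf{Step 4: impose $K\ge 1$.} If $c\ge 1$, then $c^{2/3}\ge 1$ and this point is feasible, so it is the minimiser. If $c<1$, then $f$ is increasing on $[1,\infty)$ and the minimiser is $K=1$. Together these give $K^\star=\max\{1,(c_{\mathrm{prefill}}/c_{\mathrm{decode}})^{2/3}\}$.

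\textbf{Step 5: rounding.} The rounding recommendation is heuristic and follows from the unimodality of $f$ established in Step 3. After rounding $K$, the value of $n$ is set from the budget constraint.

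The main obstacle is the algebra in Step 2: getting the derivative signs right so that the linear terms cancel exactly and leave $K^{3/2}=c$. It also has to be checked that the stationary point is a minimum and not a maximum, which the sign analysis in Step 3 confirms.
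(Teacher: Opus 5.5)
Your proposal is correct and follows essentially the same route as the paper: saturate the budget with $n=B/(c_{\mathrm{prefill}}+Kc_{\mathrm{decode}})$, reduce to a one-variable function of $K$, and find that the linear terms cancel, leaving the stationary condition $K^{3/2}=c_{\mathrm{prefill}}/c_{\mathrm{decode}}$. The differences are cosmetic: you differentiate $\log f$, whereas the paper differentiates the squared objective and factors $(u+1)(c_{\mathrm{decode}}u^3-c_{\mathrm{prefill}})$ with $u=\sqrt{K}$. Your explicit sign analysis also makes rigorous two points the paper leaves implicit: that the stationary point is a minimum, and that $K^\star=1$ when $c_{\mathrm{prefill}}<c_{\mathrm{decode}}$.
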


Detailed proofs are in Appendix~\ref{app:proofs:budget}.

\begin{remark}
The expression for $K^\star$ depends only on the ratio $c_{\mathrm{prefill}}/c_{\mathrm{decode}}$ because the shared range multiplier $1+2\beta\tau$ does not affect the minimiser.
The $2/3$ power law implies that the optimal number of rollouts per prompt grows sublinearly with
$c_{\mathrm{prefill}}/c_{\mathrm{decode}}$.
\end{remark}

\paragraph{Variance-aware refinement}
{The range-based sampling simplification above is conservative because it does not separate prompt-level variability from rollout-level variability. A refinement is to use a two-stage variance decomposition. Let $Z$ denote a per-rollout contribution in the empirical objective (see the variables $Z_{i,j}$ in the proof of Lemma~\ref{lem:main:rollout} in Appendix~\ref{app:proofs:stat}), and define}
\[
\sigma_{\mathrm{prompt}}^2:=\mathrm{Var}\!\left(\E[Z\mid X]\right),
\quad
\sigma_{\mathrm{rollout}}^2:=\E\!\left[\mathrm{Var}(Z\mid X)\right].\]

\begin{corollary}
\label{cor:opt_K_variance}
Under the same cost constraint $B \ge n\,c_{\mathrm{prefill}} + nK\,c_{\mathrm{decode}}$, optimising the resulting variance proxy yields an allocation rule of the form
\[
K^\star \;\approx\; \max\!\left\{1,\sqrt{\frac{c_{\mathrm{prefill}}}{c_{\mathrm{decode}}}\cdot
\frac{\sigma_{\mathrm{rollout}}^2}{\sigma_{\mathrm{prompt}}^2}}\right\}.
\]
\end{corollary}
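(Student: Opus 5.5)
We will build a variance proxy for the two-stage estimator, substitute the budget constraint so that it becomes a function of $K$ alone, and minimise it over $K$. The proxy is the exact variance of $\widehat J_{n,K}^{\phi,\tau}(\theta)$ under the sampling scheme of Lemmas~\ref{lem:main:rollout} and~\ref{lem:main:prompt}. Here prompts $X_i\sim\rho$ are i.i.d., and given $X_i$ the rollout contributions $Z_{i,1},\dots,Z_{i,K}$ are i.i.d.\ with conditional mean $\E[Z\mid X_i]$. By the law of total variance applied to the prompt-level average $\bar Z_i=\frac1K\sum_j Z_{i,j}$, we have $\mathrm{Var}(\bar Z_i)=\mathrm{Var}(\E[Z\mid X])+\frac1K\E[\mathrm{Var}(Z\mid X)]$. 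The outer average over $n$ independent prompts then gives
\[
\mathrm{Var}\bigl(\widehat J_{n,K}^{\phi,\tau}(\theta)\bigr)=\frac{\sigma_{\mathrm{prompt}}^2}{n}+\frac{\sigma_{\mathrm{rollout}}^2}{nK}.
\]
Both quantities are finite because each $Z$ lies in an interval of length $1+2\beta\tau$. This expression refines the range-based structure of Lemma~\ref{lem:main:stat}: the two range terms are replaced by the two variance components, so a Bernstein- or Chebyshev-type deviation scales as its square root. Minimising the variance therefore minimises the proxy.

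Next we impose the budget. At the optimum the constraint binds, so we set $n=B/(c_{\mathrm{prefill}}+Kc_{\mathrm{decode}})$ and treat $K\ge1$ as continuous. The objective becomes
\[
V(K)=\frac{1}{B}\,(c_{\mathrm{prefill}}+Kc_{\mathrm{decode}})\Bigl(\sigma_{\mathrm{prompt}}^2+\frac{\sigma_{\mathrm{rollout}}^2}{K}\Bigr).
\]
Expanding gives
\[
V(K)=\frac1B\Bigl(c_{\mathrm{prefill}}\sigma_{\mathrm{prompt}}^2+c_{\mathrm{decode}}\sigma_{\mathrm{rollout}}^2+Kc_{\mathrm{decode}}\sigma_{\mathrm{prompt}}^2+\frac{c_{\mathrm{prefill}}\sigma_{\mathrm{rollout}}^2}{K}\Bigr).
\]
Only the last two terms depend on $K$, and they have the form $aK+b/K$ with $a,b>0$. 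This function is strictly convex on $(0,\infty)$ and is minimised at $\sqrt{b/a}=\sqrt{(c_{\mathrm{prefill}}/c_{\mathrm{decode}})(\sigma_{\mathrm{rollout}}^2/\sigma_{\mathrm{prompt}}^2)}$, which follows from AM--GM or from the first-order condition. By convexity, when this point falls below $1$ the constrained minimiser over $K\ge1$ is $K=1$, which accounts for the $\max\{1,\cdot\}$ in the statement.

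The main obstacle is not the calculus but two points of justification. First, we must argue that the variance is the right proxy, which is why the statement uses ``$\approx$''. Second, we must handle degenerate cases. If $\sigma_{\mathrm{prompt}}^2=0$, then $V$ is decreasing in $K$, the minimiser is $K\to\infty$, and the formula returns $+\infty$, consistent with its interpretation. If $\sigma_{\mathrm{rollout}}^2=0$, the formula gives $K^\star=1$. Integrality of $K$ and $n$ introduces the same rounding caveat as in Corollary~\ref{cor:opt_K_hoeffding}.
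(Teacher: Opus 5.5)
Your proposal is correct and follows the paper's proof: the two-stage variance proxy $\sigma_{\mathrm{prompt}}^2/n+\sigma_{\mathrm{rollout}}^2/(nK)$, the saturated budget $n=B/(c_{\mathrm{prefill}}+Kc_{\mathrm{decode}})$, and minimisation of the resulting $aK+b/K$ expression in $K$. You derive the proxy explicitly via the law of total variance, and you use convexity to justify the $\max\{1,\cdot\}$ clamp; the paper calls the proxy standard and asserts the boundary case directly from the stationary point, so your version is slightly more careful on both counts.
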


A proof is given in Appendix~\ref{app:proofs:budget}.




\subsubsection{Preference data}
Beyond prompts and rollouts, the reward shift error introduces an additional budget consideration. By Lemma~\ref{lem:main:shift}, this term depends on the reward-model training error \(L_{\mathrm{train}}^{(2)}(\phi)\) and the coverage coefficient \(\Ccov(\theta)\). Preference data collection therefore affects the bound in two ways. Increasing relevant preference data can improve reward-model fit on the training distribution, and collecting data closer to the policy-induced distribution can reduce the mismatch captured by \(\Ccov(\theta)\). These observations provide guidance for preference data collection through their effect on the reward shift term, although the present analysis does not derive an explicit allocation rule in terms of the number of preference labels. This implication is most relevant when the sampling terms are no longer the dominant terms in the bound.

\section{Conclusions}

Alignment and adaptation in large language models (LLMs) are now driven by reinforcement learning from human feedback (RLHF), but a rigorous theory of how RLHF generalises is still underdeveloped, particularly when the reward could shift, and a KL clipping regularisation is implemented. To address this gap, we develop generalisation theory for RLHF that explicitly models two key practical effects: (1) distribution shift between the data used to train the reward model and the policy-induced distribution encountered at deployment, and (2) statistical noise introduced by empirical estimation of the clipped KL regulariser. We prove high-probability generalisation bounds that decompose the generalisation error into interpretable components, including sampling error from both prompts and rollouts, reward shift error, and KL clipping error. 
Our theory suggests optimal KL clipping threshold rules, quantitative budget allocation guidance on prompts and rollouts, and guidance for preference data collection through the reward shift term.



\bibliography{reference}

\newpage

\onecolumn

\title{Generalisation of RLHF under Reward Shift and Clipped KL Regularisation\\(Supplementary Material)}
\maketitle

\appendix

\section{Notation}


\newcolumntype{S}{>{\raggedright\arraybackslash}p{0.34\textwidth}} 
\newcolumntype{M}{>{\raggedright\arraybackslash}X}

\begin{table*}[h]
\centering
\caption{Notation}
\begin{tabularx}{\textwidth}{@{} l X @{}}
\toprule
\textbf{Symbol} & \textbf{Meaning} \\
\midrule
$\mathcal{X}, \mathcal{Y}$ & Prompt space and response space. \\
$(x,y)$ & A prompt-response pair. \\
$\rho$ & Prompt distribution used for post-training / evaluation. \\
$\rho_{\mathrm{label}}$ & Prompt distribution used for collecting preference data (reward modelling). \\
$\pi(\cdot\mid x)$ & A policy: conditional distribution over responses given prompt $x$. \\
$\pi_\theta$ & Post-trained policy, parameterised by $\theta$. \\
$\Theta$ & Policy parameter space. \\
$\pi_{\mathrm{ref}}$ & Reference policy (typically an SFT model). \\
$\theta$ & Parameters of the policy $\pi_\theta$. \\
$\Phi$ & Reward-model parameter space. \\
$\phi$ & Parameters of the learned reward model $\hat r_\phi$. \\
$r^\star:\mathcal{X}\times\mathcal{Y}\to[0,1]$ & Target (oracle) reward function. \\
$\hat r_\phi:\mathcal{X}\times\mathcal{Y}\to[0,1]$ & Learned reward model with parameters $\phi$. \\
$e_\phi(x,y)$ & Reward-model error, typically $e_\phi(x,y)=\hat r_\phi(x,y)-r^\star(x,y)$. \\
$D_{\mathrm{train}}$ & Joint distribution for reward-model training, e.g.\ $D_{\mathrm{train}}(x,y)=\rho_{\mathrm{label}}(x)\,\pi_{\mathrm{ref}}(y\mid x)$. \\
$D_\theta$ & Policy-induced joint distribution, $D_\theta(x,y)=\rho(x)\,\pi_\theta(y\mid x)$. \\
$L^{(2)}_{\mathrm{train}}(\phi)$ & Reward-model MSE on $D_{\mathrm{train}}$: $\mathbb{E}_{(X,Y)\sim D_{\mathrm{train}}}[e_\phi(X,Y)^2]$. \\
$\chi^2(D_\theta\|D_{\mathrm{train}})$ & Chi-square divergence measuring coverage / shift from $D_{\mathrm{train}}$ to $D_\theta$. \\
$C(\theta)$ & Coverage coefficient, typically $C(\theta)=\sqrt{1+\chi^2(D_\theta\|D_{\mathrm{train}})}$. \\
$C_{\mathrm{prompt}}$ & Prompt-shift component of coverage (in a factorisation of $C(\theta)$). \\
$C_{\mathrm{pol}}(\theta)$ & Policy-shift component of coverage (in a factorisation of $C(\theta)$). \\
$\beta>0$ & KL-regularisation strength (penalty coefficient). \\
$\ell_\theta(x,y)$ & Log-ratio, $\ell_\theta(x,y)=\log \pi_\theta(y\mid x)-\log \pi_{\mathrm{ref}}(y\mid x)$. \\
$\tau>0$ & Clipping threshold for log-ratios. \\
$\ell_\theta^\tau(x,y)$ & Clipped log-ratio, $\ell_\theta^\tau(x,y)=\mathrm{clip}(\ell_\theta(x,y),-\tau,\tau)$. \\
$\mathrm{KL}(\pi_\theta(\cdot\mid x)\|\pi_{\mathrm{ref}}(\cdot\mid x))$ & Reference KL at prompt $x$ (population expectation of $\ell_\theta(x,Y)$ under $Y\sim\pi_\theta(\cdot\mid x)$). \\
$J_{r}(\theta)$ & Population objective under reward $r$: $\mathbb{E}_{X\sim\rho,Y\sim\pi_\theta(\cdot\mid X)}[r(X,Y)-\beta\,\ell_\theta(X,Y)]$. \\
$J_{r,\tau}(\theta)$ & Clipped population objective: replace $\ell_\theta$ by $\ell_\theta^\tau$ in $J_r(\theta)$. \\
$J^\star(\theta)$ & Target objective, typically $J^\star(\theta)=J_{r^\star}(\theta)$. \\
$J^\phi(\theta)$ & Learned-reward objective, typically $J^\phi(\theta)=J_{\hat r_\phi}(\theta)$. \\
$J^{\phi,\tau}(\theta)$ & Learned-reward clipped objective, typically $J^{\phi,\tau}(\theta)=J_{\hat r_\phi,\tau}(\theta)$. \\
\bottomrule
\end{tabularx}
\label{tab:notation-core}
\end{table*}

\begin{table*}[t]
\centering
\begin{tabularx}{\textwidth}{@{} l X @{}}
\toprule
\textbf{Symbol} & \textbf{Meaning} \\
\midrule
$\widehat J^{r,\tau}_{n,K}(\theta)$ & Empirical objective using $n$ prompts and $K$ rollouts per prompt (reward $r$, clipping $\tau$). \\
$\widehat J^{r,\tau}_{n,\infty}(\theta)$ & Conditional (infinite-rollout) analogue: expectation over rollouts given the $n$ sampled prompts. \\
$n$ & Number of sampled prompts. \\
$K$ & Number of rollouts per prompt. \\
$P$ & Prior distribution over $\Theta$ (PAC-Bayes). \\
$Q$ & Posterior distribution over $\Theta$ (PAC-Bayes). \\
$\mathrm{KL}(Q\|P)$ & PAC-Bayes complexity term. \\
$\delta\in(0,1)$ & Confidence parameter for high-probability bounds. \\
\bottomrule
\end{tabularx}
\end{table*}
\section{Definitions and Lemmas}
\label{app:tools}

\begin{definition}[$\KL$ divergence {\citep{kullback1951}}]
\label{def:app:kl}
Suppose that $P$ is absolutely continuous with respect to $Q$.
The $\KL$ divergence is defined by
\[
\KL(P\|Q)
:=
\int p(x) \log \!\left( \frac{p(x)}{q(x)} \right) dx.
\]
\end{definition}

\begin{lemma}[Hoeffding's inequality {\citep{hoeffding1963probability}}]
\label{lem:app:hoeffding}
Let $Z_1,\dots,Z_N$ be independent random variables.
Assume there exist constants $a\le b$ such that $a\le Z_i\le b$ almost surely for every $i$.
Then, for any $\delta\in(0,1)$, with probability at least $1-\delta$,
\[
\abs{
\frac{1}{N}\sum_{i=1}^{N} Z_i
-
\E\!\left[\frac{1}{N}\sum_{i=1}^{N} Z_i\right]
}
\le
(b-a)\sqrt{\frac{\log(2/\delta)}{2N}}.
\]
\end{lemma}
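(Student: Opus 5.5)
This is Hoeffding's inequality in its standard form, so the plan is the classical Chernoff-bound argument combined with Hoeffding's lemma for bounded variables.

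\emph{Step 1 (Hoeffding's lemma).} Let $Z$ take values in $[a,b]$ with mean $\mu$. We will show $\E[e^{s(Z-\mu)}]\le \exp(s^2(b-a)^2/8)$ for every $s\in\mathbb{R}$. Write $\psi(s)=\log\E[e^{s(Z-\mu)}]$, so that $\psi(0)=\psi'(0)=0$. The second derivative $\psi''(s)$ is the variance of $Z$ under the exponentially tilted law $\mathrm{d}P_s\propto e^{sZ}\mathrm{d}P$. That law is still supported on $[a,b]$, hence its variance is at most $(b-a)^2/4$. Taylor's theorem with remainder then gives $\psi(s)\le s^2(b-a)^2/8$. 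This is the only non-routine step, and the main care point is justifying differentiation under the expectation; this is immediate because $Z$ is bounded.

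\emph{Step 2 (Chernoff bound).} Let $S=\sum_{i=1}^N (Z_i-\E Z_i)$. For $t>0$ and $s>0$, Markov's inequality and independence give
\[
\Pr(S\ge t)\le e^{-st}\prod_{i=1}^N\E\!\left[e^{s(Z_i-\E Z_i)}\right]\le \exp\!\left(-st+\frac{Ns^2(b-a)^2}{8}\right).
\]
Choosing $s=4t/(N(b-a)^2)$ yields $\Pr(S\ge t)\le\exp(-2t^2/(N(b-a)^2))$. Applying the same argument to $-Z_i$, which also lie in an interval of length $b-a$, handles the lower tail. A union bound then gives $\Pr(\abs{S}\ge t)\le 2\exp(-2t^2/(N(b-a)^2))$.

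\emph{Step 3 (inversion).} Set the right-hand side equal to $\delta$. This gives $t=(b-a)\sqrt{N\log(2/\delta)/2}$. Dividing $S$ by $N$, with probability at least $1-\delta$ the deviation of the empirical mean is at most $(b-a)\sqrt{\log(2/\delta)/(2N)}$, which is the stated bound. If $a=b$ the statement is trivial, so we may assume $b>a$ throughout.
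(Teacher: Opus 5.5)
Your argument is correct: it is the standard Chernoff bound combined with Hoeffding's lemma, and the computations in Steps~2 and~3 (the choice $s=4t/(N(b-a)^2)$, the union bound over both tails, and the inversion at level $\delta$) all check out. The paper does not prove this lemma and simply cites Hoeffding's original result, but your Step~1 is exactly its separately stated Lemma~\ref{lem:app:hoeffding_lemma}, and the paper's proofs of Lemmas~\ref{lem:app:pac_prompt} and~\ref{lem:app:pac_rollout} use the same exponential-moment bound with an optimised free parameter, so your route matches the paper's own toolkit.
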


\begin{lemma}[Hoeffding's lemma {\citep{boucheron2013concentration}}]
\label{lem:app:hoeffding_lemma}
Let $Z$ be a random variable and assume $a\le Z\le b$ almost surely.
Then, for any $\lambda\in\mathbb{R}$,
\[
\E\!\left[\exp\bigl(\lambda(Z-\E[Z])\bigr)\right]
\le
\exp\!\left(\frac{\lambda^2(b-a)^2}{8}\right).
\]
\end{lemma}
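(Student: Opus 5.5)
The plan is the classical convexity-plus-Taylor argument. First reduce to the centred case. Set $\mu=\E[Z]$ and $W=Z-\mu$, so $\E[W]=0$ and $a-\mu\le W\le b-\mu$. The interval still has length $b-a$, so it suffices to prove $\E[e^{\lambda W}]\le \exp(\lambda^2(b-a)^2/8)$ for a mean-zero $W$ supported in $[a',b']$ with $a'\le 0\le b'$ and $b'-a'=b-a$. If $a=b$, then $W=0$ almost surely and the claim is trivial, so assume $a'<b'$. For readability, rename $a',b'$ back to $a,b$.

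\emph{Step 1 (convexity).} Since $z\mapsto e^{\lambda z}$ is convex, for every $z\in[a,b]$ we write $z$ as a convex combination of the endpoints:
\[
e^{\lambda z}\le \frac{b-z}{b-a}e^{\lambda a}+\frac{z-a}{b-a}e^{\lambda b}.
\]
Taking expectations and using $\E[W]=0$ gives
\[
\E[e^{\lambda W}]\le \frac{b\,e^{\lambda a}-a\,e^{\lambda b}}{b-a}.
\]

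\emph{Step 2 (reparametrise).} Put $p=-a/(b-a)\in[0,1]$ and $h=\lambda(b-a)$. The right-hand side of Step 1 then equals $e^{L(h)}$, where
\[
L(h)=-hp+\log\bigl(1-p+p e^{h}\bigr).
\]
The term $1-p+pe^h$ is strictly positive, so $L$ is well defined and smooth.

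\emph{Step 3 (Taylor bound).} This is the only step requiring care. Compute $L(0)=0$ and $L'(h)=-p+\frac{pe^h}{1-p+pe^h}$, so $L'(0)=0$. Setting $q(h)=\frac{pe^h}{1-p+pe^h}\in[0,1]$, the second derivative is $L''(h)=q(h)(1-q(h))\le \tfrac14$. Taylor's theorem with Lagrange remainder then gives $L(h)=\tfrac12 L''(\xi)h^2\le h^2/8$ for some $\xi$ between $0$ and $h$. Substituting $h=\lambda(b-a)$ yields $\E[e^{\lambda W}]\le \exp(\lambda^2(b-a)^2/8)$, which is the claim.

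The main obstacle is spotting the reparametrisation in Step 2. Once $L''$ is written as a Bernoulli-type variance $q(1-q)$, the bound of $1/4$ is immediate.
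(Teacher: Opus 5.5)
Your proof is correct and complete. The paper gives no proof of this lemma: it quotes it from Boucheron, Lugosi and Massart and uses it as a black box in the PAC-Bayes lemmas.

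The details all check out. Centring is legitimate because $a\le \E[Z]\le b$, so $p\in[0,1]$. The chord bound gives $\frac{b e^{\lambda a}-a e^{\lambda b}}{b-a}=e^{-ph}(1-p+pe^{h})$. Together with $L(0)=L'(0)=0$ and $L''=q(1-q)\le 1/4$, Taylor's theorem delivers the claim for every real $\lambda$.

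Your route differs from the one in the cited source, which skips the convexity reduction. There one sets $\psi(\lambda)=\log\E[e^{\lambda(Z-\E[Z])}]$ and notes that $\psi''(\lambda)$ is the variance of $Z$ under the exponentially tilted law, whose density is proportional to $e^{\lambda z}$. That variance is at most $(b-a)^2/4$ because the tilted law is still supported on $[a,b]$, and Taylor's theorem then finishes exactly as in your Step 3.

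Your argument first reduces to the worst case, the mean-zero two-point law on $\{a,b\}$, and then does the calculus explicitly. In fact $L$ is the centred log-MGF of a Bernoulli$(p)$ variable, and $q(h)(1-q(h))$ is its variance under the tilted law. So the two proofs share the same core; yours specialises it to the extremal distribution.

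What each buys: your version is fully elementary, with no tilted measures or differentiation under the expectation, and it exhibits the extremal law. The tilted-variance version is shorter, needs no reduction step, and is the form that extends to other log-MGF bounds.
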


\begin{lemma}[Change of measure {\citep{catoni2007pac}}]
\label{lem:app:com}
Let $P$ and $Q$ be distributions on $\Theta$ such that $\KL(Q\|P)<\infty$.
Let $F:\Theta\to\mathbb{R}$ satisfy $\E_{\theta\sim P}[\exp(F(\theta))]<\infty$.
Then,
\[
\E_{\theta\sim Q}[F(\theta)]
\le
\KL(Q\|P)
+
\log \E_{\theta\sim P}\!\left[\exp(F(\theta))\right].
\]
\end{lemma}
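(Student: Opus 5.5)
The plan is to prove the inequality directly by Jensen's inequality applied under $Q$ (equivalently, the Donsker--Varadhan variational argument). Since $\KL(Q\|P)<\infty$, we have $Q\ll P$, so the density $q:=\mathrm{d}Q/\mathrm{d}P$ exists and $\KL(Q\|P)=\E_{\theta\sim Q}[\log q(\theta)]$. Write $Z:=\E_{\theta\sim P}[\exp(F(\theta))]$, which is finite by assumption and strictly positive because $\exp(F)>0$.

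The core computation works on the event $\{q>0\}$, which has $Q$-probability one. We write
\[
\E_{\theta\sim Q}[F(\theta)]-\KL(Q\|P)
=\E_{\theta\sim Q}\!\left[\log\frac{\exp(F(\theta))}{q(\theta)}\right].
\]
Concavity of $\log$ and Jensen's inequality then give an upper bound of
\[
\log\E_{\theta\sim Q}\!\left[\frac{\exp(F(\theta))}{q(\theta)}\right]
=\log\E_{\theta\sim P}\!\left[\exp(F(\theta))\,\mathbf{1}\{q(\theta)>0\}\right]
\le\log Z .
\]
Rearranging yields the claim.

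An equivalent route, which I would mention as a check, introduces the Gibbs measure $G$ with $\mathrm{d}G/\mathrm{d}P=\exp(F)/Z$. A one-line expansion gives
\[
0\le\KL(Q\|G)=\KL(Q\|P)-\E_{\theta\sim Q}[F(\theta)]+\log Z,
\]
which is the same inequality.

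The main obstacle is integrability. The splitting $\E_Q[F-\log q]=\E_Q[F]-\E_Q[\log q]$ requires $\E_Q[F]$ to be well defined, and Jensen's inequality needs the integrand to be quasi-integrable.

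To handle this, I would first treat the truncation $F_M:=\min(F,M)$. This function is bounded above, and $\E_P[\exp(F_M)]\le \E_P[\exp(F)]+\exp(M)<\infty$, so every step above is legitimate. Note that $\E_Q[F_M]$ is either finite or $-\infty$, and in the latter case the bound is trivial. Applying the argument to $F_M$ gives
\[
\E_Q[F_M]\le\KL(Q\|P)+\log\E_P[\exp(F_M)]\le\KL(Q\|P)+\log Z,
\]
where the last step uses $\exp(F_M)\le\exp(F)$.

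Letting $M\to\infty$, monotone convergence applied to the positive parts $F_M^+\uparrow F^+$ shows that $\E_Q[F^+]<\infty$. Hence $\E_Q[F]$ is well defined, possibly equal to $-\infty$. Monotone convergence again gives $\E_Q[F_M]\to\E_Q[F]$, which completes the proof in full generality.
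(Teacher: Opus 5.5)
Your main argument is the same as the paper's. Both rewrite $\E_Q[F]-\KL(Q\|P)$ as $\E_Q[\log(e^F/q)]$ and apply Jensen under $Q$. The paper works with densities against a common reference and writes $\log\E_Q[e^Fp/q]=\log\E_P[e^F]$. Your version, $\E_P[e^F\mathbf{1}\{q>0\}]\le Z$, is the accurate one, and the paper does not discuss integrability at all.

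Your added integrability paragraph does contain one step that fails. For $M\ge 0$ you have $F_M^-=F^-$, so the truncated bound only gives $\E_Q[F_M^+]\le \KL(Q\|P)+\log Z+\E_Q[F^-]$. If $\E_Q[F^-]=\infty$, then $\E_Q[F_M]=-\infty$ for every $M$. In that case the truncated inequalities say nothing about $F^+$, and monotone convergence cannot give $\E_Q[F^+]<\infty$. This is exactly the case where well-definedness of $\E_Q[F]$ is at stake (an $\infty-\infty$ form), so your claim of \emph{full generality} is not justified as written.

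The repair is short, and either of two routes works.
\begin{itemize}
\item Run your truncation on $\min(F^+,M)$ instead. This function is nonnegative with finite $Q$-mean, and $\E_P[e^{\min(F^+,M)}]\le\E_P[e^{F^+}]\le Z+1$. Your argument then gives $\E_Q[\min(F^+,M)]\le\KL(Q\|P)+\log(Z+1)$, and letting $M\to\infty$ gives $\E_Q[F^+]<\infty$.
\item Avoid truncation entirely. Since $\log w\le w$ for $w>0$, you have $(F-\log q)^+\le e^F/q$ on $\{q>0\}$, so $\E_Q[(F-\log q)^+]\le Z$. Also $\log q\in L^1(Q)$, because $\E_Q[(\log q)^-]\le 1/e$ and $\KL(Q\|P)<\infty$. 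Hence $F^+\in L^1(Q)$ and $\E_Q[F]\in[-\infty,\infty)$ is well defined.
\end{itemize}
With either fix, the splitting $\E_Q[F-\log q]=\E_Q[F]-\E_Q[\log q]$ is legitimate and your first computation goes through as stated.
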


\begin{proof}
Let $p$ and $q$ denote densities of $P$ and $Q$ with respect to a common reference.
By definition, $\KL(Q\|P)=\E_Q[\log(q/p)]$.

Start from the identity
\[
\E_Q[F]
=
\E_Q[\log(e^F)].
\]
Insert the density ratio $p/q$ inside the logarithm:
\[
\E_Q[F]
=
\E_Q\!\left[\log\!\left(e^F\frac{p}{q}\right)\right]
+
\E_Q\!\left[\log\!\left(\frac{q}{p}\right)\right].
\]
The second term is exactly $\KL(Q\|P)$.
For the first term, Jensen's inequality gives
\[
\E_Q\!\left[\log\!\left(e^F\frac{p}{q}\right)\right]
\le
\log \E_Q\!\left[e^F\frac{p}{q}\right]
=
\log \E_P[e^F].
\]
Substituting these two relations into the previous display yields
\[
\E_Q[F]
\le
\KL(Q\|P)+\log \E_P[e^F],
\]
which is the claimed inequality.
\end{proof}

\begin{definition}[$\chisq$ divergence {\citep{tsybakov2009}}]
\label{def:app:chisq}
Suppose that $D_\theta$ is absolutely continuous with respect to $D_{\mathrm{train}}$.
The $\chisq$ divergence is defined by
\[
\chisq(D_\theta\|D_{\mathrm{train}})
:=
\E_{(X,Y)\sim D_{\mathrm{train}}}\!\left[\left(\frac{D_\theta(X,Y)}{D_{\mathrm{train}}(X,Y)}-1\right)^2\right].
\]
\end{definition}

\begin{lemma}[$\chisq$ change of measure]
\label{lem:app:chisq_com}
Let $P$ and $Q$ be distributions on a common space and assume $Q\ll P$.
Let $w=\frac{dQ}{dP}$ and assume $\chisq(Q\|P)<\infty$.
If $f$ satisfies $\E_{Z\sim P}[f(Z)^2]<\infty$, we have
\[
\abs{\E_{Z\sim Q}[f(Z)]}
\le
\sqrt{1+\chisq(Q\|P)}\,
\sqrt{\E_{Z\sim P}[f(Z)^2]}.
\]
\end{lemma}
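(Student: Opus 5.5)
The plan is to write the $Q$-expectation as a $P$-expectation through the density ratio $w=\frac{dQ}{dP}$, which exists because $Q\ll P$, and then apply the Cauchy--Schwarz inequality under $P$. Concretely, I would first observe that for any $f$ with $\E_P[|f|\,w]<\infty$ we have
\[
\E_{Z\sim Q}[f(Z)]=\E_{Z\sim P}[w(Z)f(Z)].
\]
Integrability of $wf$ under $P$ will follow from the Cauchy--Schwarz bound itself, once the second moments of both factors are shown to be finite.

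The second step is to identify the second moment of $w$ under $P$. Expanding the square in Definition~\ref{def:app:chisq} gives
\[
\chisq(Q\|P)=\E_P[(w-1)^2]=\E_P[w^2]-2\E_P[w]+1.
\]
Since $\E_P[w]=\int dQ=1$, this simplifies to $\E_P[w^2]=1+\chisq(Q\|P)$, which is finite by assumption.

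Next I would apply the triangle inequality and then Cauchy--Schwarz to obtain
\[
\abs{\E_P[wf]}\le \E_P[|w|\,|f|]\le \sqrt{\E_P[w^2]}\,\sqrt{\E_P[f^2]}.
\]
Substituting $\E_P[w^2]=1+\chisq(Q\|P)$ yields exactly the claimed inequality. This same bound also retroactively justifies that $wf\in L^1(P)$, so the change of measure in the first step is well defined.

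The only delicate point is the measure-theoretic bookkeeping. The definition in the paper is written with densities $D_\theta/D_{\mathrm{train}}$, and I would read it as the Radon--Nikodym derivative $w$. I would also make sure to verify $\E_P[w]=1$ and to use $w\ge 0$, so that $|w|=w$. Beyond that, the argument is a routine application of Cauchy--Schwarz.
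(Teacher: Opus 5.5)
Your proposal is correct and is essentially the paper's proof: rewrite $\E_{Z\sim Q}[f(Z)]$ as $\E_{Z\sim P}[w(Z)f(Z)]$, apply Cauchy--Schwarz, and use $\E_P[w]=1$ to obtain $\E_P[w^2]=1+\chisq(Q\|P)$. Your additional points are correct bookkeeping that the paper leaves implicit: you use $w\ge 0$, and you note that $wf\in L^1(P)$ follows from Cauchy--Schwarz applied to the nonnegative functions $w$ and $\abs{f}$.
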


\begin{proof}
Because $Q\ll P$, the density ratio $w=\frac{dQ}{dP}$ exists and the expectation under $Q$ can be written as
\[
\E_{Z\sim Q}[f(Z)]
=
\E_{Z\sim P}[w(Z)f(Z)].
\]
Applying Cauchy--Schwarz to the right-hand side gives
\[
\abs{\E_{P}[wf]}
\le
\sqrt{\E_{P}[w^2]}\,\sqrt{\E_{P}[f^2]}.
\]
It remains to express $\E_P[w^2]$ in terms of $\chisq(Q\|P)$.
By definition,
\[
\chisq(Q\|P)
=
\E_P[(w-1)^2]
=
\E_P[w^2]-2\E_P[w]+1.
\]
Also $\E_P[w]=1$, since $w=dQ/dP$ integrates to $1$ under $P$.
Substituting $\E_P[w]=1$ into the previous identity yields $\E_P[w^2]=1+\chisq(Q\|P)$.
Plugging this into the Cauchy--Schwarz bound gives
\[
\abs{\E_{Z\sim Q}[f(Z)]}
\le
\sqrt{1+\chisq(Q\|P)}\,\sqrt{\E_{Z\sim P}[f(Z)^2]},
\]
which completes the proof.
\end{proof}

\begin{lemma}[Monte Carlo estimation of the clipped log ratio]
\label{lem:main:kl_mc}
Under the same conditions of Lemma~\ref{lem:main:rollout}, with probability at least $1-\delta$ over the evaluation prompts and rollouts,
\begin{equation}
\label{eq:main:kl_mc}
\begin{aligned}
\left|
\widehat \kappa_{n,K}^{\tau}(\theta)-\kappa^\tau(\theta)
\right|
\le
2\tau
\left(
\sqrt{\frac{\log(4/\delta)}{2n}}
+
\sqrt{\frac{\log(4/\delta)}{2nK}}
\right).
\end{aligned}
\end{equation}
\end{lemma}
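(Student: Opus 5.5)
The plan is to repeat the two-stage argument behind Lemma~\ref{lem:main:stat}, with the reward set to zero and the unit-range reward term removed. The quantities $\widehat \kappa_{n,K}^{\tau}(\theta)$ and $\kappa^\tau(\theta)$ are not defined before the statement, so I take them as the natural ones:
\[
\widehat \kappa_{n,K}^{\tau}(\theta)=\frac{1}{nK}\sum_{i=1}^n\sum_{j=1}^K \ell_\theta^\tau(x_i,y_{i,j}),
\qquad
\kappa^\tau(\theta)=\E_{(X,Y)\sim D_\theta}\!\left[\ell_\theta^\tau(X,Y)\right].
\]
I also introduce the intermediate quantity
\[
\widehat \kappa_{n,\infty}^{\tau}(\theta)=\frac1n\sum_{i=1}^n \E_{Y\sim\pi_\theta(\cdot\mid x_i)}\!\left[\ell_\theta^\tau(x_i,Y)\right],
\]
which is the conditional expectation of $\widehat\kappa_{n,K}^\tau(\theta)$ given $x_{1:n}$. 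The triangle inequality then splits the error into a rollout deviation $|\widehat\kappa_{n,K}^\tau-\widehat\kappa_{n,\infty}^\tau|$ and a prompt deviation $|\widehat\kappa_{n,\infty}^\tau-\kappa^\tau|$.

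\textbf{Rollout deviation.} Condition on $x_{1:n}$. The $nK$ variables $\ell_\theta^\tau(x_i,y_{i,j})$ are independent and lie in $[-\tau,\tau]$ by construction of the clipping, so the range is $2\tau$. Their mean is exactly $\widehat\kappa_{n,\infty}^\tau(\theta)$. Hoeffding's inequality (Lemma~\ref{lem:app:hoeffding}) at level $\delta/2$ gives, with conditional probability at least $1-\delta/2$,
\[
\left|\widehat\kappa_{n,K}^\tau-\widehat\kappa_{n,\infty}^\tau\right|\le 2\tau\sqrt{\frac{\log(4/\delta)}{2nK}}.
\]
This is the argument of Lemma~\ref{lem:main:rollout} with $1+2\beta\tau$ replaced by $2\tau$. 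Because the bound holds for every realisation of the prompts, integrating over $x_{1:n}$ turns it into an unconditional probability statement.

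\textbf{Prompt deviation.} The terms $g(x_i)=\E_{Y\sim\pi_\theta(\cdot\mid x_i)}[\ell_\theta^\tau(x_i,Y)]$ are i.i.d.\ over $x_i\sim\rho$, lie in $[-\tau,\tau]$, and have mean $\kappa^\tau(\theta)$. Hoeffding at level $\delta/2$ gives, with probability at least $1-\delta/2$,
\[
\left|\widehat\kappa_{n,\infty}^\tau-\kappa^\tau\right|\le 2\tau\sqrt{\frac{\log(4/\delta)}{2n}}.
\]

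\textbf{Conclusion and main obstacle.} A union bound over the two events gives total failure probability at most $\delta$, and summing the two bounds yields eq.~\eqref{eq:main:kl_mc}. No integrability assumption on the unclipped $\ell_\theta$ is needed, since only clipped values enter. The one delicate step is making the conditional-then-unconditional combination rigorous. I would handle it exactly as in the proof of Lemma~\ref{lem:main:stat}: write the failure probability as $\E_{x_{1:n}}[\Pr(\text{rollout failure}\mid x_{1:n})]\le\delta/2$, then add the prompt failure probability.
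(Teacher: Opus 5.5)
Your proposal is correct and follows essentially the same route as the paper. The paper uses the same definitions of $\widehat\kappa_{n,K}^{\tau}(\theta)$, $\widehat\kappa_{n,\infty}^{\tau}(\theta)$ and $\kappa^\tau(\theta)$, applies Hoeffding at level $\delta/2$ with range $2\tau$ first to the $nK$ clipped log ratios conditional on the prompts and then to the $n$ prompt-level conditional means, and combines the two by a union bound and the triangle inequality. Your explicit treatment of the conditional-to-unconditional step, averaging the conditional failure probability over $x_{1:n}$, is slightly more careful than the paper's, which leaves that step implicit.
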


\begin{lemma}[KL divergence between Gaussian distributions {\citep{murphy2022pml1}}]
\label{lem:app:ou_gauss_kl}
Let $Q=\mathcal{N}(\mu_Q,\Sigma_Q)$ and $P=\mathcal{N}(\mu_P,\Sigma_P)$ be Gaussian distributions on $\mathbb{R}^d$, where $\Sigma_Q\succ 0$ and $\Sigma_P\succ 0$.
Then,
\begin{equation}
\label{eq:app:ou_gauss_kl}
\begin{aligned}
\KL(Q\|P)
=
\frac12\Big(
\tr(\Sigma_P^{-1}\Sigma_Q)
+
(\mu_Q-\mu_P)^\top\Sigma_P^{-1}(\mu_Q-\mu_P)
-d
+
\log\frac{\det(\Sigma_P)}{\det(\Sigma_Q)}
\Big).
\end{aligned}
\end{equation}
\end{lemma}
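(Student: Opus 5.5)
The statement immediately above is the Gaussian KL formula (Lemma~\ref{lem:app:ou_gauss_kl}). I will prove it by direct computation of $\E_{Z\sim Q}[\log q(Z)-\log p(Z)]$ with the explicit Gaussian densities. Write
\[
\log p(z)=-\tfrac d2\log(2\pi)-\tfrac12\log\det(\Sigma_P)-\tfrac12(z-\mu_P)^\top\Sigma_P^{-1}(z-\mu_P),
\]
and similarly for $q$. In the difference $\log q(z)-\log p(z)$ the normalising constants $(2\pi)^{-d/2}$ cancel. What remains is $\tfrac12\log\frac{\det\Sigma_P}{\det\Sigma_Q}$ together with the difference of two quadratic forms, so the proof reduces to computing two Gaussian expectations of quadratic forms.

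The key tool is the trace identity: for $Z\sim\mathcal N(\mu_Q,\Sigma_Q)$, any symmetric $A$ and any vector $m$,
\[
\E[(Z-m)^\top A(Z-m)]=\tr(A\Sigma_Q)+(\mu_Q-m)^\top A(\mu_Q-m).
\]
I will derive it by writing $Z-m=(Z-\mu_Q)+(\mu_Q-m)$ and expanding. The cross term has zero mean. For the centred term, $\E[(Z-\mu_Q)^\top A(Z-\mu_Q)]=\tr(A\,\E[(Z-\mu_Q)(Z-\mu_Q)^\top])$ by cyclicity of the trace.

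I then apply the identity twice. With $A=\Sigma_Q^{-1}$ and $m=\mu_Q$, the expected $Q$-quadratic form is $\tr(I_d)=d$. With $A=\Sigma_P^{-1}$ and $m=\mu_P$, the expected $P$-quadratic form is $\tr(\Sigma_P^{-1}\Sigma_Q)+(\mu_Q-\mu_P)^\top\Sigma_P^{-1}(\mu_Q-\mu_P)$.

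Assembling the pieces with their signs gives
\[
\tfrac12\bigl(\tr(\Sigma_P^{-1}\Sigma_Q)+(\mu_Q-\mu_P)^\top\Sigma_P^{-1}(\mu_Q-\mu_P)-d+\log\tfrac{\det\Sigma_P}{\det\Sigma_Q}\bigr),
\]
which is eq.~\eqref{eq:app:ou_gauss_kl}. Absolute continuity needed for Definition~\ref{def:app:kl} holds automatically, since both densities are strictly positive on $\mathbb R^d$ when $\Sigma_Q,\Sigma_P\succ0$. Integrability of the quadratic forms under $Q$ follows because Gaussians have finite second moments.

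No step is a real obstacle. The only point requiring care is the sign bookkeeping: the $Q$-quadratic term enters with a minus sign, which produces the $-d$, and the $P$-quadratic term enters with a plus sign. The trace-cyclicity step is the one I will write out explicitly.
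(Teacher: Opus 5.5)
Your proposal is correct. Expanding $\log q-\log p$, cancelling the $(2\pi)^{-d/2}$ constants, and evaluating both quadratic forms with $\E[(Z-m)^\top A(Z-m)]=\tr(A\Sigma_Q)+(\mu_Q-m)^\top A(\mu_Q-m)$ gives exactly eq.~\eqref{eq:app:ou_gauss_kl}. The paper provides no proof of its own and simply cites \citet{murphy2022pml1}, so your direct computation is the standard derivation of the formula it takes from that reference.
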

\section{Proofs}
\label{app:proofs}

\subsection{Error decomposition}
\label{app:proofs:decomp}

\begin{proof}[Proof of Lemma~\ref{prop:main:decomp}]
Let $\theta\in\Theta$ and $\phi\in\Phi$ be arbitrary, and let $\tau>0$ be an arbitrary clipping threshold.
The argument is a purely algebraic decomposition in which two intermediate population objectives are inserted between the empirical surrogate objective and the target objective.

Consider the difference $\widehat J_{n,K}^{\phi,\tau}(\theta)-J^\star(\theta)$.
Add and subtract the intermediate quantities $J^{\phi,\tau}(\theta)$ and $J^\phi(\theta)$ to obtain
\begin{align*}
\widehat J_{n,K}^{\phi,\tau}(\theta)-J^\star(\theta)
={}&
\widehat J_{n,K}^{\phi,\tau}(\theta)-J^{\phi,\tau}(\theta)
+
J^{\phi,\tau}(\theta)-J^\phi(\theta)
+
J^\phi(\theta)-J^\star(\theta).
\end{align*}
Taking absolute values and applying the triangle inequality gives
\begin{align*}
\abs{\widehat J_{n,K}^{\phi,\tau}(\theta)-J^\star(\theta)}
\le{}&
\abs{\widehat J_{n,K}^{\phi,\tau}(\theta)-J^{\phi,\tau}(\theta)}
+
\abs{J^{\phi,\tau}(\theta)-J^\phi(\theta)}
+
\abs{J^\phi(\theta)-J^\star(\theta)}.
\end{align*}
This is exactly the inequality stated in Lemma~\ref{prop:main:decomp}.
\end{proof}

\subsection{Statistical error}
\label{app:proofs:stat}

\begin{proof}[Proof of Lemma~\ref{lem:main:rollout}]
Let $\theta\in\Theta$ be an arbitrary policy parameter.
Let $r:\cX\times\cY\to[0,1]$ be an arbitrary reward function, let $\tau>0$ be an arbitrary clipping threshold, and let $\delta\in(0,1)$ be an arbitrary confidence level.

The goal is to control the Monte Carlo deviation arising from drawing only $K$ rollouts per prompt, while conditioning on the realized prompts.
Let $x_1,\dots,x_n$ denote the realized prompts.
For each $i\in\{1,\dots,n\}$ and each rollout index $j\in\{1,\dots,K\}$, define the per-rollout contribution
\[
Z_{i,j}
:=
r(x_i,y_{i,j})
-
\beta\,\ell_\theta^\tau(x_i,y_{i,j}).
\]
By the definition of the empirical objective, one can rewrite
\[
\widehat J_{n,K}^{r,\tau}(\theta)
=
\frac{1}{nK}\sum_{i=1}^n\sum_{j=1}^K Z_{i,j}.
\]

Next define the conditional expectation of the empirical objective given the prompts.
For each fixed prompt $x_i$, conditional on $x_i$ the rollout $y_{i,j}$ is distributed as $\pi_\theta(\cdot\mid x_i)$, hence
\[
\E[Z_{i,j}\mid x_i]
=
\E_{Y\sim\pi_\theta(\cdot\mid x_i)}[r(x_i,Y)]
-
\beta\,\E_{Y\sim\pi_\theta(\cdot\mid x_i)}[\ell_\theta^\tau(x_i,Y)].
\]
Averaging these conditional expectations over $i$ yields the infinite-rollout analogue
\[
\widehat J_{n,\infty}^{r,\tau}(\theta)
:=
\frac{1}{n}\sum_{i=1}^n \E[Z_{i,1}\mid x_i].
\]
By construction,
\[
\E\!\left[\widehat J_{n,K}^{r,\tau}(\theta)\mid x_{1:n}\right]
=
\widehat J_{n,\infty}^{r,\tau}(\theta).
\]

To apply Hoeffding's inequality, it remains to verify a uniform bound on each $Z_{i,j}$.
Because $r(x_i,y_{i,j})\in[0,1]$ and $\ell_\theta^\tau(x_i,y_{i,j})\in[-\tau,\tau]$, it follows that
\[
-\beta\tau
\le
Z_{i,j}
\le
1+\beta\tau,
\]
so the interval width is $1+2\beta\tau$.

Conditional on the prompts $x_{1:n}$, the rollouts are independent across all index pairs $(i,j)$.
Therefore the collection $\{Z_{i,j}\}_{i\le n,\,j\le K}$ is independent conditional on $x_{1:n}$.
Applying Lemma~\ref{lem:app:hoeffding} to the average of these $nK$ bounded independent random variables, with failure probability $\delta$, gives that with probability at least $1-\delta$ over the rollouts conditional on $x_{1:n}$,
\[
\abs{
\widehat J_{n,K}^{r,\tau}(\theta)
-
\E\!\left[\widehat J_{n,K}^{r,\tau}(\theta)\mid x_{1:n}\right]
}
\le
(1+2\beta\tau)\sqrt{\frac{\log(2/\delta)}{2nK}}.
\]
Replacing the conditional expectation by $\widehat J_{n,\infty}^{r,\tau}(\theta)$ yields
\[
\abs{\widehat J_{n,K}^{r,\tau}(\theta)-\widehat J_{n,\infty}^{r,\tau}(\theta)}
\le
(1+2\beta\tau)\sqrt{\frac{\log(2/\delta)}{2nK}},
\]
which is the conclusion of Lemma~\ref{lem:main:rollout}.
\end{proof}

\begin{proof}[Proof of Lemma~\ref{lem:main:prompt}]
Let $\theta\in\Theta$ be an arbitrary policy parameter.
Let $r:\cX\times\cY\to[0,1]$ be an arbitrary reward function, let $\tau>0$ be an arbitrary clipping threshold, and let $\delta\in(0,1)$ be an arbitrary confidence level.

This lemma controls the deviation due only to sampling finitely many prompts, after taking the conditional expectation over rollouts.
Define, for each prompt $x\in\cX$,
\[
g_\theta^{r,\tau}(x)
=
\E_{Y\sim\pi_\theta(\cdot\mid x)}[r(x,Y)]
-
\beta\,\E_{Y\sim\pi_\theta(\cdot\mid x)}[\ell_\theta^\tau(x,Y)].
\]
Because $r(\cdot,\cdot)\in[0,1]$ and $\ell_\theta^\tau(\cdot,\cdot)\in[-\tau,\tau]$ pointwise, the first expectation lies in $[0,1]$ and the second expectation lies in $[-\tau,\tau]$.
Consequently, for every $x$,
\[
-\beta\tau
\le
g_\theta^{r,\tau}(x)
\le
1+\beta\tau,
\]
so the interval width is again $1+2\beta\tau$.

By definition,
\[
\widehat J_{n,\infty}^{r,\tau}(\theta)
=
\frac{1}{n}\sum_{i=1}^n g_\theta^{r,\tau}(x_i),
\qquad
J^{r,\tau}(\theta)
=
\E_{X\sim\rho}[g_\theta^{r,\tau}(X)].
\]
Since $x_1,\dots,x_n$ are independent draws from $\rho$, the sequence $g_\theta^{r,\tau}(x_1),\dots,g_\theta^{r,\tau}(x_n)$ consists of i.i.d.\ random variables bounded in an interval of width $1+2\beta\tau$.
Applying Lemma~\ref{lem:app:hoeffding} with $N=n$ and failure probability $\delta$ yields that with probability at least $1-\delta$ over the prompts,
\[
\abs{\widehat J_{n,\infty}^{r,\tau}(\theta)-J^{r,\tau}(\theta)}
\le
(1+2\beta\tau)\sqrt{\frac{\log(2/\delta)}{2n}}.
\]
This is precisely the statement of Lemma~\ref{lem:main:prompt}.
\end{proof}

\begin{proof}[Proof of Lemma~\ref{lem:main:stat}]
Let $\theta\in\Theta$ be an arbitrary policy parameter.
Let $r:\cX\times\cY\to[0,1]$ be an arbitrary reward function, let $\tau>0$ be an arbitrary clipping threshold, and let $\delta\in(0,1)$ be an arbitrary confidence level.

The proof combines the two previous concentration statements by enforcing that they hold on a common high-probability event, and then applying a triangle inequality.

Define the rollout concentration event
\[
\mathcal E_{\mathrm{roll}}
:=
\left\{
\abs{\widehat J_{n,K}^{r,\tau}(\theta)-\widehat J_{n,\infty}^{r,\tau}(\theta)}
\le
(1+2\beta\tau)\sqrt{\frac{\log(4/\delta)}{2nK}}
\right\}.
\]
Lemma~\ref{lem:main:rollout} applied with confidence parameter $\delta/2$ implies that, conditional on $x_{1:n}$,
\[
\Pr(\mathcal E_{\mathrm{roll}}\mid x_{1:n})
\ge
1-\delta/2.
\]

Define the prompt concentration event
\[
\mathcal E_{\mathrm{prompt}}
:=
\left\{
\abs{\widehat J_{n,\infty}^{r,\tau}(\theta)-J^{r,\tau}(\theta)}
\le
(1+2\beta\tau)\sqrt{\frac{\log(4/\delta)}{2n}}
\right\}.
\]
Lemma~\ref{lem:main:prompt} applied with confidence parameter $\delta/2$ yields
\[
\Pr(\mathcal E_{\mathrm{prompt}})
\ge
1-\delta/2.
\]

Let $\mathcal E_{\mathrm{stat}}:=\mathcal E_{\mathrm{roll}}\cap\mathcal E_{\mathrm{prompt}}$.
By the union bound,
\[
\Pr(\mathcal E_{\mathrm{stat}})
\ge
1-\delta.
\]

Assume that $\mathcal E_{\mathrm{stat}}$ holds.
Then, the triangle inequality gives
\begin{align*}
\abs{\widehat J_{n,K}^{r,\tau}(\theta)-J^{r,\tau}(\theta)}
\le{}&
\abs{\widehat J_{n,K}^{r,\tau}(\theta)-\widehat J_{n,\infty}^{r,\tau}(\theta)}
+
\abs{\widehat J_{n,\infty}^{r,\tau}(\theta)-J^{r,\tau}(\theta)} \\
\le{}&
(1+2\beta\tau)
\left(
\sqrt{\frac{\log(4/\delta)}{2n}}
+
\sqrt{\frac{\log(4/\delta)}{2nK}}
\right),
\end{align*}
which is exactly the inequality claimed in Lemma~\ref{lem:main:stat}.
\end{proof}

\begin{proof}[Proof of Lemma~\ref{lem:main:kl_mc}]
Let $\theta\in\Theta$ be an arbitrary policy parameter, let $\tau>0$ be an arbitrary clipping threshold, and let $\delta\in(0,1)$ be an arbitrary confidence level.
Recall that $x_1,\dots,x_n$ are independent draws from $\rho$, and that, conditional on each $x_i$, the rollouts $y_{i,1},\dots,y_{i,K}$ are independent draws from $\pi_\theta(\cdot\mid x_i)$.
Define the per-rollout clipped log ratio
\[
Z_{i,j}
:=
\ell_\theta^\tau(x_i,y_{i,j}),
\]
so that, by the definition of $\widehat \kappa_{n,K}^{\tau}(\theta)$,
\[
\widehat \kappa_{n,K}^{\tau}(\theta)
=
\frac{1}{nK}\sum_{i=1}^n\sum_{j=1}^K Z_{i,j}.
\]
Because $\ell_\theta^\tau(x,y)=\clip(\ell_\theta(x,y),-\tau,\tau)$ by definition, it follows that $Z_{i,j}\in[-\tau,\tau]$ almost surely for all $(i,j)$, and therefore each $Z_{i,j}$ is bounded in an interval of width $2\tau$.

To make the two-stage sampling structure explicit, introduce the conditional infinite-rollout analogue
\[
\widehat \kappa_{n,\infty}^{\tau}(\theta)
:=
\frac{1}{n}\sum_{i=1}^n \E\!\left[ Z_{i,1}\mid x_i \right]
=
\frac{1}{n}\sum_{i=1}^n \E_{Y\sim\pi_\theta(\cdot\mid x_i)}\!\left[\ell_\theta^\tau(x_i,Y)\right].
\]
By construction, conditional on the realized prompts $x_{1:n}$, the random variables $\{Z_{i,j}\}_{i\le n,\,j\le K}$ are independent, and moreover
\[
\E\!\left[\widehat \kappa_{n,K}^{\tau}(\theta)\mid x_{1:n}\right]
=
\widehat \kappa_{n,\infty}^{\tau}(\theta).
\]
Applying Lemma~\ref{lem:app:hoeffding} to the average of the $nK$ bounded independent random variables $\{Z_{i,j}\}$, conditional on $x_{1:n}$ and with failure probability $\delta/2$, yields that with probability at least $1-\delta/2$ over the rollouts conditional on $x_{1:n}$,
\[
\left|
\widehat \kappa_{n,K}^{\tau}(\theta)-\widehat \kappa_{n,\infty}^{\tau}(\theta)
\right|
\le
2\tau\sqrt{\frac{\log(4/\delta)}{2nK}}.
\]

It remains to control the deviation due to sampling only finitely many prompts.
Define the prompt-level functional
\[
h_\theta^\tau(x)
:=
\E_{Y\sim\pi_\theta(\cdot\mid x)}\!\left[\ell_\theta^\tau(x,Y)\right].
\]
Since $\ell_\theta^\tau(x,Y)\in[-\tau,\tau]$ almost surely under $Y\sim\pi_\theta(\cdot\mid x)$, it follows that $h_\theta^\tau(x)\in[-\tau,\tau]$ for every $x$, and thus $h_\theta^\tau(X)$ is bounded in an interval of width $2\tau$ when $X\sim\rho$.
By the definition of $\widehat \kappa_{n,\infty}^{\tau}(\theta)$,
\[
\widehat \kappa_{n,\infty}^{\tau}(\theta)
=
\frac{1}{n}\sum_{i=1}^n h_\theta^\tau(x_i).
\]
Moreover, by the definition of $D_\theta(x,y)=\rho(x)\pi_\theta(y\mid x)$, the clipped population average can be written as
\[
\kappa^\tau(\theta)
=
\E_{(X,Y)\sim D_\theta}\!\left[\ell_\theta^\tau(X,Y)\right]
=
\E_{X\sim\rho}\!\left[h_\theta^\tau(X)\right].
\]
Since $x_1,\dots,x_n$ are independent draws from $\rho$, the sequence $h_\theta^\tau(x_1),\dots,h_\theta^\tau(x_n)$ consists of i.i.d.\ random variables bounded in an interval of width $2\tau$.
Applying Lemma~\ref{lem:app:hoeffding} with $N=n$ and failure probability $\delta/2$ yields that with probability at least $1-\delta/2$ over the prompts,
\[
\left|
\widehat \kappa_{n,\infty}^{\tau}(\theta)-\kappa^\tau(\theta)
\right|
\le
2\tau\sqrt{\frac{\log(4/\delta)}{2n}}.
\]

Finally, consider the event on which both of the preceding inequalities hold.
By the union bound, this event has probability at least $1-\delta$ over the joint draw of prompts and rollouts.
On this event, the triangle inequality implies
\begin{align*}
\left|
\widehat \kappa_{n,K}^{\tau}(\theta)-\kappa^\tau(\theta)
\right|
&\le
\left|
\widehat \kappa_{n,K}^{\tau}(\theta)-\widehat \kappa_{n,\infty}^{\tau}(\theta)
\right|
+
\left|
\widehat \kappa_{n,\infty}^{\tau}(\theta)-\kappa^\tau(\theta)
\right| \\
&\le
2\tau
\left(
\sqrt{\frac{\log(4/\delta)}{2n}}
+
\sqrt{\frac{\log(4/\delta)}{2nK}}
\right),
\end{align*}
which is exactly the claimed bound in \eqref{eq:main:kl_mc}.
\end{proof}

\subsection{Reward shift and surrogate bias}
\label{app:proofs:shiftbias}

\begin{proof}[Proof of Lemma~\ref{lem:main:shift}]
Let $\theta\in\Theta$ and $\phi\in\Phi$ be arbitrary parameters.
The proof begins by expressing the objective gap as an expectation of reward-model error under the deployment distribution, and then transferring this expectation back to the reward-model training distribution via a density ratio.

By definition,
\[
J^\phi(\theta)
=
\E_{X\sim\rho}\E_{Y\sim\pi_\theta(\cdot\mid X)}[\hat r_\phi(X,Y)]
-
\beta\,\E_{X\sim\rho}\KL(\pi_\theta(\cdot\mid X)\|\piref(\cdot\mid X)),
\]
and
\[
J^\star(\theta)
=
\E_{X\sim\rho}\E_{Y\sim\pi_\theta(\cdot\mid X)}[r^\star(X,Y)]
-
\beta\,\E_{X\sim\rho}\KL(\pi_\theta(\cdot\mid X)\|\piref(\cdot\mid X)).
\]
The KL regularization terms coincide, so they cancel after subtraction, giving
\[
J^\phi(\theta)-J^\star(\theta)
=
\E_{X\sim\rho}\E_{Y\sim\pi_\theta(\cdot\mid X)}[\hat r_\phi(X,Y)-r^\star(X,Y)].
\]
Introduce the pointwise reward-model error $e_\phi(x,y)=\hat r_\phi(x,y)-r^\star(x,y)$.
Using the joint distribution $\Dtheta(x,y)=\rho(x)\pi_\theta(y\mid x)$, the preceding display can be rewritten as
\[
J^\phi(\theta)-J^\star(\theta)
=
\E_{(X,Y)\sim\Dtheta}[e_\phi(X,Y)].
\]

Assume that $\Dtheta\ll\Drm$ and define the density ratio
\[
w_\theta(x,y)
:=
\frac{\Dtheta(x,y)}{\Drm(x,y)}.
\]
Then, the expectation under $\Dtheta$ can be written under $\Drm$ as
\[
\E_{(X,Y)\sim\Dtheta}[e_\phi(X,Y)]
=
\E_{(X,Y)\sim\Drm}[w_\theta(X,Y)e_\phi(X,Y)].
\]
Applying Cauchy--Schwarz yields
\[
\abs{\E_{\Drm}[w_\theta e_\phi]}
\le
\sqrt{\E_{\Drm}[w_\theta^2]}\,
\sqrt{\E_{\Drm}[e_\phi^2]}.
\]

The second factor is exactly $\sqrt{L_{\mathrm{train}}^{(2)}(\phi)}$ by the definition of $L_{\mathrm{train}}^{(2)}(\phi)$.
For the first factor, note that $\E_{\Drm}[w_\theta]=1$ and
\[
\chisq(\Dtheta\|\Drm)
=
\E_{\Drm}\bigl[(w_\theta-1)^2\bigr]
=
\E_{\Drm}[w_\theta^2]-1.
\]
Consequently, $\E_{\Drm}[w_\theta^2]=1+\chisq(\Dtheta\|\Drm)$.
Substituting these identities into the Cauchy--Schwarz bound gives
\[
\abs{J^\phi(\theta)-J^\star(\theta)}
\le
\sqrt{1+\chisq(\Dtheta\|\Drm)}\,
\sqrt{L_{\mathrm{train}}^{(2)}(\phi)}.
\]
By the definition of $\Ccov(\theta)$ in eq. \eqref{eq:pre:ccov}, this is
\[
\abs{J^\phi(\theta)-J^\star(\theta)}
\le
\Ccov(\theta)\,\sqrt{L_{\mathrm{train}}^{(2)}(\phi)},
\]
which is the statement of Lemma~\ref{lem:main:shift}.
\end{proof}

\begin{proof}[Proof of Lemma~\ref{lem:main:cov_factor}]
Let $\theta\in\Theta$ be arbitrary.
Assume that $\rho\ll\rholabel$ and that $\pi_\theta(\cdot\mid x)\ll\piref(\cdot\mid x)$ for every $x$ with $\rholabel(x)>0$.
Under these conditions, $\Dtheta\ll\Drm$ holds and the density ratio
\[
w_\theta(x,y)
:=
\frac{\Dtheta(x,y)}{\Drm(x,y)}
\]
is well defined on the support of $\Drm$.

By definition,
\[
\Ccov(\theta)^2
=
1+\chisq(\Dtheta\|\Drm)
=
\E_{(X,Y)\sim\Drm}\bigl[w_\theta(X,Y)^2\bigr].
\]
Using $\Drm(x,y)=\rholabel(x)\piref(y\mid x)$ and $\Dtheta(x,y)=\rho(x)\pi_\theta(y\mid x)$, one obtains the factorization
\[
w_\theta(x,y)
=
\frac{\rho(x)}{\rholabel(x)}
\cdot
\frac{\pi_\theta(y\mid x)}{\piref(y\mid x)}.
\]
Substituting this expression into the definition of $\Ccov(\theta)^2$ and taking expectation under $\Drm$ yields
\[
\Ccov(\theta)^2
=
\E_{X\sim\rholabel}
\left[
\left(\frac{\rho(X)}{\rholabel(X)}\right)^2
\E_{Y\sim\piref(\cdot\mid X)}
\left[
\left(\frac{\pi_\theta(Y\mid X)}{\piref(Y\mid X)}\right)^2
\right]
\right].
\]
By the definition of $\Cpol(\theta)$, the inner expectation is bounded above by $\Cpol(\theta)^2$ for each $x$ in the support of $\rholabel$.
Therefore,
\[
\Ccov(\theta)^2
\le
\Cpol(\theta)^2
\E_{X\sim\rholabel}
\left[
\left(\frac{\rho(X)}{\rholabel(X)}\right)^2
\right]
=
\Cpol(\theta)^2\,\Cprompt^2.
\]
Taking square roots yields $\Ccov(\theta)\le \Cprompt\,\Cpol(\theta)$.
\end{proof}

\begin{proof}[Proof of Lemma~\ref{lem:main:bias}]
Let $\theta\in\Theta$ and $\phi\in\Phi$ be arbitrary parameters, and let $\tau>0$ be an arbitrary clipping threshold.
The argument is an identity at the level of population objectives, followed by a standard absolute-value bound.

By definition of the clipped objective,
\[
J^{\phi,\tau}(\theta)
=
\E_{X\sim\rho}\E_{Y\sim\pi_\theta(\cdot\mid X)}[\hat r_\phi(X,Y)]
-
\beta\,\E_{X\sim\rho}\E_{Y\sim\pi_\theta(\cdot\mid X)}[\ell_\theta^\tau(X,Y)].
\]
Using $\Dtheta(x,y)=\rho(x)\pi_\theta(y\mid x)$, this can be written as
\[
J^{\phi,\tau}(\theta)
=
\E_{(X,Y)\sim\Dtheta}[\hat r_\phi(X,Y)]
-
\beta\,\E_{(X,Y)\sim\Dtheta}[\ell_\theta^\tau(X,Y)].
\]

For the exact objective, recall that
\[
\KL(\pi_\theta(\cdot\mid x)\|\piref(\cdot\mid x))
=
\E_{Y\sim\pi_\theta(\cdot\mid x)}[\ell_\theta(x,Y)].
\]
Substituting this identity into the definition of $J^\phi(\theta)$ yields
\[
J^\phi(\theta)
=
\E_{(X,Y)\sim\Dtheta}[\hat r_\phi(X,Y)]
-
\beta\,\E_{(X,Y)\sim\Dtheta}[\ell_\theta(X,Y)].
\]

Subtracting the two displays gives the exact identity
\[
J^{\phi,\tau}(\theta)-J^\phi(\theta)
=
\beta\,\E_{(X,Y)\sim\Dtheta}\bigl[\ell_\theta(X,Y)-\ell_\theta^\tau(X,Y)\bigr].
\]
Taking absolute values and using $\abs{\E[U]}\le \E[\abs{U}]$ yields
\[
\abs{J^{\phi,\tau}(\theta)-J^\phi(\theta)}
\le
\beta\,\E_{(X,Y)\sim\Dtheta}\bigl[\abs{\ell_\theta(X,Y)-\ell_\theta^\tau(X,Y)}\bigr],
\]
which is precisely the inequality asserted in Lemma~\ref{lem:main:bias}.
\end{proof}

\subsection{Unified fixed-policy bound}
\label{app:proofs:fixed}

\begin{proof}[Proof of Theorem~\ref{thm:main:fixed}]
Let $\theta\in\Theta$ and $\phi\in\Phi$ be arbitrary, and let $\tau>0$ and $\delta\in(0,1)$ be arbitrary.
Assume the conditions stated in Theorem~\ref{thm:main:fixed}, so that Lemmas~\ref{lem:main:stat}, \ref{lem:main:shift}, and \ref{lem:main:bias} are applicable.

Lemma~\ref{prop:main:decomp} provides the deterministic decomposition
\begin{align*}
\abs{\widehat J_{n,K}^{\phi,\tau}(\theta)-J^\star(\theta)}
\le{}&
\abs{\widehat J_{n,K}^{\phi,\tau}(\theta)-J^{\phi,\tau}(\theta)}
+
\abs{J^{\phi,\tau}(\theta)-J^\phi(\theta)}
+
\abs{J^\phi(\theta)-J^\star(\theta)}.
\end{align*}

To control the first term, apply Lemma~\ref{lem:main:stat} with $r=\hat r_\phi$.
With probability at least $1-\delta$ over the evaluation prompts and rollouts,
\[
\abs{\widehat J_{n,K}^{\phi,\tau}(\theta)-J^{\phi,\tau}(\theta)}
\le
(1+2\beta\tau)
\left(
\sqrt{\frac{\log(4/\delta)}{2n}}
+
\sqrt{\frac{\log(4/\delta)}{2nK}}
\right).
\]

The remaining two terms are controlled deterministically.
Lemma~\ref{lem:main:bias} gives
\[
\abs{J^{\phi,\tau}(\theta)-J^\phi(\theta)}
\le
\beta\,
\E_{(X,Y)\sim \Dtheta}
\!\left[
\abs{\ell_\theta(X,Y)-\ell_\theta^\tau(X,Y)}
\right],
\]
and Lemma~\ref{lem:main:shift} gives
\[
\abs{J^\phi(\theta)-J^\star(\theta)}
\le
\Ccov(\theta)\,\sqrt{L_{\mathrm{train}}^{(2)}(\phi)}.
\]

Substituting these three bounds into the decomposition yields the inequality stated in Theorem~\ref{thm:main:fixed}.
\end{proof}

\begin{proof}[Proof of Corollary~\ref{cor:opt_tau}]
Let $\theta\in\Theta$ be an arbitrary policy parameter, let $\phi\in\Phi$ be an arbitrary reward-model parameter, let $\beta>0$ be an arbitrary regularization coefficient, let $\delta\in(0,1)$ be an arbitrary confidence level, and let $n\ge 1$ and $K\ge 1$ be arbitrary integers.
Define
\[
\alpha_{n,K,\delta}
:=
\sqrt{\frac{\log(4/\delta)}{2n}}+\sqrt{\frac{\log(4/\delta)}{2nK}},
\qquad
B_\theta(\tau)
:=
(1+2\beta\tau)\alpha_{n,K,\delta}+\beta T_\theta(\tau),
\]
where
\[
T_\theta(\tau)
:=
\E_{(X,Y)\sim D_\theta}\big[(|\ell_\theta(X,Y)|-\tau)_+\big].
\]
Let $(X,Y)\sim D_\theta$ and define the nonnegative random variable $Z:=|\ell_\theta(X,Y)|$.
With this notation one has $T_\theta(\tau)=\E[(Z-\tau)_+]$, so the function of interest can be written as
\[
B_\theta(\tau)=(1+2\beta\tau)\alpha_{n,K,\delta}+\beta\,\E[(Z-\tau)_+].
\]

The next step is to relate the one-sided derivatives of $\tau\mapsto \E[(Z-\tau)_+]$ to the tail probabilities of $Z$.
For every $z\ge 0$ and every $\tau\ge 0$, the identity
\[
(z-\tau)_+ \;=\; \int_\tau^\infty \mathbf{1}\{z>t\}\,dt
\]
holds, because the integrand equals $1$ precisely on the interval $t\in[\tau,z)$ when $z>\tau$, and otherwise it is identically zero.
Applying this identity with $z=Z$ and using Tonelli's theorem, which is applicable because the integrand is nonnegative, yields the representation
\[
\E[(Z-\tau)_+]
=
\int_\tau^\infty \Pr(Z>t)\,dt.
\]

Let $\tau\ge 0$ and let $h>0$.
Using the integral representation at $\tau$ and at $\tau+h$ gives
\[
\E[(Z-(\tau+h))_+] - \E[(Z-\tau)_+]
=
-\int_\tau^{\tau+h}\Pr(Z>t)\,dt.
\]
Since the function $t\mapsto \Pr(Z>t)$ is nonincreasing, one has
\[
h\,\Pr(Z>\tau+h)\;\le\;\int_\tau^{\tau+h}\Pr(Z>t)\,dt\;\le\;h\,\Pr(Z>\tau).
\]
Dividing by $h$ and combining with the previous display yields
\[
-\Pr(Z>\tau)\;\le\;\frac{\E[(Z-(\tau+h))_+] - \E[(Z-\tau)_+]}{h}\;\le\;-\Pr(Z>\tau+h).
\]
Letting $h\downarrow 0$ and using the monotone convergence $\Pr(Z>\tau+h)\to\Pr(Z>\tau)$ yields the right derivative identity
\[
\frac{d}{d\tau^+}\E[(Z-\tau)_+] \;=\; -\Pr(Z>\tau).
\]

Let $\tau>0$ and let $h\in(0,\tau)$.
Using the integral representation at $\tau$ and at $\tau-h$ gives
\[
\E[(Z-\tau)_+] - \E[(Z-(\tau-h))_+]
=
-\int_{\tau-h}^{\tau}\Pr(Z>t)\,dt.
\]
Since $t\mapsto \Pr(Z>t)$ is nonincreasing, one has
\[
h\,\Pr(Z>\tau)\;\le\;\int_{\tau-h}^{\tau}\Pr(Z>t)\,dt\;\le\;h\,\Pr(Z>\tau-h).
\]
Dividing by $h$ and combining with the previous display yields
\[
-\Pr(Z>\tau-h)\;\le\;\frac{\E[(Z-\tau)_+] - \E[(Z-(\tau-h))_+]}{h}\;\le\;-\Pr(Z>\tau).
\]
Letting $h\downarrow 0$ and using the monotone convergence $\Pr(Z>\tau-h)\to\Pr(Z\ge\tau)$ yields the left derivative identity
\[
\frac{d}{d\tau^-}\E[(Z-\tau)_+] \;=\; -\Pr(Z\ge\tau).
\]

It now follows that $B_\theta$ has one-sided derivatives for every $\tau\ge 0$, and these derivatives satisfy
\[
B_\theta'(\tau^+)
=
2\beta\alpha_{n,K,\delta}-\beta\Pr(Z>\tau),
\qquad
B_\theta'(\tau^-)
=
2\beta\alpha_{n,K,\delta}-\beta\Pr(Z\ge\tau)
\quad\text{for every }\tau>0.
\]

Let $\tau^\star$ be any minimizer of $\tau\mapsto B_\theta(\tau)$ over $\tau\ge 0$.
If $\tau^\star>0$, the minimality of $\tau^\star$ implies that the left derivative is nonpositive and the right derivative is nonnegative, so $B_\theta'((\tau^\star)^-)\le 0\le B_\theta'((\tau^\star)^+)$ holds.
Substituting the one-sided derivative expressions yields
\[
\Pr(Z>\tau^\star)\;\le\;2\alpha_{n,K,\delta}\;\le\;\Pr(Z\ge\tau^\star).
\]
If $\tau^\star=0$, the minimality of $\tau^\star$ implies $0\le B_\theta'(0^+)$, and therefore $\Pr(Z>0)\le 2\alpha_{n,K,\delta}$ holds.
If $2\alpha_{n,K,\delta}<1$, the inequality $2\alpha_{n,K,\delta}\le \Pr(Z\ge 0)=1$ holds as well, and this yields the same two-sided condition with $\tau^\star=0$.

Finally, if $2\alpha_{n,K,\delta}\ge 1$, for every $\tau>0$, one has
\[
B_\theta'(\tau^-)
=
2\beta\alpha_{n,K,\delta}-\beta\Pr(Z\ge\tau)
\ge
2\beta\alpha_{n,K,\delta}-\beta
\ge
0,
\]
and therefore $B_\theta$ is nondecreasing on $(0,\infty)$, which implies that $\tau^\star=0$ is a minimizer over $\tau\ge 0$.
Recalling that $Z=|\ell_\theta(X,Y)|$ with $(X,Y)\sim D_\theta$, the stated conditions are exactly
\[
\Pr_{(X,Y)\sim D_\theta}\!\big(|\ell_\theta(X,Y)|>\tau^\star\big)
\;\le\;
2\alpha_{n,K,\delta}
\;\le\;
\Pr_{(X,Y)\sim D_\theta}\!\big(|\ell_\theta(X,Y)|\ge\tau^\star\big),
\]
and when $\Pr(Z=\tau^\star)=0$ the two inequalities collapse to the equality $\Pr(Z>\tau^\star)=2\alpha_{n,K,\delta}$, which is equivalent to the quantile statement.
\end{proof}

\subsection{PAC-Bayes auxiliary bounds}
\label{app:proofs:pac_aux}

\begin{lemma}[PAC-Bayes bound for prompt sampling {\citep{mcallester1999modelavg,seeger2002pac}}]
\label{lem:app:pac_prompt}
Let $P$ be a prior distribution on $\Theta$, let $\tau>0$ and $\delta\in(0,1)$ be given, and let $r:\cX\times\cY\to[0,1]$ be a given reward function.
With probability at least $1-\delta$ over $x_1,\dots,x_n\sim\rho$, the following inequality holds simultaneously for all posteriors $Q$ on $\Theta$:
\[
\abs{J^{r,\tau}(Q)-\widehat J_{n,\infty}^{r,\tau}(Q)}
\le
(1+2\beta\tau)\sqrt{\frac{\KL(Q\|P)+\log(4/\delta)}{2n}}.
\]
\end{lemma}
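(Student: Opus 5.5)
The plan is the standard McAllester/Catoni exponential-moment argument, applied to the prompt-level functional $g_\theta^{r,\tau}$ introduced in the proof of Lemma~\ref{lem:main:prompt}, with a two-sided union bound at the end. Throughout, write
\[
\Delta(\theta):=J^{r,\tau}(\theta)-\widehat J_{n,\infty}^{r,\tau}(\theta)
=\E_{X\sim\rho}[g_\theta^{r,\tau}(X)]-\frac1n\sum_{i=1}^n g_\theta^{r,\tau}(x_i).
\]
By linearity of expectation in $\theta$, $J^{r,\tau}(Q)-\widehat J_{n,\infty}^{r,\tau}(Q)=\E_{\theta\sim Q}[\Delta(\theta)]$, so it suffices to control $\E_Q[\Delta]$ uniformly in $Q$.

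\emph{Step 1 (moment generating function for a fixed $\theta$).} As in the proof of Lemma~\ref{lem:main:prompt}, each $g_\theta^{r,\tau}(x_i)$ lies in an interval of width $c:=1+2\beta\tau$, and the $x_i$ are i.i.d.\ from $\rho$. Applying Hoeffding's lemma (Lemma~\ref{lem:app:hoeffding_lemma}) to each summand and using independence gives, for every $\lambda>0$,
\[
\E_{x_{1:n}}\big[\exp(\lambda\Delta(\theta))\big]\le\exp\big(\lambda^2c^2/(8n)\big).
\]

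\emph{Step 2 (transfer to all posteriors).} Since $P$ is independent of the prompts, Tonelli's theorem lets us exchange expectations:
\[
\E_{x_{1:n}}\E_{\theta\sim P}\big[e^{\lambda\Delta(\theta)}\big]\le e^{\lambda^2c^2/(8n)}.
\]
Markov's inequality then shows that, with probability at least $1-\delta/2$,
\[
\E_P\big[e^{\lambda\Delta}\big]\le (2/\delta)\,e^{\lambda^2c^2/(8n)}.
\]
On this event, the change-of-measure inequality (Lemma~\ref{lem:app:com}) with $F=\lambda\Delta$ gives, simultaneously for all $Q$,
\[
\lambda\,\E_Q[\Delta]\le \KL(Q\|P)+\log(2/\delta)+\lambda^2c^2/(8n).
\]
Repeating the argument with $-\Delta$ and taking a union bound yields the same inequality for $|\E_Q[\Delta]|$ with probability at least $1-\delta$.

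\emph{Step 3 (choice of $\lambda$).} This is the main obstacle. Optimising gives $\lambda=\sqrt{8n(\KL(Q\|P)+\log(2/\delta))}/c$ and hence the bound $c\sqrt{(\KL(Q\|P)+\log(2/\delta))/(2n)}$, which is even slightly better than the stated $\log(4/\delta)$. However, this $\lambda$ depends on $Q$, whereas $\lambda$ must be fixed before the data are seen. I would handle this with the standard grid/stratification over $\lambda$: take a geometric grid $\lambda_k$, indexed by the integer level of $\KL(Q\|P)$, allocate failure probability $\delta_k\propto\delta 2^{-k}$ to each grid point, union bound over $k$, and for each $Q$ pick the grid point nearest the optimum. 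The resulting slack in the logarithmic term is what the extra factor (replacing $\log(2/\delta)$ by $\log(4/\delta)$) should absorb. I expect tracking the constants so that they fit exactly into $\log(4/\delta)$ to be delicate; if they do not, I would fall back on Maurer's $\sqrt{n}$-type bound via the $\mathrm{kl}$ inequality for $[0,1]$-rescaled losses, combined with Pinsker's inequality. That route gives a $\log(2\sqrt n/\delta)$ term and reflects the usual PAC-Bayes cost of uniformity over $Q$.
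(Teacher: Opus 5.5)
\textbf{Comparison with the paper.} Your Steps 1--2 coincide with the paper's proof: Hoeffding's lemma for $g_\theta^{r,\tau}$, averaging over $\theta\sim P$ with Markov's inequality at level $\delta/2$, the change-of-measure inequality (Lemma~\ref{lem:app:com}) with $F=\lambda\Delta$, and a union bound over the two signs. At exactly the point where you stop, the paper simply writes ``optimizing over $\lambda>0$'' on that Markov event. Your objection to this is correct. The event was built for one fixed $\lambda$, while the optimiser $\lambda^\star=\sqrt{8n(\KL(Q\|P)+\log(2/\delta))}/c$ depends on $Q$. So this step is not valid for a bound claimed simultaneously for all $Q$, and the paper's argument does not close it. The missing piece therefore has to come from your Step~3, and it does not.

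\textbf{The gap in Step~3.} Your grid cannot be made to fit into $\log(4/\delta)$ by careful bookkeeping; the obstruction is structural. Write $a:=c^2/(2n)$ and give grid point $\lambda_k$ weight $\pi_k$, with $\sum_k\pi_k\le 1$ and failure probability $\pi_k\delta/2$ per sign. On the resulting event,
\[
\E_Q[\Delta]\le\frac{\KL(Q\|P)+\log(2/\delta)+\log(1/\pi_k)}{\lambda_k}+\frac{\lambda_k a}{4}.
\]
By AM--GM the right-hand side is at least $\sqrt{a(\KL(Q\|P)+\log(2/\delta)+\log(1/\pi_k))}$, whereas the target is $\sqrt{a(\KL(Q\|P)+\log(4/\delta))}$. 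So a grid point can ever help only if $\pi_k\ge 1/2$, and at most two points qualify. Each of them covers only a bounded range of KL values (a single value when $\pi_k=1/2$), because for fixed $\lambda_k$ the grid-point bound is linear in $\KL(Q\|P)$ while the target grows like its square root. Since $\KL(Q\|P)$ is unbounded in general (e.g.\ for the Gaussian prior of Corollary~\ref{cor:main:ou_sgd}), no choice of grid and weights gives the lemma for all $Q$. Stratification necessarily adds a term that grows with the KL level, plus a multiplicative mismatch factor.

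\textbf{Your fallback and what is provable.} The fallback proves a different statement: Maurer's bound plus Pinsker gives $c\sqrt{(\KL(Q\|P)+\log(2\sqrt n/\delta))/(2n)}$, which is strictly weaker than the claim for $n>4$. What your route legitimately establishes is one of the following:
\begin{itemize}
\item the fixed-$\lambda$ form $\abs{\E_Q[\Delta]}\le(\KL(Q\|P)+\log(2/\delta))/\lambda+\lambda c^2/(8n)$;
\item the stated square-root form with $\KL(Q\|P)$ replaced by a level $\bar K$ fixed in advance (for instance $\bar K=\log M$ in Corollary~\ref{cor:main:finite_class});
\item a square-root bound with an extra $\log n$ or $\log\log$ term.
\end{itemize}
Proving the lemma exactly as written would need a genuinely different idea.
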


\begin{proof}
Let $\lambda>0$ be arbitrary.
For a given parameter value $\theta\in\Theta$, consider a single prompt draw $X\sim\rho$.
As in the prompt-sampling argument in Lemma~\ref{lem:main:prompt}, the quantity $g_\theta^{r,\tau}(X)$ lies in the interval $[-\beta\tau,\,1+\beta\tau]$.
Consequently, the centered random variable $J^{r,\tau}(\theta)-g_\theta^{r,\tau}(X)$ is almost surely bounded in an interval of width $1+2\beta\tau$.
Applying Lemma~\ref{lem:app:hoeffding_lemma} yields
\[
\E_{X\sim\rho}
\exp\!\left(
\lambda\bigl(J^{r,\tau}(\theta)-g_\theta^{r,\tau}(X)\bigr)
\right)
\le
\exp\!\left(\frac{\lambda^2(1+2\beta\tau)^2}{8}\right).
\]

Now let $x_1,\dots,x_n$ be i.i.d.\ draws from $\rho$.
Using independence and the definition
\[
\widehat J_{n,\infty}^{r,\tau}(\theta)
=
\frac{1}{n}\sum_{i=1}^n g_\theta^{r,\tau}(x_i),
\]
it follows that
\[
\E
\exp\!\left(
\lambda\bigl(J^{r,\tau}(\theta)-\widehat J_{n,\infty}^{r,\tau}(\theta)\bigr)
\right)
\le
\exp\!\left(\frac{\lambda^2(1+2\beta\tau)^2}{8n}\right).
\]
Taking expectation with respect to $\theta\sim P$ and applying Markov's inequality yields that, with probability at least $1-\delta/2$ over $x_{1:n}$,
\[
\E_{\theta\sim P}
\exp\!\left(
\lambda\bigl(J^{r,\tau}(\theta)-\widehat J_{n,\infty}^{r,\tau}(\theta)\bigr)
\right)
\le
\frac{2}{\delta}
\exp\!\left(\frac{\lambda^2(1+2\beta\tau)^2}{8n}\right).
\]

On this event, Lemma~\ref{lem:app:com} can be applied with
\[
F(\theta)
=
\lambda\bigl(J^{r,\tau}(\theta)-\widehat J_{n,\infty}^{r,\tau}(\theta)\bigr).
\]
For every posterior $Q$ on $\Theta$, this gives
\begin{align*}
\lambda\bigl(J^{r,\tau}(Q)-\widehat J_{n,\infty}^{r,\tau}(Q)\bigr)
\le{}&
\KL(Q\|P)
+
\log\frac{2}{\delta}
+
\frac{\lambda^2(1+2\beta\tau)^2}{8n}.
\end{align*}
Optimizing over $\lambda>0$ yields the one-sided bound
\[
J^{r,\tau}(Q)-\widehat J_{n,\infty}^{r,\tau}(Q)
\le
(1+2\beta\tau)\sqrt{\frac{\KL(Q\|P)+\log(2/\delta)}{2n}}.
\]
Applying the same argument to the opposite deviation $\widehat J_{n,\infty}^{r,\tau}(Q)-J^{r,\tau}(Q)$ and taking a union bound yields the stated two-sided inequality with $\log(4/\delta)$.
\end{proof}

\begin{lemma}[PAC-Bayes bound for rollout sampling {\citep{catoni2007pac}}]
\label{lem:app:pac_rollout}
Let $P$ be a prior distribution on $\Theta$, let $\tau>0$ and $\delta\in(0,1)$ be given, and let $r:\cX\times\cY\to[0,1]$ be a given reward function.
With probability at least $1-\delta$ over the rollouts conditional on $x_{1:n}$, the following inequality holds simultaneously for all posteriors $Q$ on $\Theta$:
\[
\abs{\widehat J_{n,\infty}^{r,\tau}(Q)-\widehat J_{n,K}^{r,\tau}(Q)}
\le
(1+2\beta\tau)\sqrt{\frac{\KL(Q\|P)+\log(4/\delta)}{2nK}}.
\]
\end{lemma}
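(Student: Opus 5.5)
I would follow the proof of Lemma~\ref{lem:app:pac_prompt} almost verbatim, working conditionally on the realised prompts $x_{1:n}$ so that the only randomness is in the rollouts. The one structural point to respect is that the rollouts $y_{i,j}\sim\pi_\theta(\cdot\mid x_i)$ depend on $\theta$. Hence the posterior-averaged quantity $\widehat J_{n,K}^{r,\tau}(Q)$ has to be read with a reparametrisation: each rollout is written as $y_{i,j}=G_\theta(x_i,u_{i,j})$, where the $u_{i,j}$ are i.i.d.\ uniform seeds independent of $\theta$. This is the standard sequential-sampling representation of an autoregressive policy. The seeds are then the random sample, and for every fixed $\theta$ the per-rollout contributions $Z_{i,j}(\theta)=r(x_i,G_\theta(x_i,u_{i,j}))-\beta\,\ell_\theta^\tau(x_i,G_\theta(x_i,u_{i,j}))$ are independent across $(i,j)$, conditional on $x_{1:n}$. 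Each lies in $[-\beta\tau,1+\beta\tau]$, exactly as in the proof of Lemma~\ref{lem:main:rollout}, and has conditional mean $\E[Z_{i,1}\mid x_i]$, whose average is $\widehat J_{n,\infty}^{r,\tau}(\theta)$.

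First, fix $\theta$ and $\lambda>0$. Lemma~\ref{lem:app:hoeffding_lemma} applied to each centred $Z_{i,j}(\theta)$, with range width $1+2\beta\tau$, together with conditional independence over the $nK$ seeds, gives
\[
\E\Bigl[\exp\bigl(\lambda(\widehat J_{n,\infty}^{r,\tau}(\theta)-\widehat J_{n,K}^{r,\tau}(\theta))\bigr)\,\Big|\,x_{1:n}\Bigr]\le\exp\!\Bigl(\tfrac{\lambda^2(1+2\beta\tau)^2}{8nK}\Bigr).
\]
Second, integrate this over $\theta\sim P$ and exchange the two expectations by Tonelli. This is legitimate because $P$ does not depend on the seeds. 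Markov's inequality then shows that, with conditional probability at least $1-\delta/2$,
\[
\E_{\theta\sim P}\exp(F(\theta))\le\frac{2}{\delta}\exp\!\Bigl(\tfrac{\lambda^2(1+2\beta\tau)^2}{8nK}\Bigr),
\qquad F(\theta)=\lambda\bigl(\widehat J_{n,\infty}^{r,\tau}(\theta)-\widehat J_{n,K}^{r,\tau}(\theta)\bigr).
\]
Third, on this event apply Lemma~\ref{lem:app:com} to every $Q$ simultaneously and divide by $\lambda$:
\[
\widehat J_{n,\infty}^{r,\tau}(Q)-\widehat J_{n,K}^{r,\tau}(Q)\le\frac{\KL(Q\|P)+\log(2/\delta)}{\lambda}+\frac{\lambda(1+2\beta\tau)^2}{8nK}.
\]

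The main obstacle is optimising over $\lambda$. The optimal value $\lambda=\sqrt{8nK(\KL(Q\|P)+\log(2/\delta))}/(1+2\beta\tau)$ depends on $Q$, whereas the Markov event was constructed for a single fixed $\lambda$. The proof of Lemma~\ref{lem:app:pac_prompt} passes over this point. My first attempt would be to match its level of rigour and optimise $\lambda$ formally, which yields $(1+2\beta\tau)\sqrt{(\KL(Q\|P)+\log(2/\delta))/(2nK)}$. To make the step fully rigorous, I would place the Markov step on a geometric grid of values of $\lambda$ and take a union bound over it, or else stratify over the ranges of $\KL(Q\|P)$. That costs only a lower-order logarithmic term, which can be absorbed if one is willing to weaken the constant inside the logarithm slightly.

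Finally, running the same argument with $-F$ handles the opposite deviation. A union bound over the two one-sided events, each at level $\delta/2$, gives the two-sided statement with $\log(4/\delta)$, conditional on $x_{1:n}$, as claimed.
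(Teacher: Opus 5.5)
Your argument follows the paper's proof step for step: Hoeffding's lemma on the $nK$ conditionally independent terms, then integration over $\theta\sim P$ with Markov at level $\delta/2$, then the change of measure of Lemma~\ref{lem:app:com} applied to every $Q$ at once, then optimisation in $\lambda$, and finally a union bound over the two signs. Your seed representation $y_{i,j}=G_\theta(x_i,u_{i,j})$ improves on it. The paper averages $\exp(\lambda\Delta(\theta))$ over $\theta\sim P$ and applies Markov ``over the rollouts'' without saying how rollouts drawn from different $\pi_\theta$ live on one probability space. A common-random-numbers coupling like yours is what makes $\widehat J_{n,K}^{r,\tau}(Q)$ and the exchange of expectations well defined.

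Your objection to the $\lambda$-step is also correct, and it applies verbatim to the paper's proof; your ``first attempt'' reproduces the paper's derivation exactly, and the same issue affects Lemma~\ref{lem:app:pac_prompt}. The Markov event is built for one fixed $\lambda$, whereas the minimiser $\lambda^\star(Q)=\sqrt{8nK(\KL(Q\|P)+\log(2/\delta))}/(1+2\beta\tau)$ depends on $Q$. So neither argument rigorously yields the inequality with exactly $\log(4/\delta)$.

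Be precise about what your repair delivers. The left-hand side never exceeds $1+2\beta\tau$, so only values of $\lambda^\star(Q)$ between $\sqrt{8nK\log(2/\delta)}/(1+2\beta\tau)$ and $4nK/(1+2\beta\tau)$ matter. A geometric grid of ratio $1+\eta$ covering that range needs $O(\eta^{-1}\log(nK))$ points. The union bound then adds a term of order $\log\log(nK)+\log(1/\eta)$ to the numerator, and rounding $\lambda$ to the grid costs a factor $1+O(\eta^2)$. That is a slightly weaker statement than the one claimed, not merely a different absolute constant.

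The slack in the paper's constant does not rescue the exact form either. Each one-sided step at level $\delta/2$ already gives $\log(2/\delta)$, so the spare factor $2$ inside $\log(4/\delta)$ pays for only two values of $\lambda$ per side. Two fixed values cannot track $\lambda^\star(Q)$ across all values of $\KL(Q\|P)$.
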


\begin{proof}
Condition on the realized prompts $x_{1:n}$, and let $\lambda>0$ be arbitrary.
For each index pair $(i,j)$, define
\[
Z_{i,j}(\theta)
:=
r(x_i,y_{i,j})
-
\beta\,\ell_\theta^\tau(x_i,y_{i,j}).
\]
For every $\theta\in\Theta$, the bounds $r\in[0,1]$ and $\ell_\theta^\tau\in[-\tau,\tau]$ imply
\[
-\beta\tau
\le
Z_{i,j}(\theta)
\le
1+\beta\tau.
\]
Conditional on $(x_{1:n},\theta)$, the rollouts are independent across all pairs $(i,j)$.

Define the deviation
\[
\Delta(\theta)
:=
\widehat J_{n,\infty}^{r,\tau}(\theta)-\widehat J_{n,K}^{r,\tau}(\theta).
\]
By construction, $\widehat J_{n,K}^{r,\tau}(\theta)$ is the average of the $nK$ random variables $Z_{i,j}(\theta)$, and $\widehat J_{n,\infty}^{r,\tau}(\theta)$ is their conditional expectation given $x_{1:n}$.
Applying Lemma~\ref{lem:app:hoeffding_lemma} to the average of bounded independent terms yields
\[
\E\!\left[\exp\bigl(\lambda\Delta(\theta)\bigr)\mid x_{1:n},\theta\right]
\le
\exp\!\left(\frac{\lambda^2(1+2\beta\tau)^2}{8nK}\right).
\]

Taking expectation over $\theta\sim P$ and applying Markov's inequality implies that, with probability at least $1-\delta/2$ over rollouts conditional on $x_{1:n}$,
\[
\E_{\theta\sim P}
\left[
\exp\bigl(\lambda\Delta(\theta)\bigr)
\mid x_{1:n}
\right]
\le
\frac{2}{\delta}
\exp\!\left(\frac{\lambda^2(1+2\beta\tau)^2}{8nK}\right).
\]
On this event, Lemma~\ref{lem:app:com} applied with $F(\theta)=\lambda\Delta(\theta)$ yields that, for every posterior $Q$,
\[
\lambda\bigl(\widehat J_{n,\infty}^{r,\tau}(Q)-\widehat J_{n,K}^{r,\tau}(Q)\bigr)
\le
\KL(Q\|P)
+
\log\frac{2}{\delta}
+
\frac{\lambda^2(1+2\beta\tau)^2}{8nK}.
\]
Optimizing over $\lambda>0$ gives
\[
\widehat J_{n,\infty}^{r,\tau}(Q)-\widehat J_{n,K}^{r,\tau}(Q)
\le
(1+2\beta\tau)\sqrt{\frac{\KL(Q\|P)+\log(2/\delta)}{2nK}}.
\]
Applying the same argument to the deviation $-\Delta(\theta)$ and taking a union bound yields the stated two-sided inequality with $\log(4/\delta)$.
\end{proof}

\subsection{PAC-Bayes main bound}
\label{app:proofs:pac_main}

\begin{proof}[Proof of Theorem~\ref{thm:main:pac}]
Let $\phi\in\Phi$ be arbitrary, and let $\tau>0$ and $\delta\in(0,1)$ be given.
Let $P$ denote the prior that appears in Theorem~\ref{thm:main:pac}.
The proof proceeds by combining two PAC-Bayes concentration inequalities with the deterministic reward-shift and clipping-bias bounds, and then substituting these ingredients into the same three-term decomposition used in the fixed-policy case.

Apply Lemma~\ref{lem:app:pac_rollout} with reward $r=\hat r_\phi$ and confidence level $\delta/2$.
Apply Lemma~\ref{lem:app:pac_prompt} with reward $r=\hat r_\phi$ and confidence level $\delta/2$.
By a union bound, with probability at least $1-\delta$ over prompts and rollouts, both inequalities hold simultaneously for all posteriors $Q$ on $\Theta$.

On this event, for every posterior $Q$,
\begin{align*}
\abs{\widehat J_{n,K}^{\phi,\tau}(Q)-\widehat J_{n,\infty}^{\phi,\tau}(Q)}
\le{}&
(1+2\beta\tau)\sqrt{\frac{\KL(Q\|P)+\log(8/\delta)}{2nK}},\\
\abs{\widehat J_{n,\infty}^{\phi,\tau}(Q)-J^{\phi,\tau}(Q)}
\le{}&
(1+2\beta\tau)\sqrt{\frac{\KL(Q\|P)+\log(8/\delta)}{2n}}.
\end{align*}
Combining these two bounds via the triangle inequality yields
\begin{align*}
\abs{\widehat J_{n,K}^{\phi,\tau}(Q)-J^{\phi,\tau}(Q)}
\le{}&
(1+2\beta\tau)\left(
\sqrt{\frac{\KL(Q\|P)+\log(8/\delta)}{2n}}
+
\sqrt{\frac{\KL(Q\|P)+\log(8/\delta)}{2nK}}
\right).
\end{align*}

The remaining two contributions follow by averaging pointwise bounds over $\theta\sim Q$.
Taking expectation in Lemma~\ref{lem:main:bias} yields
\[
\abs{J^{\phi,\tau}(Q)-J^\phi(Q)}
\le
\beta\,\E_{\theta\sim Q}\!\left[
\E_{(X,Y)\sim D_\theta}
\!\left[
\abs{\ell_\theta(X,Y)-\ell_\theta^\tau(X,Y)}
\right]
\right].
\]
Taking expectation in Lemma~\ref{lem:main:shift} yields
\[
\abs{J^\phi(Q)-J^\star(Q)}
\le
\E_{\theta\sim Q}[\Ccov(\theta)]\,\sqrt{L_{\mathrm{train}}^{(2)}(\phi)}.
\]

Finally, apply the same add-and-subtract decomposition used in Lemma~\ref{prop:main:decomp} directly to $\widehat J_{n,K}^{\phi,\tau}(Q)-J^\star(Q)$, and then substitute the three bounds established above to obtain the stated inequality.
On the same event of probability at least $1-\delta$, this gives the inequality stated in Theorem~\ref{thm:main:pac}, and the statement holds simultaneously for all posteriors $Q$ because the concentration step was uniform over $Q$.
\end{proof}

\subsection{Proofs for PAC-Bayes special cases}
\label{app:proofs:pac_special}

\subsubsection{Finite candidate class and checkpoint selection}
\label{app:proofs:finite_class}

\begin{proof}[Proof of Corollary~\ref{cor:main:finite_class}]
Let $M\ge 2$ be an integer, and let $\Theta_M=\{\theta^{(1)},\dots,\theta^{(M)}\}$ be the finite set of candidate parameters described in the statement of the corollary.
Let $P$ denote the uniform distribution on $\Theta_M$, so that $P(\theta^{(m)})=1/M$ holds for every $m\in\{1,\dots,M\}$.
Let $Q$ be an arbitrary distribution supported on the same finite set $\Theta_M$.

For each $m\in\{1,\dots,M\}$, define
\[
p_m := P(\theta^{(m)})=\frac{1}{M},
\qquad
q_m := Q(\theta^{(m)}),
\]
so that $q_m\ge 0$ holds for every $m$ and $\sum_{m=1}^M q_m=1$ holds by the definition of a probability mass function.
By the definition of the Kullback--Leibler divergence on a finite set, one has
\[
\KL(Q\|P)
=
\sum_{m=1}^M q_m \log\frac{q_m}{p_m}.
\]
Substituting the identity $p_m=1/M$ into the preceding display yields
\[
\KL(Q\|P)
=
\sum_{m=1}^M q_m \log(q_m M)
=
\log M + \sum_{m=1}^M q_m \log q_m,
\]
where the final equality follows because $\sum_{m=1}^M q_m=1$ allows the factor $\log M$ to be separated from the summation.

It therefore remains to control the quantity $\sum_{m=1}^M q_m \log q_m$.
For every index $m\in\{1,\dots,M\}$, the probability value $q_m$ lies in the interval $[0,1]$, and therefore one has $\log q_m\le 0$ whenever $q_m>0$, which implies that $q_m\log q_m\le 0$ whenever $q_m>0$.
When $q_m=0$, the contribution $q_m\log q_m$ is interpreted as $0$, which is consistent with the limiting identity $\lim_{t\downarrow 0} t\log t=0$.
Consequently, every term in the sum $\sum_{m=1}^M q_m \log q_m$ is less than or equal to $0$, and hence
\[
\sum_{m=1}^M q_m \log q_m \le 0.
\]
Substituting this inequality into the identity above gives
\[
\KL(Q\|P)
=
\log M + \sum_{m=1}^M q_m \log q_m
\le
\log M.
\]

Finally, consider the special case in which $Q$ is the Dirac distribution concentrated on a single element $\theta^{(\widehat m)}\in\Theta_M$.
In that case one has $q_{\widehat m}=1$ and $q_m=0$ for all $m\neq \widehat m$.
Substituting these values into the definition
\(
\KL(Q\|P)=\sum_{m=1}^M q_m \log\frac{q_m}{p_m}
\)
shows that the only nonzero contribution is the term indexed by $\widehat m$, and therefore
\[
\KL(Q\|P)
=
1\cdot \log\frac{1}{1/M}
=
\log M.
\]
This proves the final statement of the corollary.
\end{proof}

\subsubsection{OU--SGD special case for the PAC-Bayes complexity term}
\label{app:proofs:ou_sgd}

\begin{lemma}[Bounds for the stationary covariance in the OU approximation]
\label{lem:app:ou_cov_bounds}
Let $H\in\mathbb{R}^{d\times d}$ be symmetric and positive definite, let $\Sigma_g\in\mathbb{R}^{d\times d}$ be symmetric and positive definite, and let $\varepsilon>0$.
Assume that $\Sigma\in\mathbb{R}^{d\times d}$ is symmetric and satisfies the matrix equation
\[
H\Sigma+\Sigma H=\varepsilon\,\Sigma_g.
\]
Assume also that $H$ and $\Sigma_g$ commute, meaning that $H\Sigma_g=\Sigma_g H$ holds.
Assume finally that there exist constants $0<m\le M<\infty$ such that $mI\preceq H\preceq MI$.
Then, $\Sigma$ satisfies the two-sided bound
\begin{equation}
\label{eq:app:ou_cov_bounds}
\begin{aligned}
\frac{\varepsilon}{2M}\,\Sigma_g
\preceq
\Sigma
\preceq
\frac{\varepsilon}{2m}\,\Sigma_g.
\end{aligned}
\end{equation}
\end{lemma}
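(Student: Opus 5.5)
The plan is to exploit commutativity so that the three matrices $H$, $\Sigma_g$ and $\Sigma$ can be diagonalised simultaneously. The Lyapunov equation then reduces to scalar equations, and the two-sided bound becomes an eigenvalue comparison.

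First, I will show that the Lyapunov equation has a unique solution. Since $H\succ 0$, the operator $\mathcal{L}(X)=HX+XH$ has eigenvalues $h_i+h_j\ge 2m>0$, where $h_1,\dots,h_d$ are the eigenvalues of $H$. Hence $\mathcal{L}$ is invertible, and the symmetric $\Sigma$ in the statement is the unique solution.

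Next, I will exhibit that solution explicitly. Because $H$ and $\Sigma_g$ are symmetric and commute, there is an orthogonal $U$ with
\[
H=U\operatorname{diag}(h_1,\dots,h_d)U^\top,\qquad \Sigma_g=U\operatorname{diag}(g_1,\dots,g_d)U^\top,
\]
with $h_i\in[m,M]$ and $g_i>0$. The candidate is
\[
\widetilde\Sigma=U\operatorname{diag}\!\Big(\frac{\varepsilon g_i}{2h_i}\Big)U^\top=\frac{\varepsilon}{2}H^{-1}\Sigma_g .
\]
It is symmetric because $H^{-1}$ and $\Sigma_g$ commute. It satisfies the equation because $H\widetilde\Sigma+\widetilde\Sigma H=\tfrac{\varepsilon}{2}(\Sigma_g+H^{-1}\Sigma_g H)=\varepsilon\Sigma_g$, using commutativity once more. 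By uniqueness, $\Sigma=\tfrac{\varepsilon}{2}H^{-1}\Sigma_g$.

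Finally, I will compare eigenvalues in the common basis $U$. The matrices $\Sigma-\tfrac{\varepsilon}{2M}\Sigma_g$ and $\tfrac{\varepsilon}{2m}\Sigma_g-\Sigma$ are diagonal there, with entries $\tfrac{\varepsilon g_i}{2}\big(\tfrac1{h_i}-\tfrac1M\big)$ and $\tfrac{\varepsilon g_i}{2}\big(\tfrac1m-\tfrac1{h_i}\big)$. Both are nonnegative since $m\le h_i\le M$ and $g_i>0$, so both matrices are positive semidefinite. This gives \eqref{eq:app:ou_cov_bounds}.

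The only step needing care is uniqueness. Without it, one cannot identify the given $\Sigma$ with the explicit formula. Uniqueness follows from the spectrum of $\mathcal{L}$: in the basis $U$, $\mathcal{L}$ acts entrywise as multiplication by $h_i+h_j$, which is nonzero, so $\mathcal{L}$ is injective. Everything else is routine.
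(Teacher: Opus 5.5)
Your proof is correct, but it takes a different route from the paper. The paper never solves the Lyapunov equation in closed form. It differentiates $F(t)=e^{-tH}\Sigma e^{-tH}$, integrates, and lets $T\to\infty$ to obtain $\Sigma=\varepsilon\int_0^\infty e^{-tH}\Sigma_g e^{-tH}\,dt$. This representation also does the work of your uniqueness step, since any solution must equal this integral. Commutativity is used only inside the integrand: a congruence with $\Sigma_g^{1/2}$ gives $e^{-2tM}\Sigma_g\preceq e^{-tH}\Sigma_g e^{-tH}\preceq e^{-2tm}\Sigma_g$, and integrating the scalar exponentials produces the constants $\varepsilon/(2M)$ and $\varepsilon/(2m)$.

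Your spectral argument is shorter and more informative. You use simultaneous orthogonal diagonalisation, injectivity of $X\mapsto HX+XH$ from $h_i+h_j\ge 2m$, and the explicit solution $\Sigma=\tfrac{\varepsilon}{2}H^{-1}\Sigma_g$. Because this identifies $\Sigma$ exactly, the two-sided bound becomes an entrywise comparison. It would even let Corollary~\ref{cor:main:ou_sgd} use the exact quantities $\tr(\Lambda^{-1}\Sigma)=\tfrac{\varepsilon}{2}\tr(\Lambda^{-1}H^{-1}\Sigma_g)$ and $\logdet(\Sigma)=d\log(\varepsilon/2)+\logdet(\Sigma_g)-\logdet(H)$ in place of the $m$, $M$ relaxations.

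The paper's integral route, in turn, matches the OU interpretation of the stationary covariance as accumulated, exponentially damped noise. Its representation step does not use commutativity at all. It therefore extends to non-commuting $H$ and $\Sigma_g$ if one accepts weaker bounds such as $\tfrac{\varepsilon\lambda_{\min}(\Sigma_g)}{2M}I\preceq\Sigma\preceq\tfrac{\varepsilon\lambda_{\max}(\Sigma_g)}{2m}I$. Your closed form $\tfrac{\varepsilon}{2}H^{-1}\Sigma_g$ holds only in the commuting case.
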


\begin{proof}
Throughout the proof, for symmetric matrices $A$ and $B$, the notation $A\preceq B$ means that $v^\top A v\le v^\top B v$ holds for every vector $v\in\mathbb{R}^d$.
This definition is convenient because it reduces the verification of a matrix inequality to the verification of an ordinary inequality that holds uniformly over all vectors.

Define the matrix-valued function
\[
F(t):=e^{-tH}\,\Sigma\,e^{-tH}
\qquad\text{for }t\ge 0.
\]
Since $H$ is symmetric, the matrix exponential $e^{-tH}$ is well-defined for every $t\ge 0$, and the map $t\mapsto F(t)$ is differentiable.
Differentiating and using the product rule yields
\[
\frac{d}{dt}F(t)
=
(-He^{-tH})\Sigma e^{-tH}
+
e^{-tH}\Sigma(-He^{-tH})
=
-e^{-tH}(H\Sigma+\Sigma H)e^{-tH}.
\]
Substituting the identity $H\Sigma+\Sigma H=\varepsilon\,\Sigma_g$ gives
\[
\frac{d}{dt}F(t)
=
-\varepsilon\,e^{-tH}\Sigma_g e^{-tH}.
\]
Integrating the preceding identity from $0$ to $T$ gives
\[
F(T)-F(0)
=
-\varepsilon\int_0^T e^{-tH}\Sigma_g e^{-tH}\,dt.
\]
Since $F(0)=\Sigma$, rearranging yields
\[
\Sigma
=
F(T)+\varepsilon\int_0^T e^{-tH}\Sigma_g e^{-tH}\,dt.
\]
Because $H$ is positive definite, there exists a constant $m_0>0$ such that $H\succeq m_0 I$, and therefore the operator norm satisfies $\|e^{-tH}\|_2\le e^{-tm_0}$ for every $t\ge 0$.
This inequality implies $\|F(T)\|_2=\|e^{-TH}\Sigma e^{-TH}\|_2\le \|e^{-TH}\|_2^2\|\Sigma\|_2\le e^{-2Tm_0}\|\Sigma\|_2$, and hence $F(T)$ converges to the zero matrix as $T\to\infty$.
Taking the limit $T\to\infty$ yields the integral identity
\[
\Sigma
=
\varepsilon\int_0^\infty e^{-tH}\Sigma_g e^{-tH}\,dt.
\]

It remains to compare $e^{-tH}\Sigma_g e^{-tH}$ to scalar multiples of $\Sigma_g$ in the Loewner order.
The assumption $mI\preceq H\preceq MI$ means that every eigenvalue of $H$ lies in the interval $[m,M]$.
Consequently, every eigenvalue of $e^{-2tH}$ lies in the interval $[e^{-2tM},\,e^{-2tm}]$, and this implies the inequalities
\[
e^{-2tM}I \preceq e^{-2tH} \preceq e^{-2tm}I
\qquad\text{for every }t\ge 0.
\]
The commutativity condition $H\Sigma_g=\Sigma_g H$ implies that $\Sigma_g$ commutes with the matrix exponential $e^{-tH}$ for every $t\ge 0$.
Therefore one has
\[
e^{-tH}\Sigma_g e^{-tH}
=
\Sigma_g e^{-tH}e^{-tH}
=
\Sigma_g e^{-2tH}.
\]
Since $\Sigma_g\succ 0$, the matrix square root $\Sigma_g^{1/2}$ exists and is symmetric and positive definite.
Applying the congruence transformation with $\Sigma_g^{1/2}$ to the Loewner inequalities above yields
\[
\Sigma_g^{1/2}\bigl(e^{-2tM}I\bigr)\Sigma_g^{1/2}
\preceq
\Sigma_g^{1/2}e^{-2tH}\Sigma_g^{1/2}
\preceq
\Sigma_g^{1/2}\bigl(e^{-2tm}I\bigr)\Sigma_g^{1/2}
\qquad\text{for every }t\ge 0.
\]
Using $\Sigma_g^{1/2}I\Sigma_g^{1/2}=\Sigma_g$ and the scalar factors in the two outer terms gives
\[
e^{-2tM}\Sigma_g
\preceq
\Sigma_g^{1/2}e^{-2tH}\Sigma_g^{1/2}
\preceq
e^{-2tm}\Sigma_g
\qquad\text{for every }t\ge 0.
\]
The commutativity condition implies that $\Sigma_g^{1/2}$ commutes with $e^{-tH}$ and therefore also commutes with $e^{-2tH}$, which yields
\[
\Sigma_g^{1/2}e^{-2tH}\Sigma_g^{1/2}
=
e^{-2tH}\Sigma_g
=
e^{-tH}\Sigma_g e^{-tH}.
\]
Substituting this identity into the preceding display yields
\[
e^{-2tM}\Sigma_g
\preceq
e^{-tH}\Sigma_g e^{-tH}
\preceq
e^{-2tm}\Sigma_g
\qquad\text{for every }t\ge 0.
\]

Substituting these two bounds into the integral representation of $\Sigma$ yields
\[
\varepsilon\int_0^\infty e^{-2tM}\Sigma_g\,dt
\preceq
\Sigma
\preceq
\varepsilon\int_0^\infty e^{-2tm}\Sigma_g\,dt.
\]
Evaluating the scalar integrals gives
\[
\varepsilon\int_0^\infty e^{-2tM}\,dt=\frac{\varepsilon}{2M},
\qquad
\varepsilon\int_0^\infty e^{-2tm}\,dt=\frac{\varepsilon}{2m},
\]
and substituting these values proves eq. \eqref{eq:app:ou_cov_bounds}.
\end{proof}

\begin{proof}[Proof of Corollary~\ref{cor:main:ou_sgd}]
Assume the parameter space is $\mathbb{R}^d$ and the prior is $P=\mathcal{N}(\theta_0,\Lambda)$ with $\Lambda\succ 0$.
Furthermore, assume the posterior induced by SGD with constant step size $\varepsilon>0$ is approximated by the stationary Ornstein-Uhlenbeck law $Q_{\mathrm{SGD}}=\mathcal{N}(\hat\theta,\Sigma)$. 

By the local quadratic approximation of the objective, the covariance $\Sigma$ satisfies the continuous Lyapunov equation $H\Sigma+\Sigma H=\varepsilon\,\Sigma_g$, where $\Sigma_g\succ 0$ is the gradient noise covariance and $H\succ 0$ is the objective Hessian at the optimum $\hat\theta$. We assume that $H$ and $\Sigma_g$ commute, and that the matrix $H$ is symmetric and satisfies $mI\preceq H\preceq MI$ for some constants $0<m\le M<\infty$.

Apply Lemma~\ref{lem:app:ou_gauss_kl} with $\mu_Q=\hat\theta$, $\Sigma_Q=\Sigma$, $\mu_P=\theta_0$, and $\Sigma_P=\Lambda$.
This yields
\begin{equation}
\label{eq:app:ou_kl_start}
\begin{aligned}
\KL(Q_{\mathrm{SGD}}\|P)
=
\frac12\Big(
\tr(\Lambda^{-1}\Sigma)
+
(\hat\theta-\theta_0)^\top\Lambda^{-1}(\hat\theta-\theta_0)
-d
+
\log\frac{\det(\Lambda)}{\det(\Sigma)}
\Big).
\end{aligned}
\end{equation}
The remaining task is to upper bound the trace term and to upper bound the logarithmic determinant ratio in a way that makes the dependence on $\varepsilon$, $\Sigma_g$, and the constants $m$ and $M$ explicit.

First, apply Lemma~\ref{lem:app:ou_cov_bounds}, which gives $\Sigma\preceq \frac{\varepsilon}{2m}\Sigma_g$.
Since $\Lambda^{-1}\succ 0$, this inequality implies $\Lambda^{-1/2}\Sigma\Lambda^{-1/2}\preceq \frac{\varepsilon}{2m}\Lambda^{-1/2}\Sigma_g\Lambda^{-1/2}$, and taking traces yields
\[
\tr(\Lambda^{-1}\Sigma)
=
\tr(\Lambda^{-1/2}\Sigma\Lambda^{-1/2})
\le
\frac{\varepsilon}{2m}\tr(\Lambda^{-1/2}\Sigma_g\Lambda^{-1/2})
=
\frac{\varepsilon}{2m}\tr(\Lambda^{-1}\Sigma_g).
\]

Second, apply Lemma~\ref{lem:app:ou_cov_bounds} again, which also gives $\Sigma\succeq \frac{\varepsilon}{2M}\Sigma_g$.
This inequality implies that the eigenvalues of $\Sigma$ dominate the eigenvalues of $\frac{\varepsilon}{2M}\Sigma_g$ when both collections are arranged in nondecreasing order, and therefore the product of the eigenvalues of $\Sigma$ is at least the product of the eigenvalues of $\frac{\varepsilon}{2M}\Sigma_g$.
Consequently,
\[
\det(\Sigma)
\ge
\det\!\left(\frac{\varepsilon}{2M}\Sigma_g\right)
=
\left(\frac{\varepsilon}{2M}\right)^d \det(\Sigma_g),
\]
where the last equality uses the basic scaling rule for determinants.
Taking logarithms and rearranging yields
\[
\log\frac{\det(\Lambda)}{\det(\Sigma)}
\le
\log\det(\Lambda)-\log\det(\Sigma_g)-d\log\!\left(\frac{\varepsilon}{2M}\right).
\]

Substituting the preceding two bounds into eq. \eqref{eq:app:ou_kl_start} yields the claimed inequality eq. \eqref{eq:main:ou_sgd_kl}, and this completes the proof.
\end{proof}

\subsection{Budget allocation}
\label{app:proofs:budget}

\begin{proof}[Derivation of the uniform-cost baseline $K^\star=1$]
Assume that the sampling budget satisfies $nK\le B$ for some $B>0$, and assume that each rollout has the same cost so that the constraint depends only on the product $nK$.
Consider the leading-order sampling structure in Lemma~\ref{lem:main:stat} and ignore multiplicative constants that do not depend on $K$.
The resulting proxy has the form
\[
\frac{1}{\sqrt{n}}+\frac{1}{\sqrt{nK}}.
\]
Under the constraint $nK\le B$, one may take $n=B/K$ without loss of generality for minimizing the proxy over $K\ge 1$.
Substituting $n=B/K$ yields
\[
\frac{1}{\sqrt{n}}+\frac{1}{\sqrt{nK}}
=
\sqrt{\frac{K}{B}}+\frac{1}{\sqrt{B}}.
\]
The second term does not depend on $K$, and the first term is strictly increasing in $K$ for $K\ge 1$.
Therefore the proxy is minimized by the smallest admissible value of $K$, which is $K^\star=1$.
\end{proof}

\begin{proof}[Proof of Corollary~\ref{cor:opt_K_hoeffding}]
Let $B>0$, $c_{\mathrm{prefill}}>0$, and $c_{\mathrm{decode}}>0$ be given.
Assume the budget constraint
\[
B \ge n\,c_{\mathrm{prefill}} + nK\,c_{\mathrm{decode}},
\]
and consider the leading-order sampling structure induced by Lemma~\ref{lem:main:stat}.
As in the statement, treat $K$ as a continuous variable with $K\ge 1$ and ignore multiplicative constants and logarithmic factors that do not depend on $K$.
The sampling proxy can be written in the form
\[
E(n,K)
=
\frac{1}{\sqrt{n}}+\frac{1}{\sqrt{nK}}.
\]
Under the constraint, the choice
\[
n=\frac{B}{c_{\mathrm{prefill}}+Kc_{\mathrm{decode}}}
\]
saturates the budget and maximizes $n$ for a given $K$, hence it minimizes $E(n,K)$ for that $K$.
Substituting this expression for $n$ gives an objective that depends only on $K$,
\[
E(K)
=
\sqrt{\frac{c_{\mathrm{prefill}}+Kc_{\mathrm{decode}}}{B}}
\left(1+\frac{1}{\sqrt{K}}\right).
\]
Since $B$ is constant, minimizing $E(K)$ over $K\ge 1$ is equivalent to minimizing the squared objective
\[
F(K)
:=
\bigl(c_{\mathrm{prefill}}+Kc_{\mathrm{decode}}\bigr)
\left(1+\frac{1}{\sqrt{K}}\right)^2.
\]
Expanding the square gives
\[
F(K)
=
\bigl(c_{\mathrm{prefill}}+Kc_{\mathrm{decode}}\bigr)
\left(1+\frac{2}{\sqrt{K}}+\frac{1}{K}\right)
=
\bigl(c_{\mathrm{prefill}}+Kc_{\mathrm{decode}}\bigr)
+
2\bigl(c_{\mathrm{prefill}}+Kc_{\mathrm{decode}}\bigr)K^{-1/2}
+
\bigl(c_{\mathrm{prefill}}+Kc_{\mathrm{decode}}\bigr)K^{-1}.
\]
Differentiating term by term yields
\[
F'(K)
=
c_{\mathrm{decode}}
+
2\left(
c_{\mathrm{decode}}K^{-1/2}
-\frac{1}{2}\bigl(c_{\mathrm{prefill}}+Kc_{\mathrm{decode}}\bigr)K^{-3/2}
\right)
+
\left(
c_{\mathrm{decode}}K^{-1}
-\bigl(c_{\mathrm{prefill}}+Kc_{\mathrm{decode}}\bigr)K^{-2}
\right).
\]
Simplifying this expression gives
\[
F'(K)
=
c_{\mathrm{decode}}
+
c_{\mathrm{decode}}K^{-1/2}
-
c_{\mathrm{prefill}}K^{-3/2}
-
c_{\mathrm{prefill}}K^{-2}.
\]
Multiplying by $K^2$ yields an equivalent first-order condition
\[
K^2 F'(K)
=
c_{\mathrm{decode}}K^2
+
c_{\mathrm{decode}}K^{3/2}
-
c_{\mathrm{prefill}}K^{1/2}
-
c_{\mathrm{prefill}}.
\]
Let $u=\sqrt{K}$, so that $K=u^2$ and $K^{3/2}=u^3$.
The condition $F'(K)=0$ is equivalent to
\[
c_{\mathrm{decode}}u^4 + c_{\mathrm{decode}}u^3 - c_{\mathrm{prefill}}u - c_{\mathrm{prefill}} = 0,
\]
and the polynomial factors as
\[
c_{\mathrm{decode}}u^3(u+1) - c_{\mathrm{prefill}}(u+1) = (u+1)\bigl(c_{\mathrm{decode}}u^3-c_{\mathrm{prefill}}\bigr).
\]
Since $u=\sqrt{K}\ge 1$, one has $u+1>0$, so any interior stationary point satisfies
$c_{\mathrm{decode}}u^3=c_{\mathrm{prefill}}$. Therefore
\[
u=\left(\frac{c_{\mathrm{prefill}}}{c_{\mathrm{decode}}}\right)^{1/3},
\qquad
K=u^2=\left(\frac{c_{\mathrm{prefill}}}{c_{\mathrm{decode}}}\right)^{2/3}.
\]
This is the interior stationary point. Because the optimisation domain is $K\ge 1$, the continuous proxy minimiser is
\[
K^\star=\max\!\left\{1,\left(\frac{c_{\mathrm{prefill}}}{c_{\mathrm{decode}}}\right)^{2/3}\right\}.
\]
\end{proof}

\begin{proof}[Proof of Corollary~\ref{cor:opt_K_variance}]
Let $Z$ denote the per-rollout contribution used in the empirical objective.
Assume that the estimator averages $Z$ over $n$ independent prompts and $K$ independent rollouts per prompt.
Define the two-stage variance quantities
\[
\sigma_{\mathrm{prompt}}^2:=\mathrm{Var}\!\left(\E[Z\mid X]\right),
\qquad
\sigma_{\mathrm{rollout}}^2:=\E\!\left[\mathrm{Var}(Z\mid X)\right].
\]
The standard variance decomposition for a two-stage average yields the proxy
\[
V(n,K)
=
\frac{\sigma_{\mathrm{prompt}}^2}{n}
+
\frac{\sigma_{\mathrm{rollout}}^2}{nK}.
\]
Assume the budget constraint
\[
B \ge n\,c_{\mathrm{prefill}} + nK\,c_{\mathrm{decode}},
\]
and substitute the saturated choice $n=B/(c_{\mathrm{prefill}}+Kc_{\mathrm{decode}})$.
This yields
\[
V(K)
=
\frac{c_{\mathrm{prefill}}+Kc_{\mathrm{decode}}}{B}
\left(
\sigma_{\mathrm{prompt}}^2+\frac{\sigma_{\mathrm{rollout}}^2}{K}
\right).
\]
Since $B$ is constant, minimizing $V(K)$ over $K\ge 1$ is equivalent to minimizing
\[
G(K)
:=
\bigl(c_{\mathrm{prefill}}+Kc_{\mathrm{decode}}\bigr)
\left(
\sigma_{\mathrm{prompt}}^2+\frac{\sigma_{\mathrm{rollout}}^2}{K}
\right).
\]
Expanding gives
\[
G(K)
=
c_{\mathrm{prefill}}\sigma_{\mathrm{prompt}}^2
+
c_{\mathrm{prefill}}\frac{\sigma_{\mathrm{rollout}}^2}{K}
+
c_{\mathrm{decode}}K\sigma_{\mathrm{prompt}}^2
+
c_{\mathrm{decode}}\sigma_{\mathrm{rollout}}^2.
\]
Differentiating yields
\[
G'(K)
=
-c_{\mathrm{prefill}}\frac{\sigma_{\mathrm{rollout}}^2}{K^2}
+
c_{\mathrm{decode}}\sigma_{\mathrm{prompt}}^2.
\]
Setting $G'(K)=0$ gives
\[
c_{\mathrm{decode}}\sigma_{\mathrm{prompt}}^2
=
c_{\mathrm{prefill}}\frac{\sigma_{\mathrm{rollout}}^2}{K^2},
\]
which implies
\[
K^2
=
\frac{c_{\mathrm{prefill}}}{c_{\mathrm{decode}}}
\cdot
\frac{\sigma_{\mathrm{rollout}}^2}{\sigma_{\mathrm{prompt}}^2}.
\]
Taking square roots yields the interior stationary point
\[
K=\sqrt{\frac{c_{\mathrm{prefill}}}{c_{\mathrm{decode}}}\cdot
\frac{\sigma_{\mathrm{rollout}}^2}{\sigma_{\mathrm{prompt}}^2}}.
\]
Because the optimisation domain is $K\ge 1$, the continuous proxy minimiser is
\[
K^\star=\max\!\left\{1,\sqrt{\frac{c_{\mathrm{prefill}}}{c_{\mathrm{decode}}}\cdot
\frac{\sigma_{\mathrm{rollout}}^2}{\sigma_{\mathrm{prompt}}^2}}\right\}.
\]
\end{proof}

\end{document}